\documentclass[letterpaper, 10 pt, conference]{ieeeconf}  % Comment this line out if you need a4paper

\IEEEoverridecommandlockouts                              % This command is only needed if 
\usepackage{graphics} % for pdf, bitmapped graphics files
\usepackage{epsfig} % for postscript graphics files
\usepackage{times} % assumes new font selection scheme installed
\usepackage{amsmath} % assumes amsmath package installed
\usepackage{amssymb}  % assumes amsmath package installed

\usepackage{bm}
\usepackage{graphicx}
\usepackage{color}
\usepackage{import}
\usepackage[font=footnotesize]{caption} %footnotesize=8pt under 10pt env
\usepackage{subcaption}
\usepackage{hyperref}
\usepackage{cite}
\usepackage{float}
\usepackage[percent]{overpic} % percentage to put text in figure
\usepackage{algorithm2e}
\usepackage{gensymb} % for the degree symbol
\usepackage[para,online,flushleft]{threeparttable} % for table footnotes
\usepackage{array, makecell}
\usepackage{tabularx} % to adjust table width
\usepackage{xcolor} % to change text color
\usepackage[percent]{overpic} % percentage to overlay text in a figure
\usepackage{moresize} % more font sizes
\usepackage{tikz} % tikz visuals

\newcommand{\probName}{DAF-SLAM}
\newcommand{\algName}{kSLAM}
\newcommand{\algNameS}{skSLAM}
\newcommand{\algNameI}{ikSLAM}
\newcommand{\algNameIS}{iskSLAM}
\newcommand{\algNameDNC}{skid-SLAM}
\newcommand{\algNameLL}{skill-SLAM}
\newcommand{\inlinetitle}{\bf}
\newcommand{\boldx}{\boldsymbol{x}} % shortcut 
\newcommand{\boldy}{\boldsymbol{y}} % shortcut
\newcommand{\boldzbar}{\boldsymbol{\bar{z}}}
\newcommand{\zbar}{\bar{z}}
\newcommand{\SE}{\mathrm{SE}}
\newcommand{\realspace}{\mathbb{R}}
\newcommand{\gauss}{\mathsf{N}}
\newcommand{\tran}{^{\mathsf{T}}}
\newcommand{\fslamstar}[1]{f_{\text{slam}}^\star(#1)}
\newcommand{\fslam}{f_{\text{slam}}}

\newcommand{\zsbar}[1]{\bar{z}_{s,#1}} % for easy notation switch
\newcommand{\ys}[1]{y_{s,#1}} % for easy notation switch
\newcommand{\epss}[1]{\epsilon_{s,#1}}
\newcommand{\boldys}{\boldsymbol{y_s}}

\newtheorem{lemma}{Lemma}

\newcommand{\R}{\mathbb{R}}
\newcommand{\E}{\mathbb{E}}
\newcommand{\Prob}{\mathbb{P}}
\newcommand{\dd}{\mathrm{d}}

\newif\ifarxiv
\arxivtrue      % switch to arxiv version
\newcommand{\alt}[2]{%
  \ifarxiv
    #1%
  \else
    #2%
  \fi
}
\newcommand{\supplementary}{the Appendix} % for arxiv submission

\title{\LARGE \bf
Semantic Semi-Incremental Data-Association-Free Object SLAM
}

\alt{
\author{Yihao Zhang$^{1*}$, Jungseok Hong$^{1*}$, John J. Leonard$^{1}$% <-this % stops a space
\thanks{$^{1}$Yihao Zhang, Jungseok Hong, and John Leonard are with the Computer Science and Artificial Intelligence Laboratory, Massachusetts Institute of Technology ({\tt\small \{yihaozh, jungseok, jleonard\}@mit.edu}). * indicates equal contribution.}%
}}
{
\author{Anonymous Authors
\thanks{*Code will be released upon acceptance to preserve anonymity.}% <-this % stops a space
}}

\begin{document}

\maketitle
\thispagestyle{empty}
\pagestyle{empty}

%%%%%%%%%%%%%%%%%%%%%%%%%%%%%%%%%%%%%%%%%%%%%%%%%%%%%%%%%%%%%%%%%%%%%%%%%%%%%%%%
\begin{abstract}
Data association between landmark measurements and landmark variables has long been a central challenge in SLAM, as estimation accuracy depends critically on associating measurements with the correct landmark variables. Recent advances in deep learning have created new opportunities for the problem; data association can now leverage not only positional measurements but also semantic information about object landmarks, such as class labels from neural object detectors and feature vectors from visual foundation models. In this paper, we present a generalized data-association-free SLAM framework that jointly estimates data associations, robot poses, landmark positions, and landmark semantics from odometry, and positional and semantic measurements of landmarks. The proposed framework (i) creates a synergy between data association and landmark semantics estimation; (ii) adopts a semi-incremental estimation scheme for improved accuracy and computational efficiency; and (iii) provides a principled justification, guidelines, and heuristics for landmark-number estimation, improving the interpretability and practical usability of the framework. The proposed framework and algorithms are evaluated on synthetic and real-world datasets with two types of semantic information, class labels and real-valued feature vectors, and demonstrate superior performance compared to strong baselines.
\end{abstract}

%%%%%%%%%%%%%%%%%%%%%%%%%%%%%%%%%%%%%%%%%%%%%%%%%%%%%%%%%%%%%%%%%%%%%%%%%%%%%%%%
\section{Introduction}
Object SLAM builds a map with objects as landmarks. It is a step beyond traditional geometry-only SLAM toward semantic scene understanding. Objects are typically detected by a neural module, which also provides semantic information about them. Recent work \cite{doherty2020probabilistic, singh2024opti, hong2025semantic, michael2022probabilistic} explores the use of such semantic information to facilitate data association of landmark measurements. Our work advances this line of research. In particular, building on the data-association-free landmark-based SLAM (DAF-SLAM) framework, we develop a semantic data-association-free object SLAM framework that jointly estimates robot poses, landmark positions, landmark semantics, data associations, and the number of landmarks given odometry measurements, landmark position measurements, and landmark semantic measurements.

DAF-SLAM \cite{zhang2023data} studies a minimal problem in which semantic information is not available; therefore, data association has to be inferred solely from odometry and landmark position measurements. The authors recognize that when data association is unknown, the number of landmarks in the environment is also unknown and may be as large as the number of landmark measurements received. They split the problem into an inner problem that estimates the data associations given the number of landmarks and an outer problem that estimates the number of landmarks, assuming the inner problem can be solved. Various algorithms have been developed to solve these sub-problems.

While DAF-SLAM establishes a promising foundation, extending it to an efficient and accurate object SLAM framework introduces several important challenges. First, the DAF-SLAM framework does not exploit semantic information that is often available from modern object detectors and visual foundation models. Second, its batch optimization strategy can suffer from accumulated odometry drift when pose estimates are initialized from odometry over long trajectories. Third, a deeper understanding of the landmark-number estimation mechanism is desirable for formulation interpretation and parameter selection.

In this paper, we extend the DAF-SLAM framework and address the aforementioned challenges through four main contributions. (i) We provide a principled analysis of the landmark-number estimation mechanism based on local optimality conditions. (ii) We develop a semi-incremental (or block-incremental) processing scheme inspired by the classical divide-and-conquer strategy to reduce drift and improve computational efficiency. (iii) We integrate semantic information into the estimation process, enabling joint inference of data associations and landmark semantics. The semantic measurements may take the form of either real-valued feature vectors or one-hot vectors, which are compatible with the outputs of visual foundation models and object detectors. (iv) We evaluate the proposed algorithms extensively on synthetic datasets, a real-world dataset using an object detector to extract semantic measurements, and a real-world underwater dataset using a visual foundation model to extract semantic measurements. The proposed algorithms outperform strong baselines across these diverse datasets. Together, these contributions advance the development of a more general and efficient data-association-free object SLAM framework.

\section{Related Work}
Research on data association in SLAM has a long history. Neira and Tard{\'o}s \cite{Neira2001JCBB} and Montemerlo et al. \cite{montemerlo2002fastslam} provide an early review of data association methods in SLAM, including maximum likelihood, joint compatibility branch and bound, combined constraint data association, iterative closest point, and multi-hypothesis tracking. Traces of these methods can still be seen in modern geometry-only and object SLAM methods. For example, object SLAM pipelines \cite{hong2025semantic, singh2024opti} belong to the maximum likelihood category. They compute the Mahalanobis distance, derived from maximum likelihood estimation, to handle the positional component, and the cosine similarity (or inner product) to handle the semantic feature-vector component of data association. Iterative-style methods such as Max-Mixture \cite{doherty2020probabilistic}, DC-SAM \cite{doherty2022discrete}, and DAF-SLAM \cite{zhang2023data} are, in principle, similar to iterative closest point or coordinate descent. However, how variables are split can affect the algorithm performance. Both Max-Mixture and DC-SAM split the variables into discrete (i.e., data associations) and continuous (i.e., landmark positions and robot poses) variables, and solve them separately in an alternating fashion. Whereas DAF-SLAM considers a smarter split, where one step solves data associations and landmark positions, and another step solves robot poses and again landmark positions. Mature algorithms exist to solve both sub-problems, yielding improved performance. A recent approach \cite{korotkine2025globally} solves the discrete-continuous problem with semi-definite relaxation but has limited scalability.
% Multi-hypothesis methods include \cite{kaltiokallio2023multihypotheses, bernreiter2019multiple}. % optional

Although still running maximum likelihood or maximum a posteriori estimation, a distinct line of object SLAM work \cite{mu2016slam, zhang2021bayesian, ran2021not} models data association as a Dirichlet process so that landmarks with more associated measurements are more likely to be associated again. Another line of object SLAM work \cite{bowman2017probabilistic, michael2022probabilistic} takes an expectation over data associations in an expectation-maximization style to eliminate the discrete data association variables in the inference. 

When building object SLAM systems, researchers often employ sensor-dependent front-end association methods. For example, SLAM++ \cite{salas2013slam++} uses image-based projective data association; room numbers on door signs are used for association in \cite{rogers2011simultaneous}; ODAM \cite{li2021odam} (mapping-only) makes associations with a graph neural network that takes in different image-based attributes and features. These front-end methods can be easily incorporated as a module in a SLAM system. However, mistakes made in these front-end associations are not recoverable in the back end. In addition, a redesign of the data association module is required for these sensor-dependent methods when the sensor is changed. In this paper, we focus on estimating associations in the sensor-agnostic back-end solver. In particular, we extend and enhance the DAF-SLAM framework \cite{zhang2023data} to develop a more general and efficient object SLAM framework.

\section{Background and \probName\ Recap}
The data-association-free SLAM (\probName) problem is named in \cite{zhang2023data} to indicate a setup where only odometry and landmark measurements are available to a SLAM solver, but not data associations of the landmark measurements. Mathematically, we denote the robot trajectory as $\boldx = \{x_i\}_{i=1}^N$, where $x_i \in \SE(d)$ is the unknown $i$-th pose of the robot in dimension $d=2$ or $3$, and $x_i = (R_i, t_i)$ are the rotation and translation components. The unknown landmark positions are denoted by $\boldy = \{y_j\}_{j=1}^K$ where $y_j \in \realspace^d$. Finally, the given landmark measurements are $\boldzbar = \{\zbar_k\}_{k=1}^M$. $\zbar_k \in \realspace^d$ is assumed to be generated from the standard model:
\begin{equation}
    \zbar_k = R_{i_k}\tran(y_{j_k} - t_{i_k}) + \epsilon_k
\label{eq: posgen}
\end{equation}
where a measurement $\zbar_k$ subject to zero-mean Gaussian noise $\epsilon_k \sim \gauss(0, \Sigma)$ is made in the local frame of robot pose $i_k$. This $k$-th measurement can be associated with the corresponding pose $x_{i_k}$ via timestamp synchronization. However, the data association, i.e., the index of the landmark being measured ($j_k$), is unknown. Furthermore, since data associations are unknown, the number of landmarks ($K$) in the environment is also unknown.

It is proposed in \cite{zhang2023data} to split the \probName\ problem into an inner problem and an outer problem. The inner problem takes the form:
\begin{equation}
    \min_{\substack{\boldx \in \SE(d)^N \\ \boldy \in \realspace^{d \times K}}}{
    f_{\text{odom}}(\boldx) + \sum_{k=1}^M{
    \min_{j_k \in [K]}{\|R_{i_k}\tran(y_{j_k} - t_{i_k}) - \zbar_k\|_{\Sigma}^2}}}
\label{eq: inner}
\end{equation}
where $[K] = \{1, \ldots, K\}$. \eqref{eq: inner} is a function of the number of landmarks ($K$). Given $K$, it minimizes the residuals of landmark measurements over data associations ($j_k$), which can be viewed as a max-mixture model \cite{olson2013inference}. \eqref{eq: inner} is solved by alternating minimization between solving $\{j_k, \boldy\}$ through k-means and k-means++ \cite{arthur2006k} and solving $\{\boldx, \boldy\}$ through a SLAM solver \cite{dellaert2019georgia}. This alternating scheme is shown to be advantageous over alternating between $j_k$ alone and $\{\boldx, \boldy\}$. In addition, k-means++ eliminates the need of an initial guess for $\boldy$ so only odometry is needed to initialize $\boldx$.

Let the optimal objective value of \eqref{eq: inner} be $\fslamstar{K}$. The outer problem of estimating the number of landmarks ($K$) is
\begin{equation}
    \min_{K \in \{1,\ldots,M\}}{\fslamstar{K} + \beta K}
\label{eq: outer}
\end{equation}
where $\beta$ is a hyperparameter. \eqref{eq: outer} is solved with multi-resolution gridding, which is a zeroth-order method that uses the inner problem \eqref{eq: inner} solver to evaluate $\fslamstar{K}$. The full algorithm is named \algName\ for its use of k-means.

\section{Method}
\subsection{Local Optimality Condition} \label{sec: optimcond}
One important but missing analysis in \cite{zhang2023data} is the local optimality condition for \eqref{eq: outer}. This optimality condition not only plays a role in finding heuristics for setting $\beta$ but also helps explain the working principle of the outer problem formulation. Suppose the actual number of landmarks is $K^\star$. In order for the minimizer of \eqref{eq: outer} to be $K^\star$, at least the following necessary local conditions have to be met:
\begin{gather}
    \fslamstar{K^\star + 1} + \beta \ge \fslamstar{K^\star} \label{eq: cond_kplus1} \\
    \fslamstar{K^\star - 1} - \beta \ge \fslamstar{K^\star} \label{eq: cond_kminus1}
\end{gather}
Combining \eqref{eq: cond_kplus1} and \eqref{eq: cond_kminus1}, we can derive a condition on $\beta$:
\begin{equation}
    \fslamstar{K^\star - 1} - \fslamstar{K^\star} \ge \beta \ge \fslamstar{K^\star} - \fslamstar{K^\star + 1}
\label{eq: betacond}
\end{equation}
$\fslamstar{K^\star - 1}$ is the optimal SLAM objective value with one landmark variable fewer than the actual number so the measurements for this missing landmark have to be associated incorrectly to other landmarks (if we assume that $\fslamstar{K^\star}$ contains all correct associations). The residuals $R_{i_k}\tran(y_{j_k} - t_{i_k}) - \zbar_k$ for these measurements do not follow the Gaussian model since they are caused by association errors rather than sensor noise. When the inter-landmark spacing is large relative to the landmark measurement noise (Fig. \ref{fig: asserr_distant}), mis-associating measurements with a distant wrong landmark can cause large residuals in the objective function. This is where formulation \eqref{eq: outer} works robustly\alt{ (see the visualization of $\fslamstar{K}$ in \supplementary)}{}. In this case, the large upper bound $\beta_\text{ub} = \fslamstar{K^\star - 1} - \fslamstar{K^\star}$ creates a favorable gap above the lower bound $\beta_\text{lb} = \fslamstar{K^\star} - \fslamstar{K^\star + 1}$ for $\beta$ to satisfy \eqref{eq: betacond}. Since any mis-association is unlikely to be in the optimal solution, the one extra landmark in $\fslamstar{K^\star + 1}$ would only split the measurements of a landmark. Under additional assumptions\alt{ (see details in \supplementary)}{}, such as moderate odometry noise and negligible residual changes from the other measurement terms, the reduction in the objective value (i.e., $\beta_\text{lb}$) due to the split is therefore capped by the maximum residuals of all measurements associated with a landmark:
\begin{equation}
    \beta_\text{lb} \le \max_{j^\star_k}{\sum_{k \in \mathcal{M}(j^\star_k)}{\|{R^\star_{i_k}}\tran(y^\star_{j^\star_k} - t^\star_{i_k}) - \zbar_k\|_{\Sigma}^2}} \label{eq: betalbub}
\end{equation}
where $\mathcal{M}(j^\star_k)$ is the set of measurements associated with landmark $j^\star_k$ and all the optimal values are computed at $\fslamstar{K^\star}$.
\begin{figure}[h]
    \centering
    \begin{subfigure}{1.0\linewidth}
        \centering
        \scalebox{1.0}{\tikzset{every picture/.style={line width=0.75pt}} %set default line width to 0.75pt        

\begin{tikzpicture}[x=0.75pt,y=0.75pt,yscale=-1,xscale=1]
%uncomment if require: \path (0,67); %set diagram left start at 0, and has height of 67

%Shape: Triangle [id:dp9403057413381728] 
\draw  [draw opacity=0][fill={rgb, 255:red, 245; green, 166; blue, 35 }  ,fill opacity=1 ] (79.76,28.67) -- (87.19,40.67) -- (72.33,40.67) -- cycle ;
\draw  [color={rgb, 255:red, 245; green, 166; blue, 35 }  ,draw opacity=1 ] (84.7,24.09) -- (95.3,34.7)(95.3,24.09) -- (84.7,34.7) ;
\draw  [color={rgb, 255:red, 245; green, 166; blue, 35 }  ,draw opacity=1 ] (88.94,34.73) -- (99.55,45.33)(99.55,34.73) -- (88.94,45.33) ;
\draw  [color={rgb, 255:red, 245; green, 166; blue, 35 }  ,draw opacity=1 ] (77.27,39.73) -- (87.88,50.33)(87.88,39.73) -- (77.27,50.33) ;
\draw  [color={rgb, 255:red, 245; green, 166; blue, 35 }  ,draw opacity=1 ] (61,33.79) -- (71.61,44.39)(71.61,33.79) -- (61,44.39) ;
\draw  [color={rgb, 255:red, 245; green, 166; blue, 35 }  ,draw opacity=1 ] (70,20) -- (80.61,30.61)(80.61,20) -- (70,30.61) ;
\draw  [color={rgb, 255:red, 74; green, 144; blue, 226 }  ,draw opacity=1 ] (288.09,20.36) -- (298.7,30.97)(298.7,20.36) -- (288.09,30.97) ;
\draw  [color={rgb, 255:red, 74; green, 144; blue, 226 }  ,draw opacity=1 ] (274.7,34.09) -- (285.3,44.7)(285.3,34.09) -- (274.7,44.7) ;
\draw  [color={rgb, 255:red, 74; green, 144; blue, 226 }  ,draw opacity=1 ][fill={rgb, 255:red, 74; green, 144; blue, 226 }  ,fill opacity=1 ] (275.3,24.7) -- (285.91,35.3)(285.91,24.7) -- (275.3,35.3) ;
\draw  [color={rgb, 255:red, 74; green, 144; blue, 226 }  ,draw opacity=1 ][fill={rgb, 255:red, 74; green, 144; blue, 226 }  ,fill opacity=1 ] (284.7,44.09) -- (295.3,54.7)(295.3,44.09) -- (284.7,54.7) ;
\draw  [color={rgb, 255:red, 74; green, 144; blue, 226 }  ,draw opacity=1 ][fill={rgb, 255:red, 74; green, 144; blue, 226 }  ,fill opacity=1 ] (295.03,28.97) -- (305.64,39.58)(305.64,28.97) -- (295.03,39.58) ;
%Straight Lines [id:da7144507493821037] 
\draw [color={rgb, 255:red, 245; green, 33; blue, 223 }  ,draw opacity=1 ][line width=0.75]  [dash pattern={on 0.84pt off 2.51pt}]  (79.39,36.7) -- (293.89,25.78) ;
%Straight Lines [id:da5674390852908096] 
\draw [color={rgb, 255:red, 245; green, 33; blue, 223 }  ,draw opacity=1 ][line width=0.75]  [dash pattern={on 0.84pt off 2.51pt}]  (79.39,36.7) -- (290,50) ;
%Straight Lines [id:da42712987313213713] 
\draw [color={rgb, 255:red, 245; green, 33; blue, 223 }  ,draw opacity=1 ][line width=0.75]  [dash pattern={on 0.84pt off 2.51pt}]  (79.39,36.7) -- (280,40) ;
%Straight Lines [id:da23382593091700699] 
\draw [color={rgb, 255:red, 245; green, 33; blue, 223 }  ,draw opacity=1 ][line width=0.75]  [dash pattern={on 0.84pt off 2.51pt}]  (79.39,36.7) -- (280,30) ;
%Straight Lines [id:da3879272573000001] 
\draw [color={rgb, 255:red, 245; green, 33; blue, 223 }  ,draw opacity=1 ][line width=0.75]  [dash pattern={on 0.84pt off 2.51pt}]  (79.39,36.7) -- (300.47,34.4) ;
%Shape: Triangle [id:dp3208288449841161] 
\draw  [draw opacity=0][fill={rgb, 255:red, 74; green, 144; blue, 226 }  ,fill opacity=1 ] (290.1,28.67) -- (297.53,40.67) -- (282.67,40.67) -- cycle ;

\end{tikzpicture}}
        \caption{Large inter-landmark spacing and small measurement noise.}
        \label{fig: asserr_distant}
    \end{subfigure}
    % \hfill
    \par\smallskip
    \begin{subfigure}{1.0\linewidth}
        \centering
        \scalebox{1.0}{\tikzset{every picture/.style={line width=0.75pt}} %set default line width to 0.75pt        

\begin{tikzpicture}[x=0.75pt,y=0.75pt,yscale=-1,xscale=1]
%uncomment if require: \path (0,93); %set diagram left start at 0, and has height of 93

%Shape: Triangle [id:dp07085579074197024] 
\draw  [draw opacity=0][fill={rgb, 255:red, 74; green, 144; blue, 226 }  ,fill opacity=1 ] (182.1,35.92) -- (189.53,47.92) -- (174.67,47.92) -- cycle ;
\draw  [color={rgb, 255:red, 245; green, 166; blue, 35 }  ,draw opacity=1 ] (178.18,16.5) -- (188.79,27.11)(188.79,16.5) -- (178.18,27.11) ;
\draw  [color={rgb, 255:red, 245; green, 166; blue, 35 }  ,draw opacity=1 ] (178.79,45.89) -- (189.39,56.5)(189.39,45.89) -- (178.79,56.5) ;
\draw  [color={rgb, 255:red, 245; green, 166; blue, 35 }  ,draw opacity=1 ] (149.39,66.5) -- (160,77.11)(160,66.5) -- (149.39,77.11) ;
\draw  [color={rgb, 255:red, 245; green, 166; blue, 35 }  ,draw opacity=1 ] (116.39,45.5) -- (127,56.11)(127,45.5) -- (116.39,56.11) ;
\draw  [color={rgb, 255:red, 245; green, 166; blue, 35 }  ,draw opacity=1 ] (138.79,16.11) -- (149.39,26.71)(149.39,16.11) -- (138.79,26.71) ;
\draw  [color={rgb, 255:red, 74; green, 144; blue, 226 }  ,draw opacity=1 ] (199.39,16.5) -- (210,27.11)(210,16.5) -- (199.39,27.11) ;
\draw  [color={rgb, 255:red, 74; green, 144; blue, 226 }  ,draw opacity=1 ] (209.39,56.5) -- (220,67.11)(220,56.5) -- (209.39,67.11) ;
\draw  [color={rgb, 255:red, 74; green, 144; blue, 226 }  ,draw opacity=1 ][fill={rgb, 255:red, 74; green, 144; blue, 226 }  ,fill opacity=1 ] (158.79,17.11) -- (169.39,27.71)(169.39,17.11) -- (158.79,27.71) ;
\draw  [color={rgb, 255:red, 74; green, 144; blue, 226 }  ,draw opacity=1 ][fill={rgb, 255:red, 74; green, 144; blue, 226 }  ,fill opacity=1 ] (149.39,56.5) -- (160,67.11)(160,56.5) -- (149.39,67.11) ;
\draw  [color={rgb, 255:red, 74; green, 144; blue, 226 }  ,draw opacity=1 ][fill={rgb, 255:red, 74; green, 144; blue, 226 }  ,fill opacity=1 ] (179.39,66.5) -- (190,77.11)(190,66.5) -- (179.39,77.11) ;
%Shape: Triangle [id:dp5205076504549448] 
\draw  [draw opacity=0][fill={rgb, 255:red, 245; green, 166; blue, 35 }  ,fill opacity=1 ] (153.76,35.92) -- (161.19,47.92) -- (146.33,47.92) -- cycle ;
%Straight Lines [id:da35689951029796596] 
\draw [color={rgb, 255:red, 245; green, 33; blue, 223 }  ,draw opacity=1 ][line width=0.75]  [dash pattern={on 0.84pt off 2.51pt}]  (153.39,43.95) -- (163.85,22.5) ;
%Straight Lines [id:da0494290379669442] 
\draw [color={rgb, 255:red, 245; green, 33; blue, 223 }  ,draw opacity=1 ][line width=0.75]  [dash pattern={on 0.84pt off 2.51pt}]  (153.39,43.95) -- (154.35,61.5) ;
%Straight Lines [id:da30260266110019707] 
\draw [color={rgb, 255:red, 245; green, 33; blue, 223 }  ,draw opacity=1 ][line width=0.75]  [dash pattern={on 0.84pt off 2.51pt}]  (153.39,43.95) -- (184.35,72) ;
%Straight Lines [id:da7395559533098788] 
\draw [color={rgb, 255:red, 245; green, 33; blue, 223 }  ,draw opacity=1 ][line width=0.75]  [dash pattern={on 0.84pt off 2.51pt}]  (153.39,43.95) -- (204.6,22.25) ;
%Straight Lines [id:da8615133970316411] 
\draw [color={rgb, 255:red, 245; green, 33; blue, 223 }  ,draw opacity=1 ][line width=0.75]  [dash pattern={on 0.84pt off 2.51pt}]  (153.39,43.95) -- (214.85,61.75) ;

\end{tikzpicture}}
        \caption{Small inter-landmark spacing and large measurement noise.}
        \label{fig: asserr_close}
    \end{subfigure}
\caption{Pre-optimization errors induced by mis-associations after removing the blue landmark. The crosses are measurements of landmarks. The length of each dashed magenta line is the magnitude of a mis-association error.}
\label{fig: asserr}
\end{figure}

While the upper bound $\beta_\text{ub}$ is high, to ensure that $\beta$ is greater than $\beta_\text{lb}$, the right-hand side of \eqref{eq: betalbub} can be used as a heuristic to set $\beta$ so as to satisfy \eqref{eq: betacond}. Although $\arg\!\max_{j^\star_k}$ and $\mathcal{M}(j^\star_k)$ are not known beforehand, we can estimate the number of measurements for a landmark ($|\hat{\mathcal{M}}(j^\star_k)|$) from prior information such as average robot speed, field of view, sensor range, and sensor frame rate\alt{ (see \supplementary)}{}. Since $\sum_{k  \in \mathcal{M}(j^\star_k)}\|{R^\star_{i_k}}\tran(y^\star_{j^\star_k} - t^\star_{i_k}) - \zbar_k\|_{\Sigma}^2$ follows a $\chi^2$ distribution due to \eqref{eq: posgen}, $\beta$ can be heuristically set by the inverse $\chi^2$ distribution with $d \times |\hat{\mathcal{M}}(j^\star_k)|$ degrees of freedom and a sufficiently large probability threshold to be robust to violations of the assumptions underlying \eqref{eq: betalbub}. The preceding analysis provides a more principled explanation of the heuristic in \cite{zhang2023data}.

When the inter-landmark spacing is small relative to the landmark measurement noise, measurements from different landmarks may mix together (Fig. \ref{fig: asserr_close}). Mis-associating measurements with a nearby landmark incurs a cost similar to the original sensor noise error. Thus, $\fslamstar{K^\star - 1}$ is close to $\fslamstar{K^\star}$, and the upper bound $\beta_\text{ub}$ is lower. In the extreme case where two landmarks are on top of each other, removing one does not increase the objective value. Therefore, $\beta$ is set empirically in this regime because setting $\beta$ using the previous heuristic \eqref{eq: betalbub} may cause it to exceed $\beta_\text{ub}$. However, a starting value for $\beta$ ($\frac{2|\hat{\mathcal{M}}(j^\star_k)|}{\pi}$) based on quantitative modeling of the effect of adding a landmark ($\fslamstar{K^\star + 1}$) by splitting Gaussian samples\alt{, along with additional guidelines provided in \supplementary,}{ \supplementary} is used in an experiment (Section \ref{sec: kitchen}) where landmarks are considered clustered relative to the measurement noise.

\subsection{Incorporating Semantic Feature Vectors} \label{sec: feat}
\probName\ studies the setting where only spatial measurements of landmarks are available. However, for applications such as object SLAM, we may also have landmark feature vectors predicted by an object detector (e.g., one-hot classification vectors from Mask R-CNN \cite{he2017mask}) or a foundation model (e.g., real-valued vectors from DINO \cite{caron2021emerging}). To enable synergy between data association and feature vector estimation, we incorporate these vectors by assuming a Gaussian noise model (which may be relaxed in practice in Section \ref{sec: prac}):
\begin{equation}
    \zsbar{k} = \ys{j_k} + \epss{k}
\label{eq: semgen}
\end{equation}
where $\zsbar{k} \in \realspace^s$ is the semantic measurement (i.e., a feature vector of length $s$) of the landmark semantics ($\ys{j_k} \in \realspace^s$), corrupted by zero-mean Gaussian noise $\epss{k} \sim \gauss(0, \Sigma_s)$. The inner problem \eqref{eq: inner} is modified to include a semantic residual:
\begin{multline}  % cannot split within {}
    \min_{\substack{\boldx \in \SE(d)^N \\ \boldy \in \realspace^{d \times K} \\ \boldys \in \realspace^{s \times K}}}
    f_{\text{odom}}(\boldx) + \sum_{k=1}^M\min_{j_k \in [K]} \\
    \Bigg\{\|R_{i_k}\tran(y_{j_k} - t_{i_k}) - \zbar_k\|_{\Sigma}^2
    + w_s^2\|\ys{j_k} - \zsbar{k}\|_{\Sigma_s}^2\Bigg\}
\label{eq: seminner}
\end{multline}
where $w_s^2$ is a weighting term used to normalize the feature-vector residual by length (if the residual scales with length) and to have a tunable balance between the spatial residual and the semantic residual. In the alternating minimization, the association step now solves the new objective function:
\begin{equation}
    \min_{\substack{\boldy \in \realspace^{d \times K} \\ \boldys \in \realspace^{s \times K}}}
    \sum_{k=1}^M\min_{j_k \in [K]}
    \left\|
    \begin{bmatrix}
        R_{i_k}\zbar_k + t_{i_k} \\
        \frac{\sigma}{\sigma_s}w_s\zsbar{k}
    \end{bmatrix}
    - 
    \begin{bmatrix}
        y_{j_k} \\
        \frac{\sigma}{\sigma_s}w_s\ys{j_k}
    \end{bmatrix}
    \right\|^2
\label{eq: semkmeans}
\end{equation}
where we assume isotropic noise covariances ($\Sigma = \mathsf{diag}(\sigma)$ and $\Sigma_s = \mathsf{diag}(\sigma_s)$) to simplify the exposition, although the anisotropic case can be derived in a similar fashion. In practice, $\sigma_s$ is often unknown so $\frac{\sigma}{\sigma_s}w_s$ can be lumped into a single hyperparameter. Equation \eqref{eq: semkmeans} stacks the spatial and semantic residuals so that they can be solved jointly in a single pass of k-means (with k-means++ \cite{arthur2006k}). The data association ($j_k$) becomes the cluster assignment, and the cluster centers are the landmark positions ($y_{j_k}$) concatenated with the weighted feature vectors ($\frac{\sigma}{\sigma_s}w_s\ys{j_k}$).

In the estimation step, we take the data association ($j_k$) results from the association step and solve $\{\boldx, \boldy\}$ using a SLAM solver \cite{dellaert2019georgia}. For the semantic part, once the data associations are given, it is decoupled from $\{\boldx, \boldy\}$ in \eqref{eq: seminner} so it can be solved independently. The solution to $\min_{\boldys}\sum_{k=1}^M{\|\ys{j_k} - \zsbar{k}\|}^2$ is simply the average of $\zsbar{k}$'s associated with each $\ys{j_k}$, which is already computed as the cluster center in the k-means step.

For the outer problem, we retain the original objective function \eqref{eq: outer} because $\beta$ is set according to the Gaussian assumption of the generative process (Section \ref{sec: optimcond}), and the Gaussian assumption for the semantic part \eqref{eq: semgen} may be less reliable than that for the positional part \eqref{eq: posgen}. Therefore, the semantic part does not enter the outer problem.

{\inlinetitle Remark.} The feature vector estimation can be a simple add-on to the original algorithm. Instead of performing k-means in the $d$-dimensional space, we now run it in the ($d+s$)-dimensional space. This allows both spatial information and semantic information to facilitate data association during the association step, while the association results can in turn help better estimate the feature vectors of landmarks. For example, a semantically erroneous feature vector may be corrected if the measurement is correctly associated based on its positional component. The updated algorithm is named \algNameS\ for its additional semantic component.

\subsection{Semi-Incremental Processing} \label{sec: inc}
The original \probName\ work \cite{zhang2023data} adopts a batch processing scheme so that the entire trajectory is processed all at once and all the data associations are jointly determined, thereby avoiding the difficulty of correcting wrong past data associations in a pure incremental scheme. However, drift accumulates along a trajectory and leads to large errors in the initial guess for $\boldx$ (i.e., the odometry chain) toward the end of the trajectory when the trajectory is long or the odometry noise is high, causing lower accuracy for the batch scheme. 

To reduce drift, we propose a semi-incremental scheme following the divide-and-conquer strategy. Specifically, we define intervals of robot poses indexed by $n$, $\boldx_n = \{x_i\}_{i=(n-1)N_n+1}^{\min(nN_n, N)}$, where $n \in \{1,\ldots,\lceil\frac{N}{N_n}\rceil\}$ and $N_n \in \{1,\ldots,N\}$ is the interval length. These intervals divide the trajectory into $\lceil\frac{N}{N_n}\rceil$ segments. We first run \algNameS\ on each segment independently (either sequentially if the measurements arrive sequentially or in parallel if they are available at once). In the heuristic for $\beta$ (Section \ref{sec: optimcond}), the number of measurements is taken as $N_n$ for the segment processing if $|\hat{\mathcal{M}}(j^\star_k)| > N_n$, because $N_n$ poses can at most produce $N_n$ observations of a landmark. Once all segments have been processed, we chain them together by connecting the last pose of a segment to the first pose of the next segment using odometry and propagating all pose updates along the chain. We then perform global optimization with \algNameS\ on the chained segments, which now represent the entire trajectory (Fig. \ref{fig: semiinc}). The key saving is that the outer problem solution space over the entire trajectory is reduced from $K \in \{1,\ldots,M\}$ to $\{1,\ldots,\sum_n\hat{K}_n\}$, where $\hat{K}_n$ is the estimated number of landmarks for segment $n$. We name the updated algorithm \algNameDNC\, with ``i" standing for (semi-)incremental and ``d" standing for divide-and-conquer. It is semi-incremental because poses are processed block-wise in segments.

{\inlinetitle Remark.} The segment processing first reduces drift within each segment. Therefore, the initial guess for the trajectory in the global optimization contains less accumulated error than the raw odometry chain. This scheme is analogous to the local and global bundle-adjustment in ORB-SLAM \cite{mur2015orb}. In terms of computation cost, the outer problem solver originally requires $\mathcal{O}(\log(M))$ evaluations of $\fslamstar{K}$, each of which requires solving a nonlinear SLAM problem multiple times in the alternating minimization steps. The SLAM solver complexity can be as high as cubic in the problem size. Overall, this is  computationally expensive and becomes especially worse as the number of poses ($N$) and the number of measurements ($M$) increase. By performing the proposed semi-incremental scheme, we have $\lceil\frac{N}{N_n}\rceil$ small problems that keep the $\mathcal{O}(\log(M_n))$ and cubic scaling low. Additionally, the global optimization can take advantage of the already estimated numbers of landmarks for the segments $\sum_n\hat{K}_n$, which is typically much smaller than $M$ because each landmark usually has tens or hundreds of measurements for a high-frame-rate sensor such as a camera. The global optimization does not inherit the association results from the segment processing to avoid convergence to local minima.

\subsection{Practical Considerations} \label{sec: prac}
% discuss: 1. one hot feature vector 2. parallel 3. fp, fn, wrong label
{\inlinetitle One-hot feature vectors.} In practice, researchers may want to use one-hot classification vectors from an object detector (e.g., Mask R-CNN \cite{he2017mask}) to describe object landmarks. Although our formulation and algorithm are derived from the real-valued feature vector assumption, our algorithm also works functionally for one-hot vectors. During the k-means association step (Section \ref{sec: feat}), the use of one-hot vectors would violate the Gaussian residual assumption \eqref{eq: semgen} and the physical meaning of $\sigma_s$, but $\frac{\sigma}{\sigma_s}w_s$ can be treated as a hyperparameter to balance the spatial and semantic terms. In the estimation step, taking the average of the one-hot vectors associated with a landmark and selecting the $\arg\!\max$ element in the average vector is equivalent to majority voting for the landmark class. Therefore, the same algorithm can be applied to one-hot feature vectors without any modification.

{\inlinetitle Parallel global optimization.} The global optimization (Section \ref{sec: inc}) can run in parallel with the segment processing. This parallel scheme runs the global optimization on a different thread whenever there are more than two processed segments available. The output of the parallel global optimization is stored as a segment so that it will be combined with the subsequent segments and jointly optimized together with them in the next global optimization. Fig. \ref{fig: semiinc} shows a schematic of the different processing schemes. The parallel scheme is advantageous if the up-to-date pose estimates, incorporating the latest global optimization results, are required in real time. In principle, it should also be more accurate than the series scheme (Section \ref{sec: inc}) because drift is reduced further through the parallel global optimization before the final whole-trajectory global optimization. This parallel implementation is named \algNameLL, with ``ll" standing for parallel.
\begin{figure}[h]
\vspace{1.2em} % adjust margin
\centering
\begin{overpic}[width=0.95\columnwidth, tics=5]{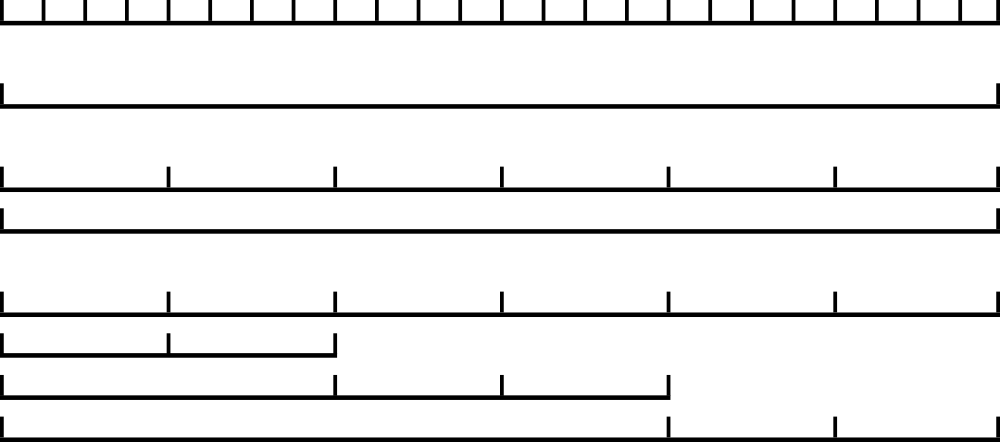} % [tics=5, grid] to put grid
    \put(-1,45){\ssmall{$x_1$}}
    \put(3.1,45){\ssmall{$x_2$}}
    \put(7.3,45){\ssmall{$x_3$}}
    \put(11.4,45){\ssmall{$x_4$}}
    \put(15.6,45){\ssmall{$x_5$}}
    \put(21,45){\ssmall{$...$}}
    \put(98,45){\ssmall{$x_N$}}
    \put(0,37){\ssmall{\algName: batch scheme}}
    \put(0,28.7){\ssmall{\algNameDNC: segment processing + final global optimization}}
    \put(0,16.2){\ssmall{\algNameLL: segment processing + parallel global optimization}}
    \put(34.8,9.3){\ssmall{parallel global}}
    \put(68,5){\ssmall{parallel global}}
\end{overpic}
\caption{Different processing schemes.}
\label{fig: semiinc}
\vspace{-0.9em} % adjust margin
\end{figure}

{\inlinetitle Detector mistakes.} Object detectors, and similarly visual foundation models, may make three types of errors: detecting an irrelevant object as one of the target object classes (false positives), missing detections, and mislabeling objects as one of the other target classes (wrong labels). We discuss their effects and possible treatments. Sporadic false positives that are spatially close to true positives tend to be merged with the true positives (Section \ref{sec: pool}) because the reduction in $\fslamstar{K}$ by forming a new landmark for them usually does not exceed the penalty $\beta$ in \eqref{eq: outer}. Handling false positives is an interesting direction for future work. For occasional missed detections, if their corresponding landmark can still be estimated from detections at other viewing angles, we can render the estimated object in the frames where it is missed to correct the missing detections in post-processing. For mislabeling, a similar assumption is that the object is correctly labeled in the majority of frames from other viewing angles and correctly associated using the spatial information. In such a case, our algorithm can natively correct incorrect labels by selecting the most-voted class as the final semantic class estimate.

\section{Experiments} \label{sec: exp}
\subsection{Synthetic Datasets and Ablation Study} \label{sec: synth}
Benchmarking and ablation experiments are conducted on the same 2D and 3D synthetic grid datasets as those in \cite{zhang2023data}, but with the addition of one-hot and real-valued semantic feature vectors. We compare against three baselines, (i) raw odometry (ii) an alternating minimization algorithm (named Oracle) that is given the ground-truth number of landmarks and an initial guess for each landmark \cite{doherty2022discrete, zhang2023data}, and (iii) the original batch processing \algName\ algorithm \cite{zhang2023data}. We also evaluate two ablated versions of our proposed algorithms, (i) running \algName\ repetitively every $N_n$ poses on all poses seen so far to (semi-)incrementally process the trajectory (\algNameI), and (ii) enabling the semantic feature vector processing (Section \ref{sec: feat}) on top of \algNameI\ (\algNameIS). The expectation is that \algNameI\ should achieve higher accuracy than \algName\ because odometry drift is reduced by incrementally rerunning the algorithm instead of performing one-time batch processing. \algNameIS\ may achieve the highest accuracy among all since it is the most redundant version. However, both \algNameI\ and \algNameIS\ are computationally expensive and even prohibitive on large datasets due to the naive incremental rerunning scheme. We increase the number of inner solver iterations for all \algName\ variants on the synthetic datasets to improve convergence and obtain a more accurate characterization of the trend. The performance of all methods is measured in terms of absolute trajectory error (ATE) after $\mathrm{SIM}(3)$ alignment between the estimated trajectory and a reference trajectory, which is obtained by optimizing the same measurements given the ground-truth data associations. We typically find that ATE is positively correlated with other metrics such as landmark position error. We sweep various dataset parameters in Fig. \ref{fig: odomnoise} -- \ref{fig: semnoise_realval}. Each data point in the plots is the median result of 20 simulations (different realizations of the noise) and the shaded region shows the 25\% -- 75\% quartile range. We use the heuristic in Section \ref{sec: optimcond} to set $\beta$. Detailed dataset settings, algorithm parameter settings, and evaluation protocols are in \supplementary.

{\inlinetitle Odometry noise.} The proposed semi-incremental scheme (Section \ref{sec: inc}) is designed to reduce drift while keeping the computational load low. In Fig. \ref{fig: odomnoise}, we show that all the semi-incremental methods (\algNameI, \algNameIS, \algNameDNC, and \algNameLL) generally outperform the batch \algName\ algorithm. Further, the incorporation of semantic feature vectors (\algNameIS, \algNameDNC, and \algNameLL) results in even lower ATE, particularly when the odometry noise is high. The baseline alternating minimization algorithm (Oracle) is almost always inferior, which is consistent with the results reported in \cite{zhang2023data}.
\begin{figure}[h]
    \centering
        \includegraphics[width=1.0\linewidth]{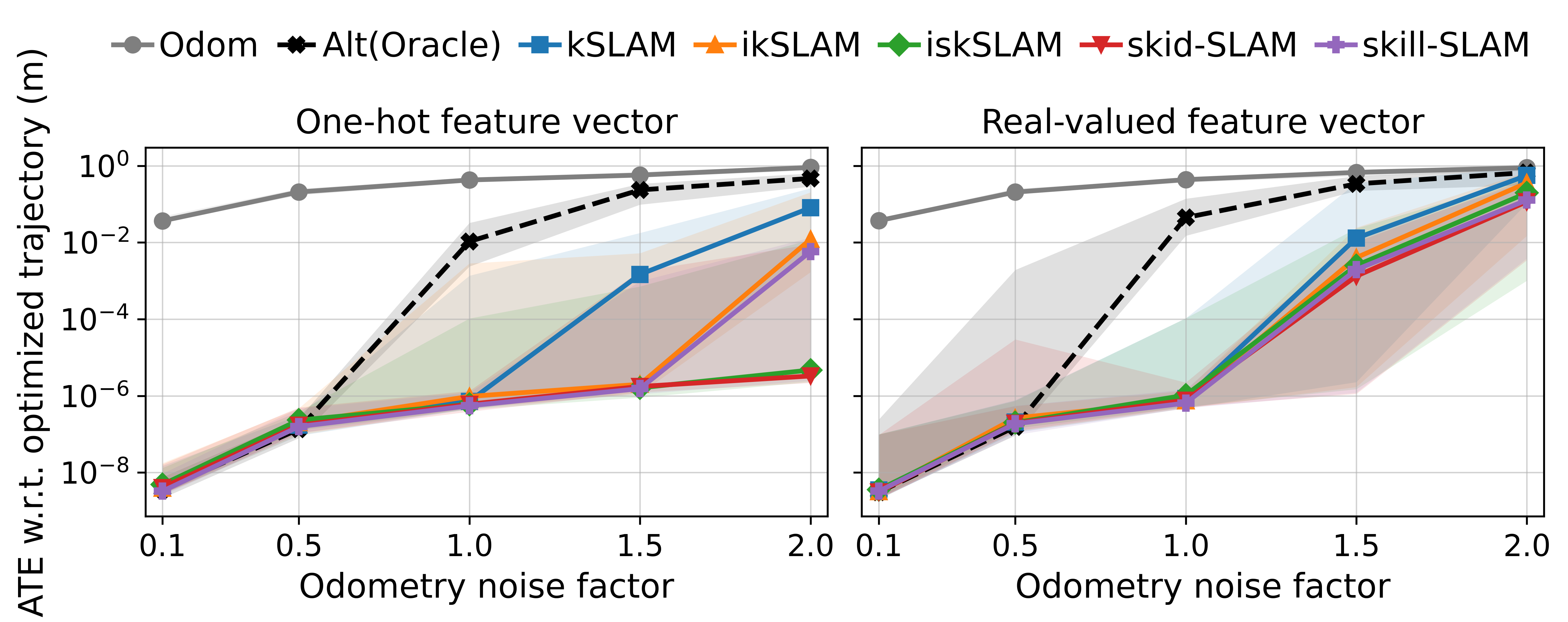}
\caption{Absolute trajectory error (ATE) in log scale for different odometry noise levels on the 3D grid dataset. The noise factor is multiplied to the default noise level (translational std = 0.05 (m) and rotational std = 0.005).}
\label{fig: odomnoise}
\end{figure}

{\inlinetitle Runtime.} Although the naive incremental rerunning scheme (\algNameI\ and \algNameIS) also reduces ATE, it has worse runtime (Fig. \ref{fig: runtime}). Among the algorithms that incorporate semantic information (\algNameIS, \algNameDNC, and \algNameLL), \algNameIS\ is clearly the slowest. \algNameLL\ is slower than \algNameDNC\ because the parallel global-optimization thread has to wait for the last global optimization to finish before starting the final whole-trajectory optimization, whereas \algNameDNC\ starts the whole-trajectory optimization immediately after the segment processing is completed. Among all the algorithms, Oracle is the fastest because it is given the ground-truth number of landmarks and does not spend time estimating it. On the smaller 3D grid dataset (216 poses), \algName\ is faster than our proposed algorithms. However, on the larger 2D grid dataset (500 poses), the savings from our divide-and-conquer strategy (Section \ref{sec: inc}) become apparent and enable the proposed algorithms to match or surpass the other baselines, particularly when more landmarks are present. The runtime advantage becomes even more significant when the number of measurements per landmark is higher ($|\mathcal{M}(j^\star_k)| = 10$ here), resulting in a large total number of measurements ($M$) (Section \ref{sec: kitchen}).
\begin{figure}[h]
\vspace{0.5em} % adjust margin
    \centering
        \includegraphics[width=1.0\columnwidth]{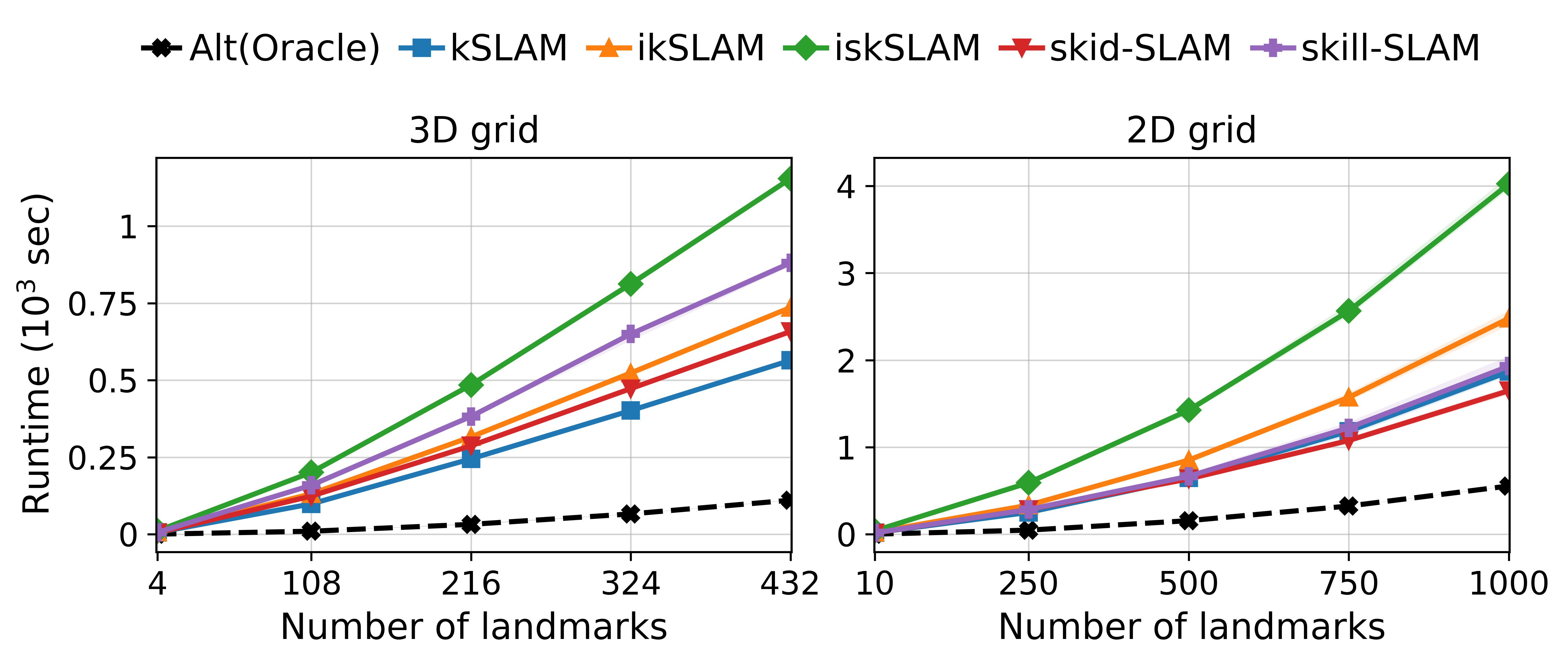}
\caption{Runtime for different numbers of landmarks.}
\label{fig: runtime}
\vspace{-0.5em} % adjust margin
\end{figure}

{\inlinetitle Semantic noise.} We vary the nominal detector mislabeling probability for one-hot semantic vectors (Fig.~\ref{fig: semnoise_onehot}) and the Gaussian noise standard deviation for real-valued semantic feature vectors (Fig.~\ref{fig: semnoise_realval}). The nominal detector mislabeling probability specifies the probability with which each semantic measurement is intentionally corrupted. The ``Raw" curve in each figure denotes the actual semantic error in the measurements after the noise is realized. The remaining curves show the errors of the estimated landmark semantics. The algorithms that process semantic information (\algNameIS, \algNameDNC, and \algNameLL) generally achieve lower ATE than those do not (\algName\ and \algNameI), but the benefits diminish as the semantic information becomes less reliable. On the other hand, joint estimation of semantics yields substantial reductions in landmark semantic estimation errors in both the one-hot and real-valued cases, as evidenced by the large differences between the raw semantic errors and the estimation errors.
\begin{figure}[h]
    \centering
        \includegraphics[width=1.0\columnwidth]{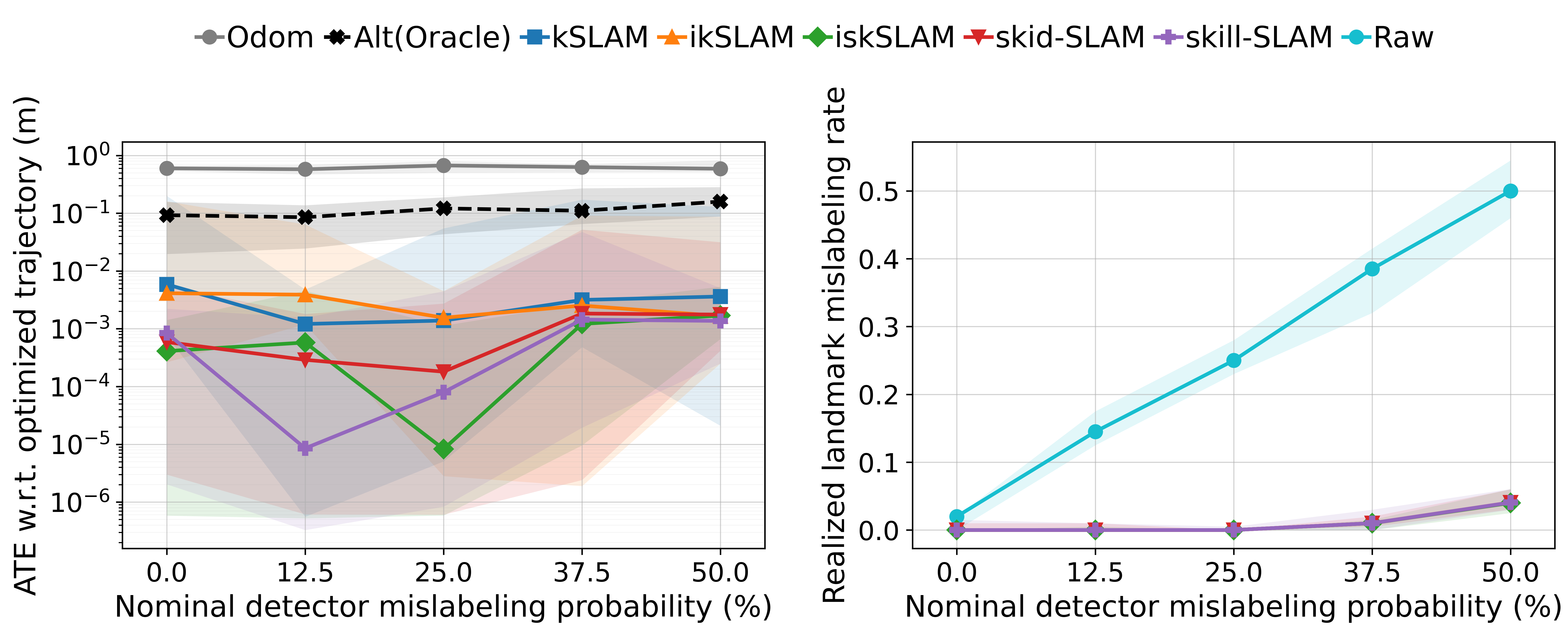}
\caption{Absolute trajectory error (ATE) in log scale and mislabeling rate for different nominal detector mislabeling probabilities on the 2D grid dataset with one-hot feature vectors.}
\label{fig: semnoise_onehot}
\end{figure}

\begin{figure}[h]
\vspace{0.2em}
    \centering
        \includegraphics[width=1.0\columnwidth]{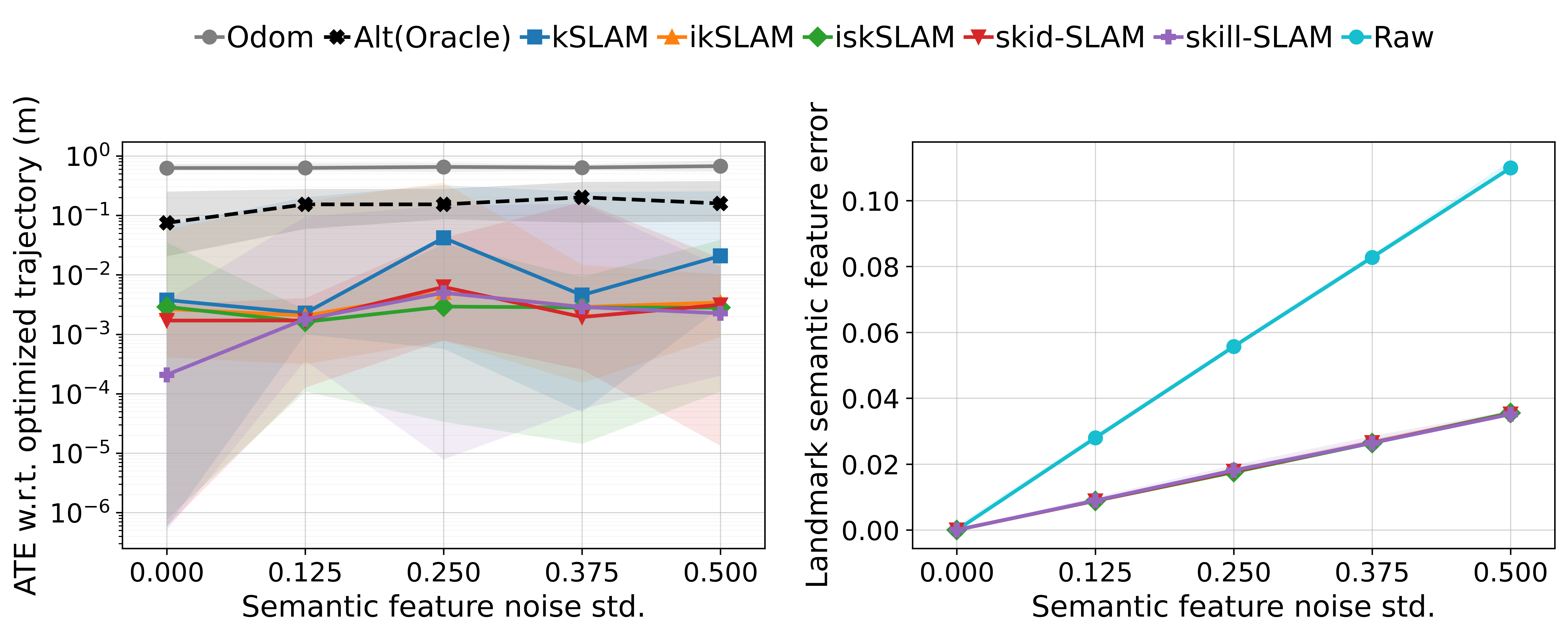}
\caption{Absolute trajectory error (ATE) in log scale and feature vector estimation error (Euclidean norm) for different semantic noise standard deviations on the 2D grid dataset with real-valued feature vectors.}
\label{fig: semnoise_realval}
\vspace{-1.0em}
\end{figure}

\subsection{Kitchen Datasets with One-Hot Feature Vectors} \label{sec: kitchen}
We evaluate our algorithms on in-house indoor kitchen datasets and designed to test their ability to incorporate object detections and one-hot semantic vectors. We place various objects in a kitchen environment and remotely operate an RB-Y1 humanoid robot equipped with a ZED stereo camera, guiding it through the kitchen over multiple passes to mimic a household robot performing daily tasks. Two baselines are included: (i) the batch processing \algName\ algorithm \cite{zhang2023data}, and (ii) an object-centric SLAM method \cite{singh2024opti}. The input odometry is computed using RGB-D visual odometry \cite{zhou2018open3d, steinbrucker2011real}. The landmark measurements are taken as the centroids of segmented objects in point clouds back-projected from depth images, where the object masks are obtained from Mask R-CNN \cite{he2017mask} for \algName\ and our proposed algorithms, and from YOLOv8 for the object-centric SLAM baseline \cite{singh2024opti}. To provide ground truth for evaluation, we place a unique AprilTag \cite{wang2016apriltag} close to each object (rather than on the object itself, to avoid interfering with the object detector). These tags serve as pseudo-landmarks whose measurements are extracted in a similar manner from the back-projected depth images. The identities of the tags allow us to associate the tag measurements and run standard landmark-based SLAM. The resulting estimated trajectory is used as the pseudo-ground-truth trajectory for evaluation, and the estimated tag positions serve as qualitative references for the object positions because they are slightly offset from the objects, making them unsuitable for quantitative evaluation).

The objects are strategically arranged in two layouts: (i) clusters of objects from different classes, and (ii) clusters of objects from the same class. As discussed in Section \ref{sec: optimcond}, for environments of this type, where landmarks are close to one another relative to the landmark measurement noise, $\beta$ is set empirically. However, \alt{following the modeling results and guidelines presented in \supplementary}{based on quantitative modeling of the effect of adding a landmark by splitting Gaussian samples \supplementary}, a reasonable starting value for $\beta$ is $\frac{2|\hat{\mathcal{M}}(j^\star_k)|}{\pi}$, which is used for the kitchen datasets.

As shown in Fig. \ref{fig: kitchen} and Table \ref{tab: kitchen}, our proposed algorithms (\algNameDNC\ and \algNameLL) are able to recover from several errors, such as missing landmarks and extra landmarks, made by \algName\, and reduce the trajectory error in both cases (i) and (ii). In addition, our proposed algorithms exhibit a clear speed advantage on these larger datasets due to the savings discussed in Section \ref{sec: inc}, continuing the trend observed in Section \ref{sec: synth}. The more challenging layout in case (ii) does degrade performance, causing landmarks of the same class to be merged rather than correctly distinguished. However, the estimated landmark labels are always correct. The object-centric SLAM baseline \cite{singh2024opti} clearly performs worse. We find that it relies heavily on the manually set covariance matrices and thresholds.
\begin{table*}[th]
\vspace{1.0em} % adjust margin
\begin{center}
% \small % caption font size
\caption{Absolute trajectory error (m) and runtime (s) on the kitchen datasets.}
\label{tab: kitchen}
\begin{tabular}{|l| c c c| c c c c c|}
    \hline
    Dataset & $N$ & $K_\text{gt}$ & $M$ &  Odometry & Object-Centric \cite{singh2024opti} & \algName\ & \algNameDNC & \algNameLL \\
    \hline
    \hline
    Diff Class & 3719 & 21 & 6552 & 0.264 & 0.361 & 0.034 / 429 & 0.033 / 325 & 0.033 / 364 \\
    \hline
    Same Class & 4009 & 21 & 6872 & 0.200 & 0.147 & 0.039 / 503 & 0.039 / 321 & 0.037 / 342 \\
    \hline
\end{tabular}
\end{center}
\vspace{-0.7em} % adjust margin
\end{table*}
\begin{figure*}[th]
    \centering
    \begin{subfigure}{0.19\linewidth}
        \includegraphics[width=1.0\columnwidth]{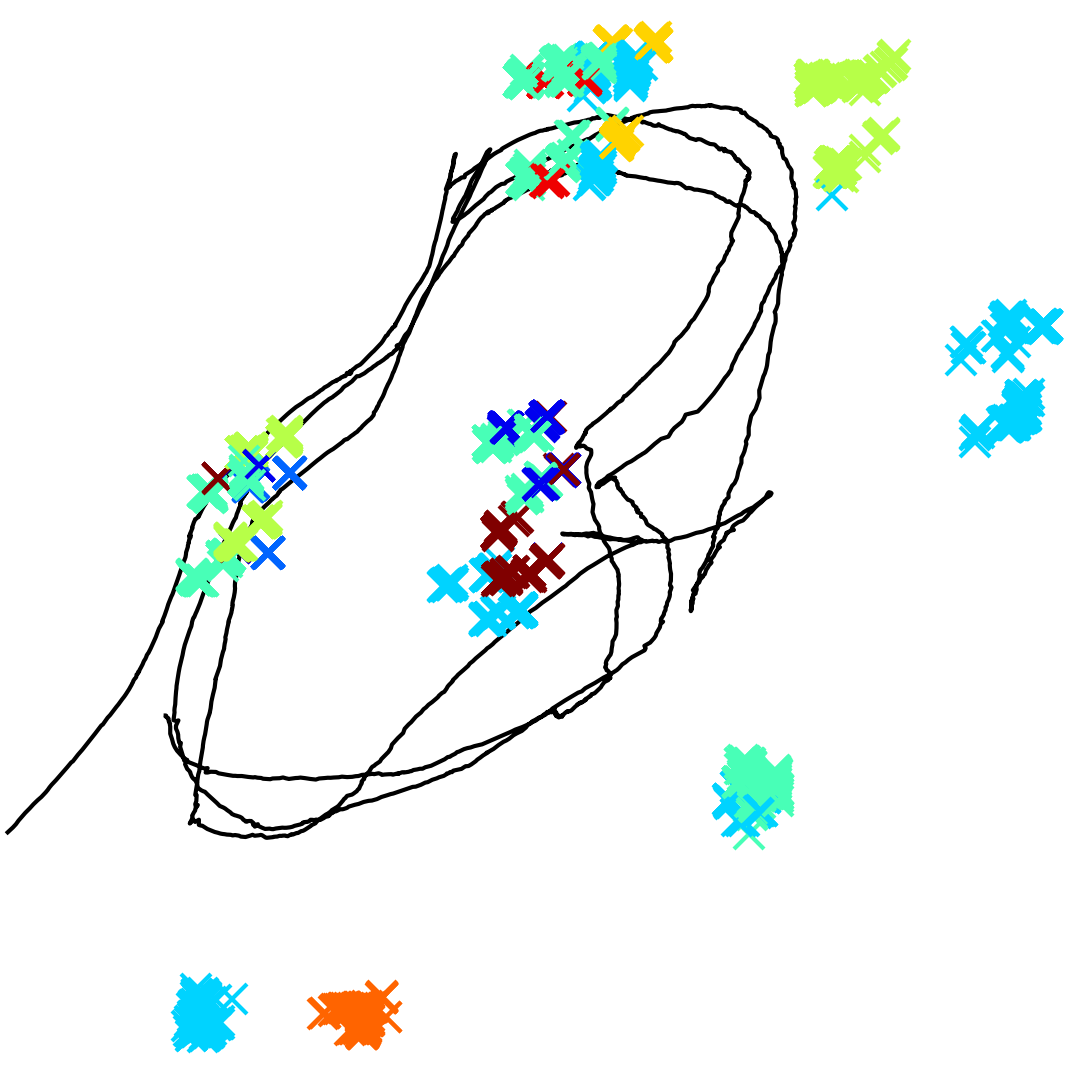}
    \end{subfigure}
    \hspace{0.5em}
    \begin{subfigure}{0.19\linewidth}
        \includegraphics[width=1.0\columnwidth]{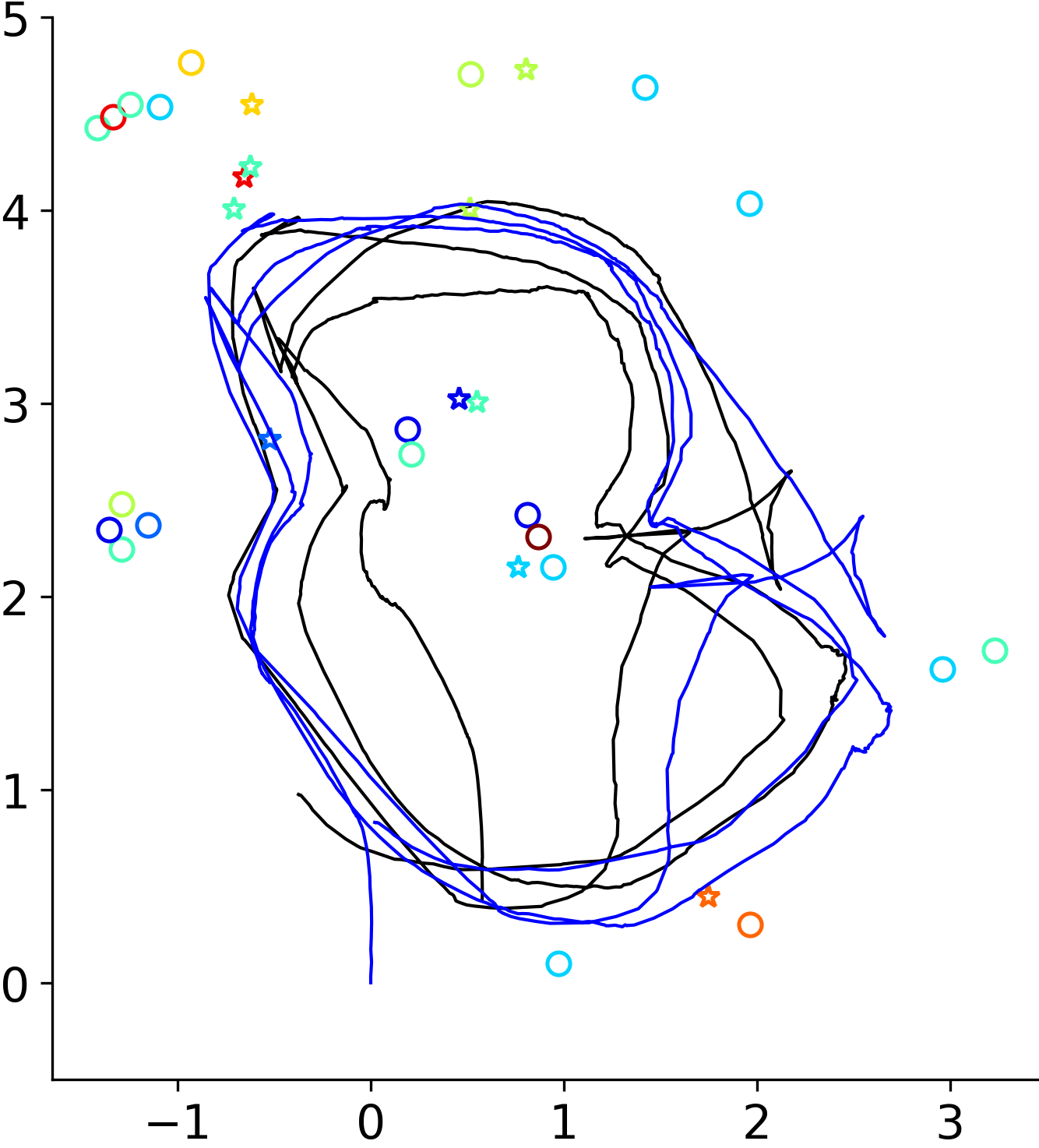}
    \end{subfigure}
    \begin{subfigure}{0.19\linewidth}
        \includegraphics[width=1.0\columnwidth]{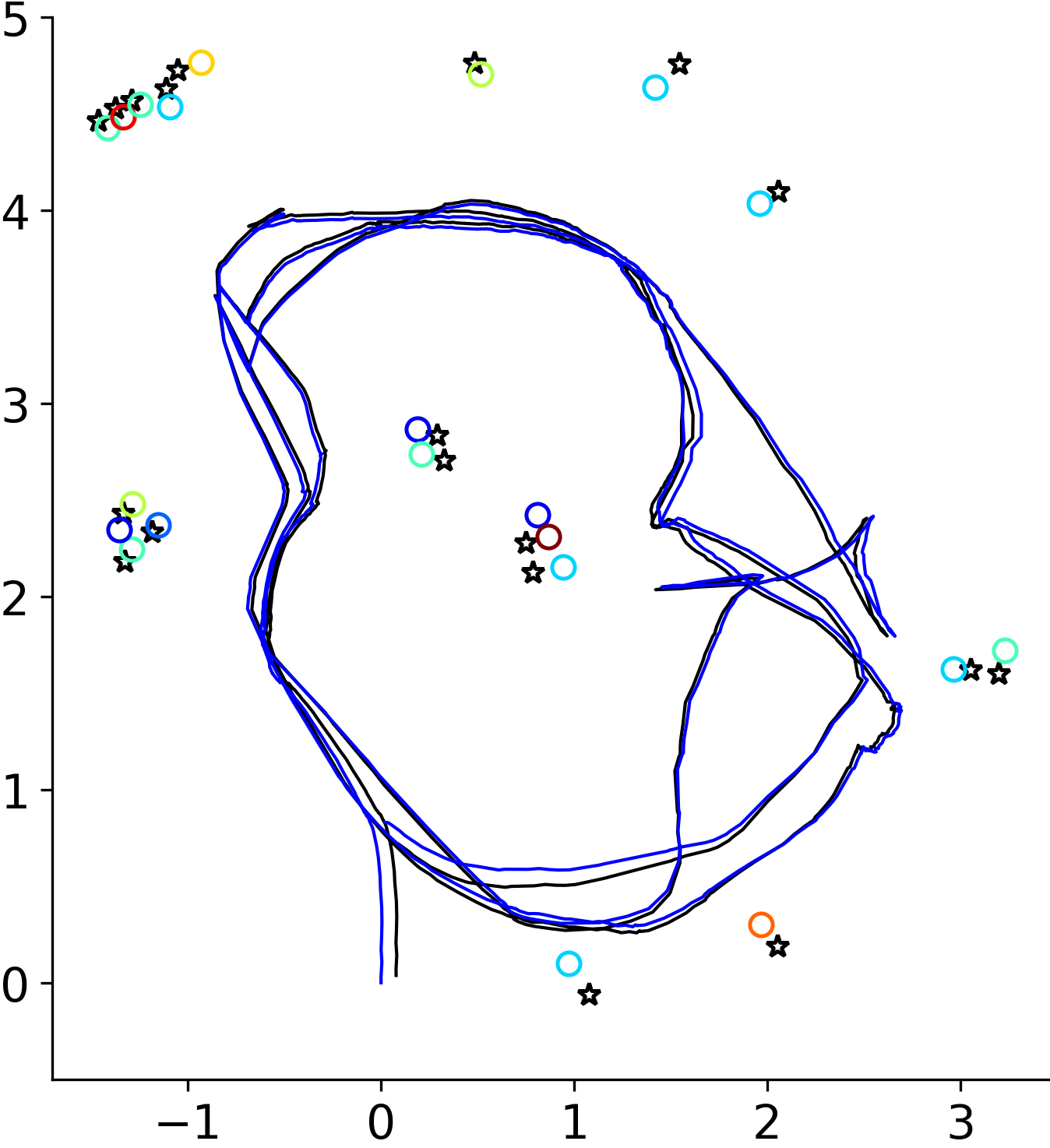}
    \end{subfigure}
    \begin{subfigure}{0.19\linewidth}
        \includegraphics[width=1.0\columnwidth]{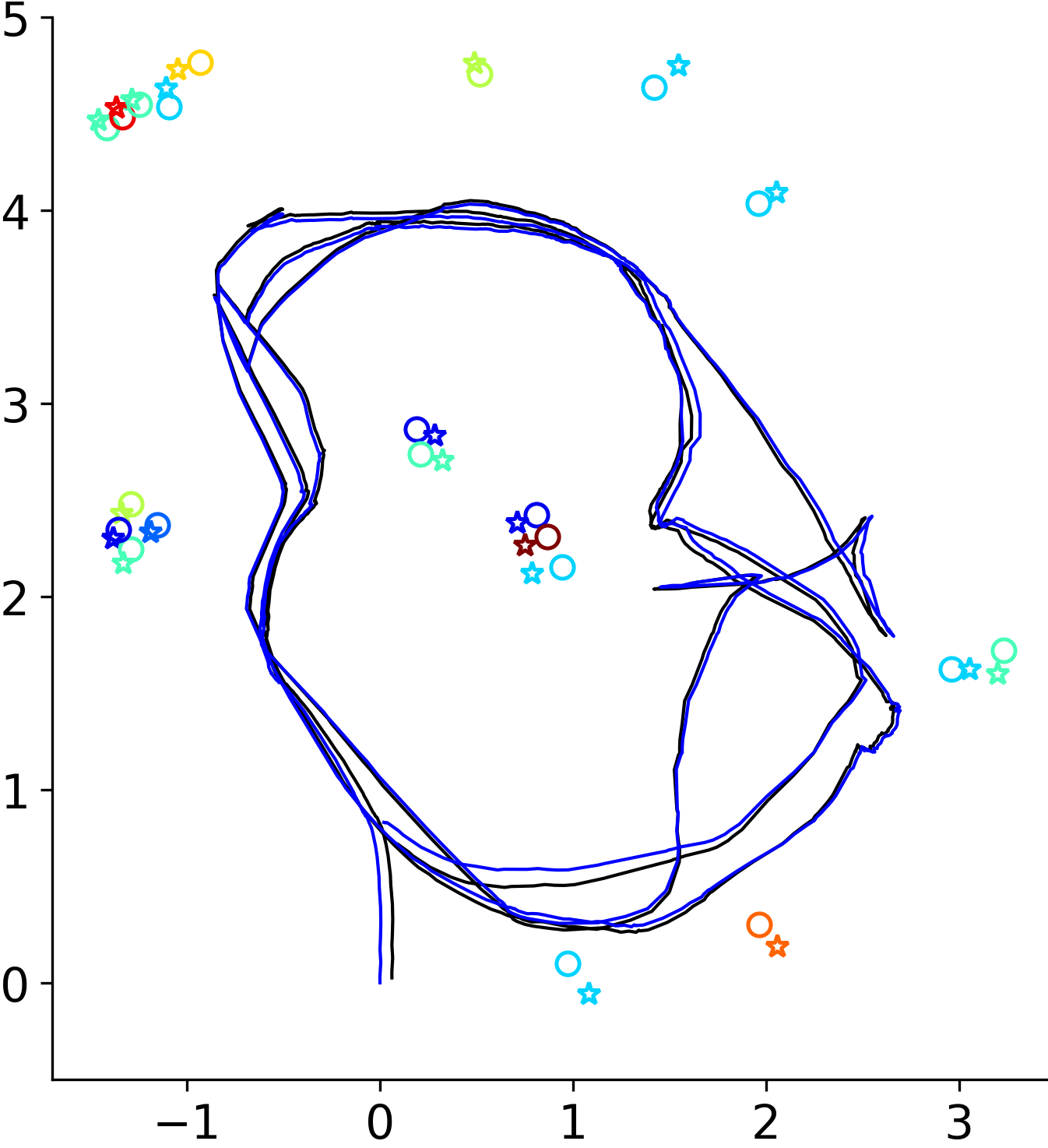}
    \end{subfigure}
    \begin{subfigure}{0.19\linewidth}
        \includegraphics[width=1.0\columnwidth]{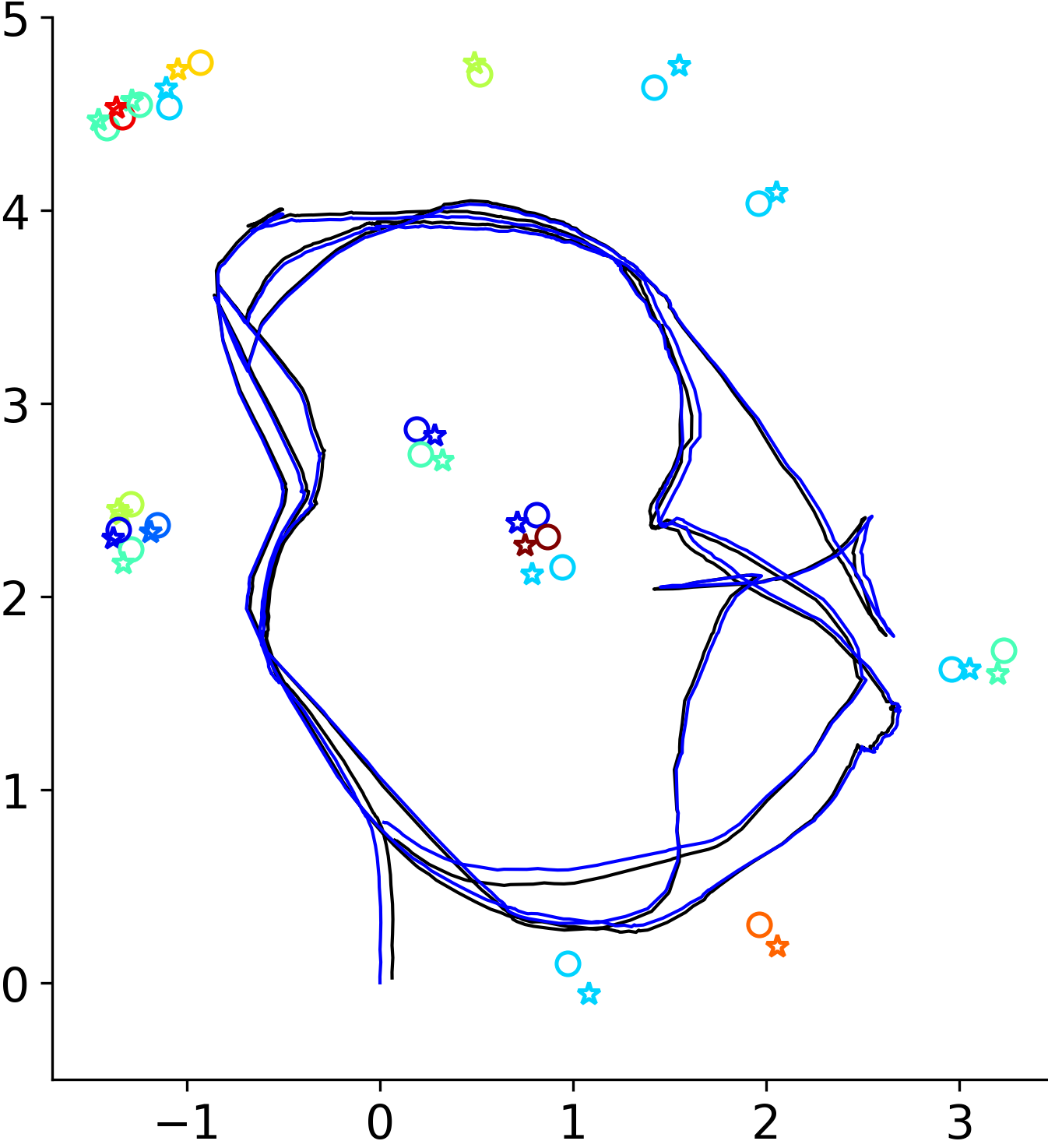}
    \end{subfigure}
    \par\smallskip
    \begin{subfigure}{0.19\linewidth}
        \includegraphics[width=1.0\columnwidth]{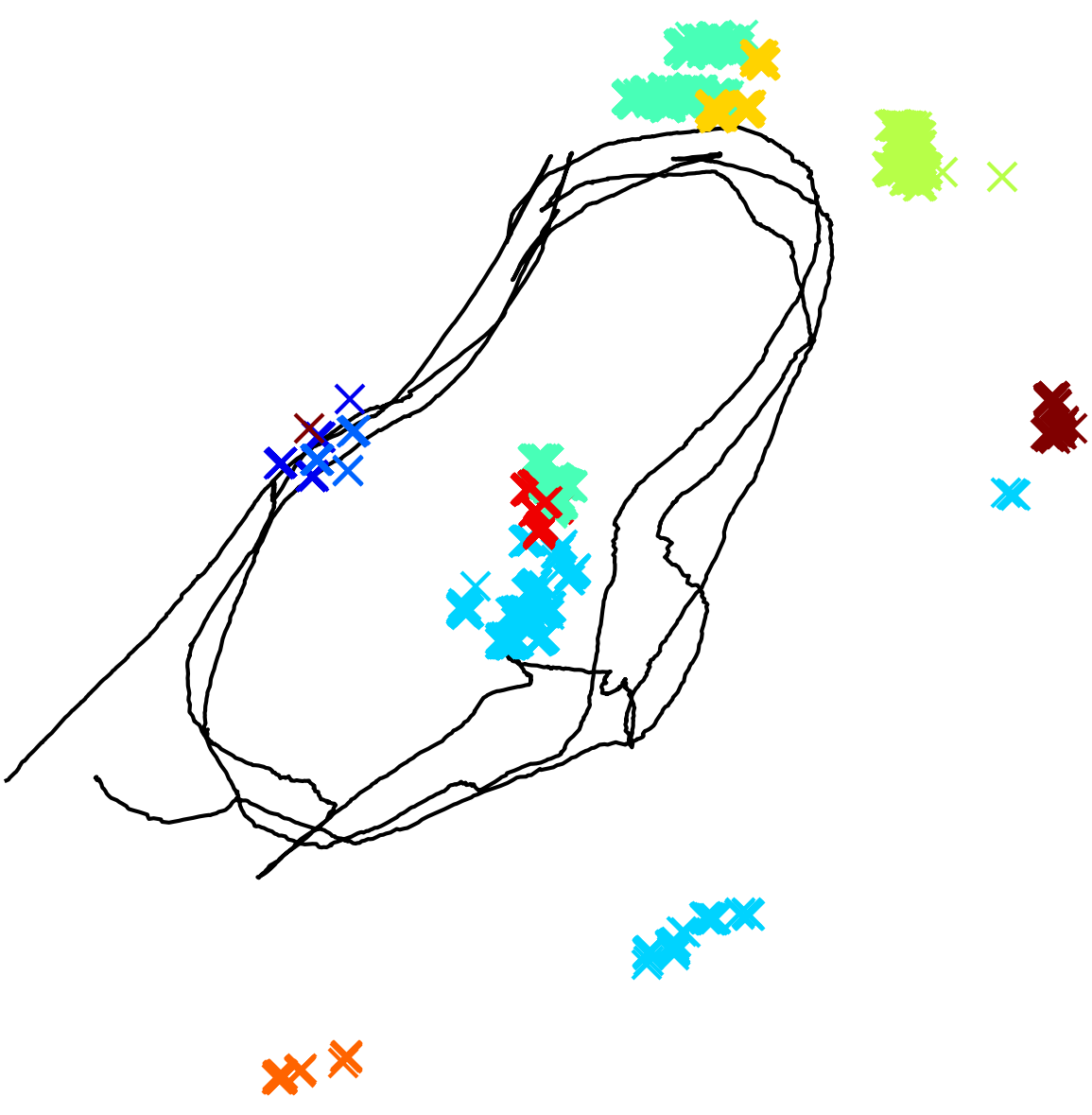}
        \caption{Raw data}
    \end{subfigure}
    \hspace{0.5em}
    \begin{subfigure}{0.19\linewidth}
        \includegraphics[width=1.0\columnwidth]{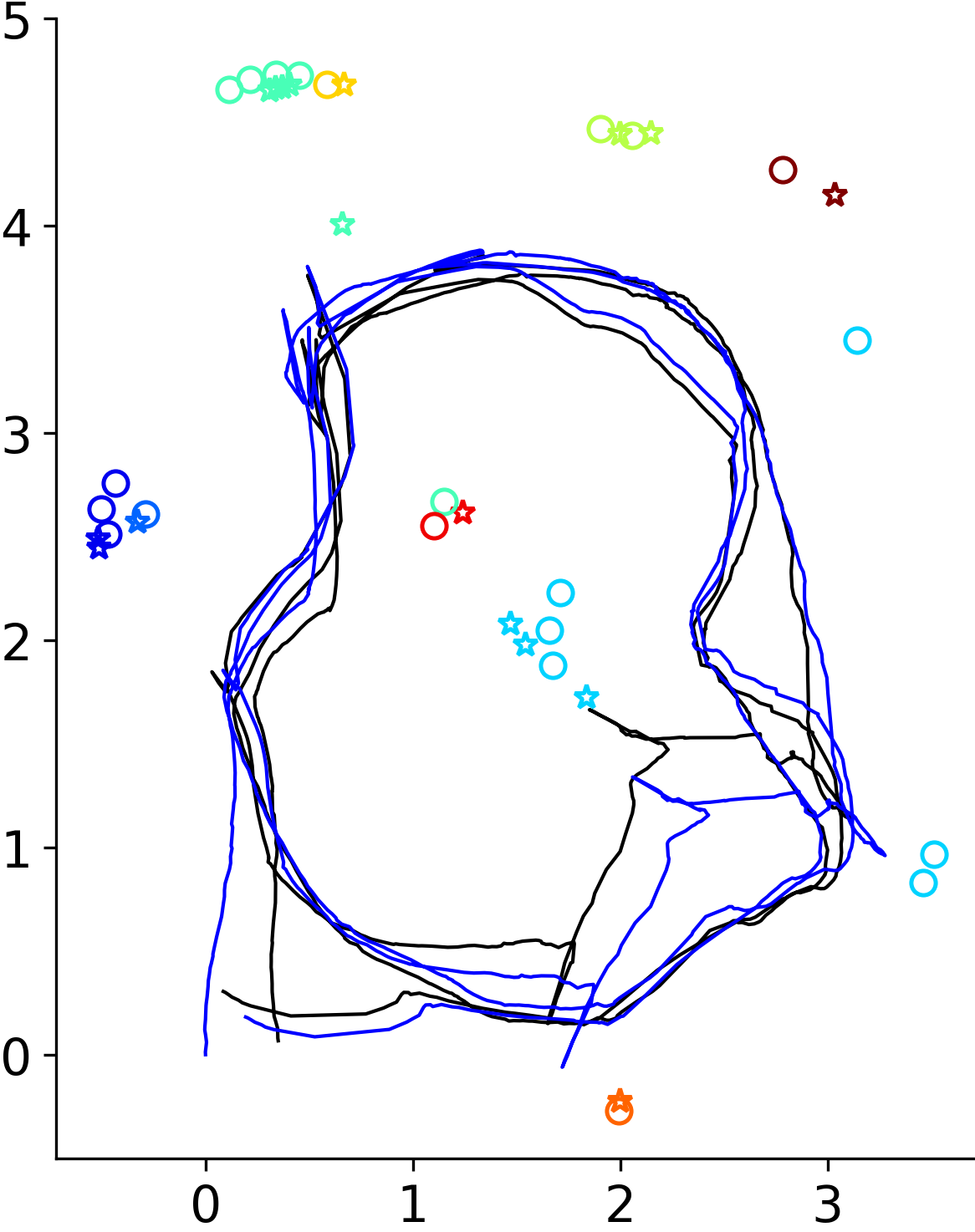}
        \caption{Object-Centric SLAM \cite{singh2024opti}}
    \end{subfigure}
    \begin{subfigure}{0.19\linewidth}
        \includegraphics[width=1.0\columnwidth]{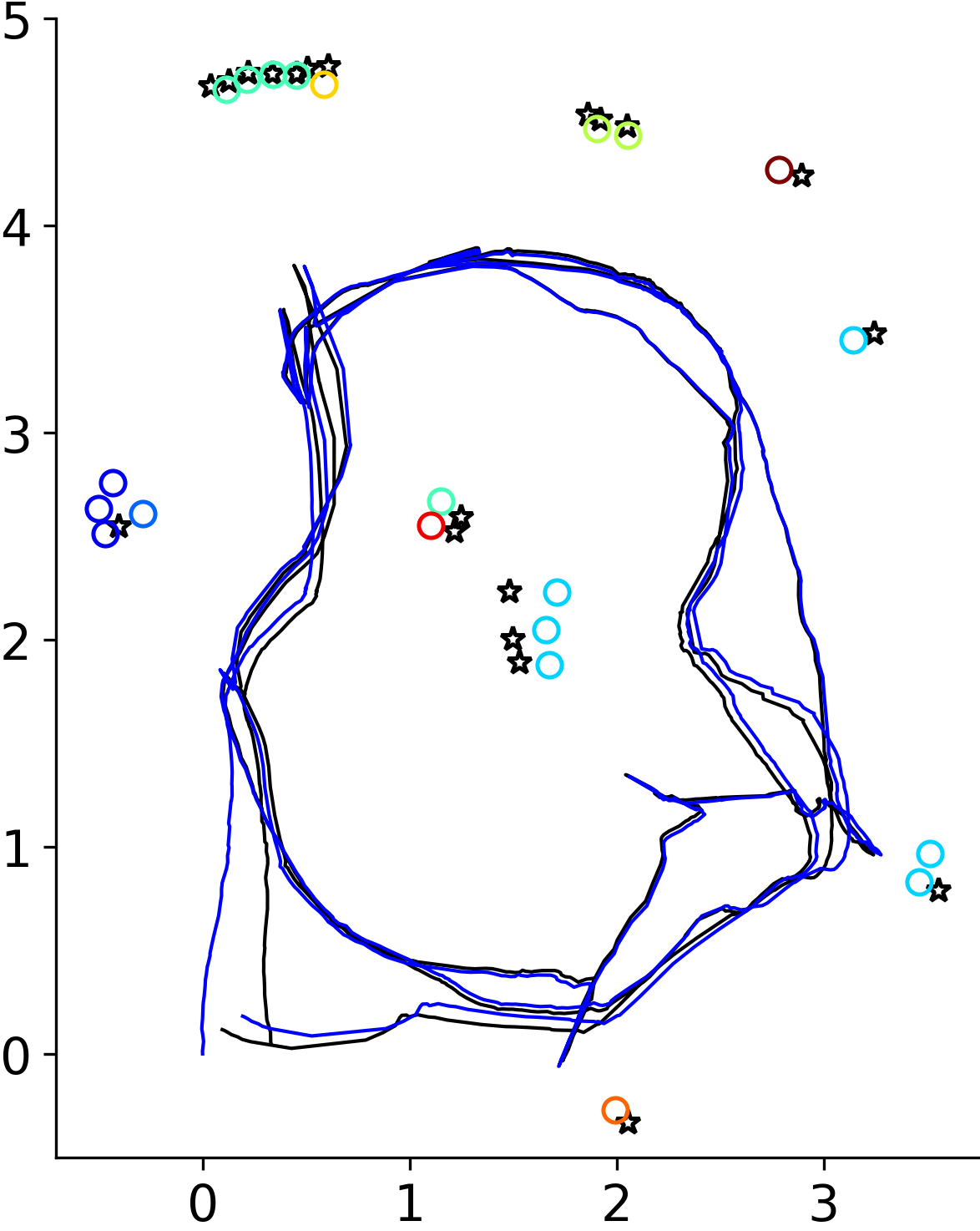}
        \caption{\algName\ \cite{zhang2023data}}
    \end{subfigure}
    \begin{subfigure}{0.19\linewidth}
        \includegraphics[width=1.0\columnwidth]{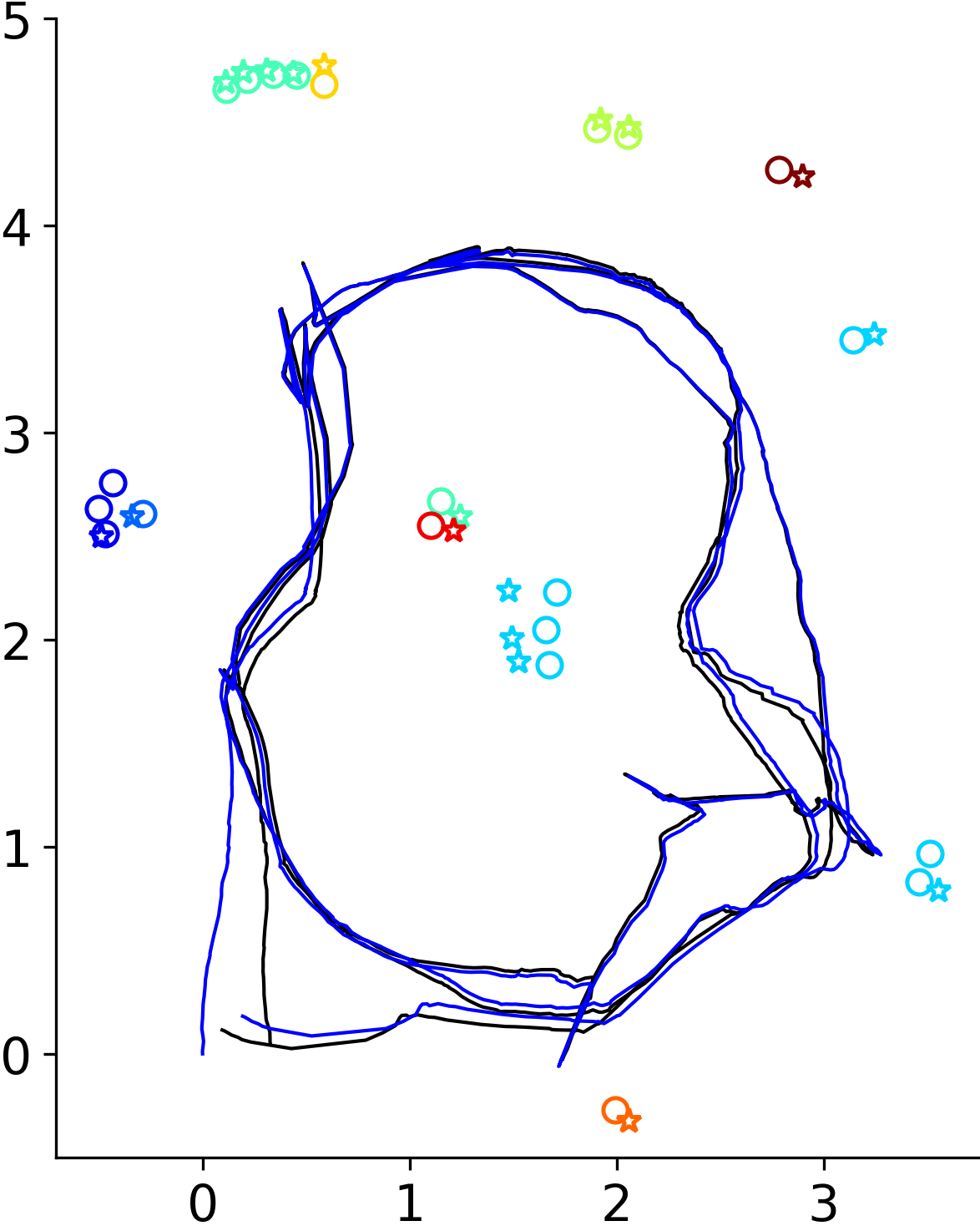}
        \caption{\algNameDNC}
    \end{subfigure}
    \begin{subfigure}{0.19\linewidth}
        \includegraphics[width=1.0\columnwidth]{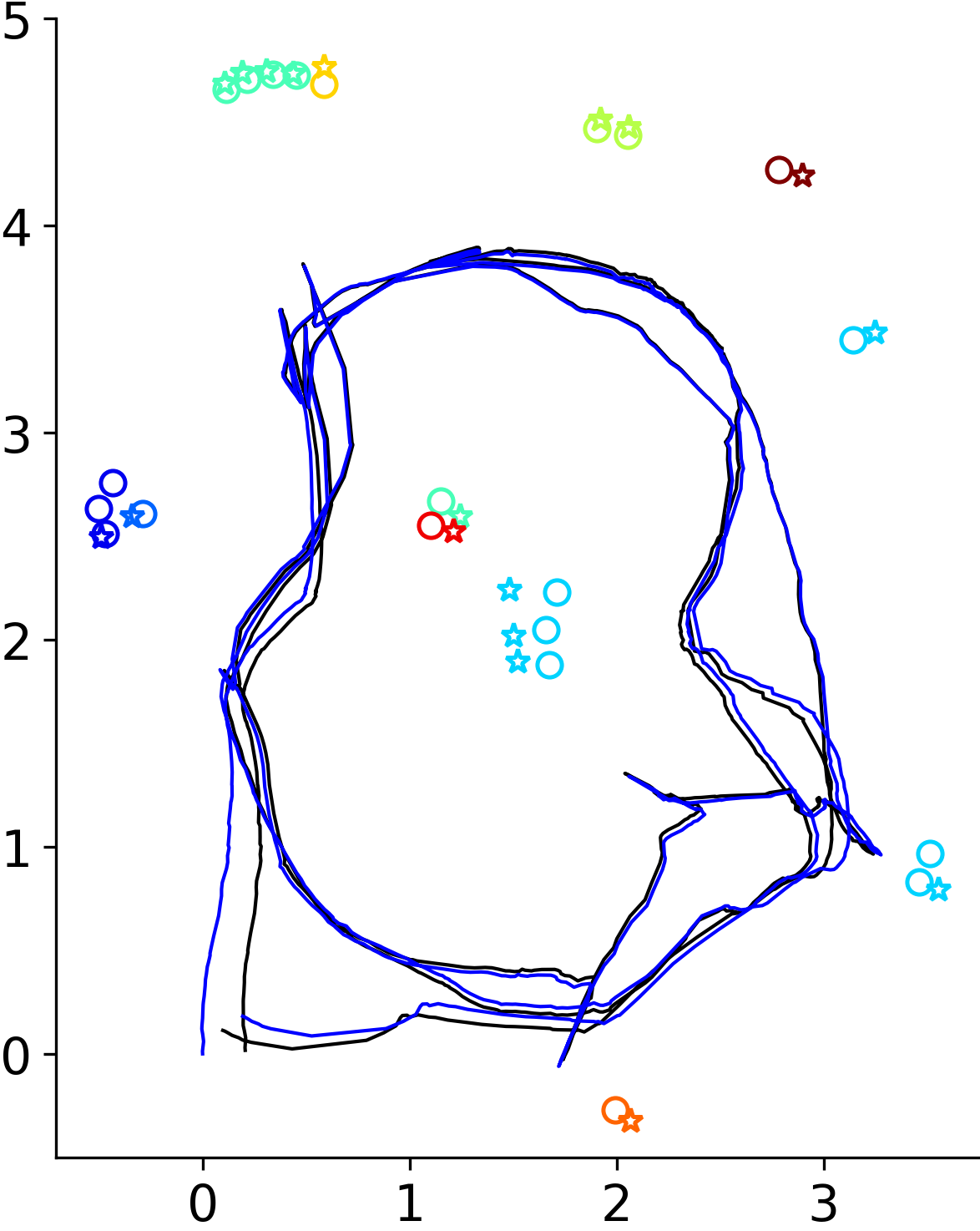}
        \caption{\algNameLL}
    \end{subfigure}
\caption{Visualization of the kitchen results. Circles indicate pseudo-ground-truth landmark positions (which are slightly offset from the actual landmark positions). Crosses represent landmark measurements. Stars indicate estimated landmark positions. The colors of landmarks denote their class labels (if applicable). The blue and black trajectories are the pseudo-ground-truth and estimated trajectories, respectively. Top: Diff Class. Bottom: Same Class.}
\label{fig: kitchen}
\end{figure*}

\subsection{Pool Dataset with Real-Valued Feature Vectors} \label{sec: pool}
The kitchen dataset evaluates our algorithm with one-hot feature vectors from a closed-set object detector, where landmark classes are predefined and semantically distinct by construction. To evaluate our algorithms under the more challenging open-set setting, where object classes are unknown and real-valued feature vectors are required, we use the indoor pool dataset from~\cite{singh2024opti}. This dataset was collected with a BlueROV2 equipped with a monocular camera and an Oculus m1200d multibeam sonar while navigating an indoor pool that contains diverse objects (e.g., lobster cages, seaweed, tires, etc.) (Fig. \ref{fig: marine_env}). To represent the semantic identities of the objects, DINO features~\cite{caron2021emerging}, extracted via unsupervised segmentation, are used instead of one-hot feature vectors, yielding real-valued descriptors for each detected object. Since ground-truth trajectory data is not available for this dataset, we evaluate the results through qualitative visualization of the estimated maps.
\begin{figure}[t]
\vspace{1.0em} % adjust margin
\centering
    \begin{subfigure}{1.0\linewidth}
        \centering
        \includegraphics[width=1.0\linewidth]{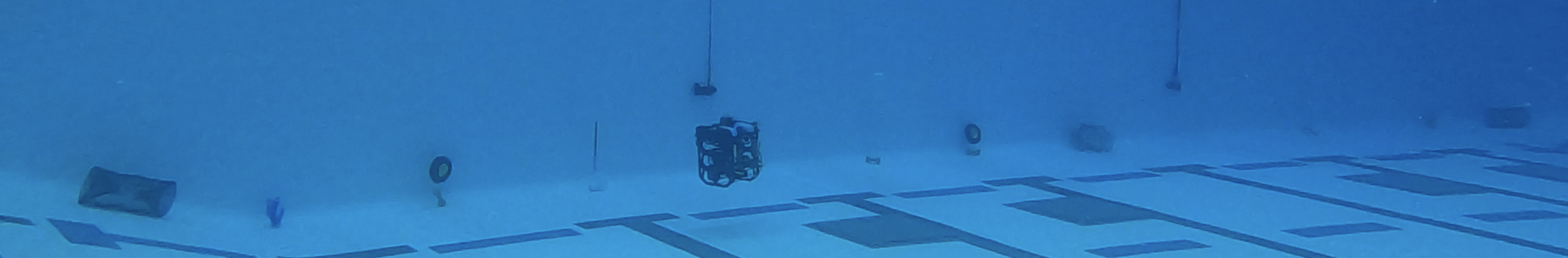}
        \caption{Pool dataset setup}
        \label{fig: marine_env}
    \end{subfigure}
    \begin{subfigure}{1.0\linewidth}
        \centering
        \scalebox{0.48}{\input{includes/figures/marine/paper_marine_init_annotated}}
        \caption{Raw data of odometry and landmark measurements}
        \label{fig: marine_init}
    \end{subfigure}
    \par\smallskip
    \begin{subfigure}{1.0\linewidth}
        \centering
        \includegraphics[width=1.0\linewidth]{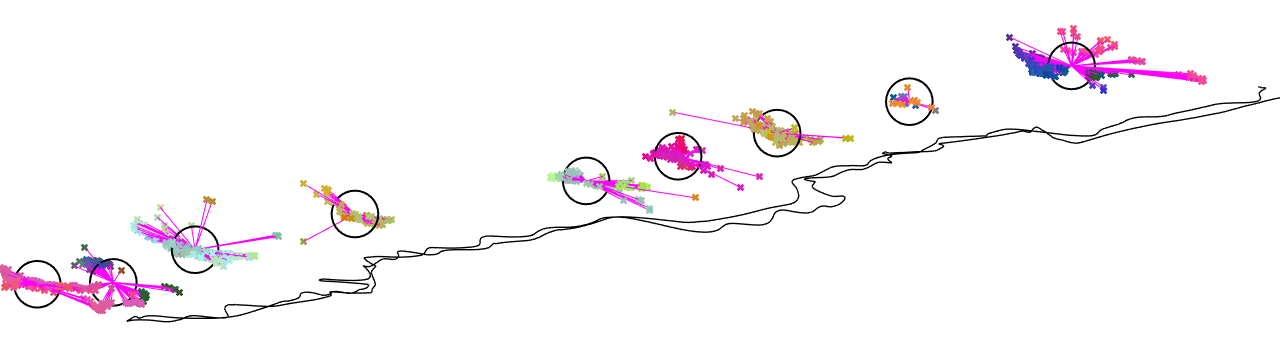}
        \caption{\algName\ \cite{zhang2023data}}
        \label{fig: marine_kslam}
    \end{subfigure}
    \par\smallskip
    \begin{subfigure}{1.0\linewidth}
        \centering
        \includegraphics[width=1.0\linewidth]{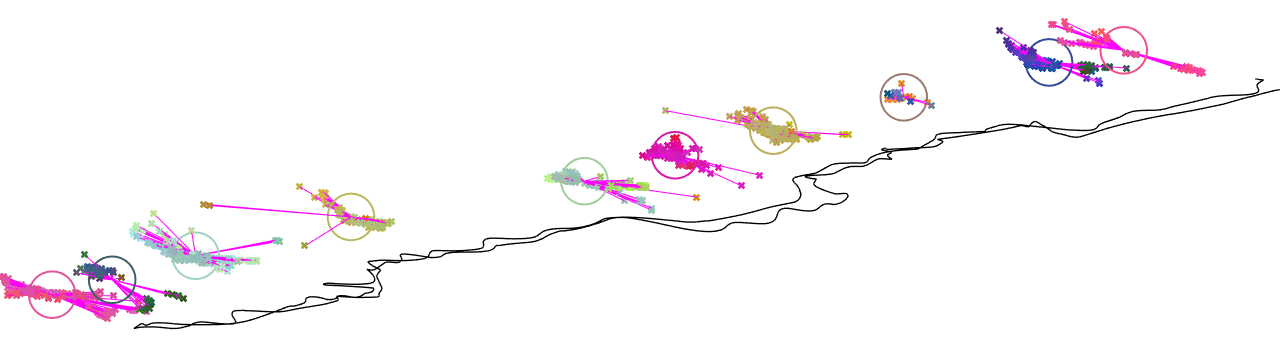}
        \caption{\algNameLL}
        \label{fig: marine_illskslam}
    \end{subfigure}
\caption{Pool dataset results. The high-dimensional feature-vector measurements are reduced to three-dimensional vectors using PCA and mapped to RGB colors for visualization. The raw data has a few serious errors indicated by the red circles in Fig. \ref{fig: marine_init}. In the \algName\ and \algNameLL\ results, the magenta lines represent the estimated data associations, and the colored circles indicate the estimated landmark positions and semantics.}
\label{fig: marine}
\vspace{-0.7em} % adjust margin
\end{figure}

In Fig. \ref{fig: marine}, our proposed \algNameLL\ is compared with the batch-processing \algName\ \cite{zhang2023data}, but readers can also refer to \cite{singh2024opti} for additional comparison with their results. The heuristic in Section \ref{sec: optimcond} is used to set $\beta$ since the landmarks in this dataset are well-separated.

In Fig. \ref{fig: marine_init}, the raw data contains several errors, particularly for small and hard-to-see objects such as the hand net and the pipes. Both \algName\ and \algNameLL\ merge these erroneous measurements into nearby landmarks. We hypothesize that the outer algorithm \eqref{eq: outer} decides not to introduce more landmarks for these sporadic outliers because the objective value reduction in $\fslamstar{K}$ for doing so does not outweigh the penalty $\beta K$ in \eqref{eq: outer}; thus, the inner algorithm \eqref{eq: seminner} is forced to assign them incorrectly to nearby landmarks.

Owing to the noisy depth measurements from the sonar and accumulated odometry drift over time, when the robot loops back and re-observes a previous landmark (e.g., the seaweed and cage on the right side in Fig. \ref{fig: marine_init}), its measurements projected from the raw odometry can be quite far away from its measurements in the first pass, making them difficult to associate correctly. In Fig. \ref{fig: marine_kslam}, \algName\ erroneously merges the cage and seaweed together and makes similar mistakes elsewhere (e.g., between the cage and seaweed on the left side). However, \algNameLL\ correctly associates the measurements of these objects by leveraging the semi-incremental scheme (Section \ref{sec: inc}) to reduce drift and semantic feature vectors to better distinguish between objects. We observe that the semantic information is critical here for data association, as the positional measurements are too noisy. % A relatively large semantic weight ($\frac{\sigma}{\sigma_s}w_s$) is used in this case, and otherwise, the result would degrade back to the \algName\ result.

\section{Conclusion}
We extended the DAF-SLAM framework \cite{zhang2023data} and developed the \algNameDNC\ and \algNameLL\ algorithms, with \algNameLL\ performing slightly better than \algNameDNC. The extended framework and its algorithms additionally process and estimate semantic vectors of object landmarks, operate in a semi-incremental manner to reduce drift, and provide principled guidelines and heuristics for setting a key parameter. They outperform the original \algName\ algorithm within the DAF-SLAM framework as well as several strong baselines on both synthetic and real-world datasets. An important future direction is handling both positional (e.g., spurious depths) and semantic (e.g., false positives) outlier measurements, potentially by modeling outliers in the generative models and using robust optimization objectives. Another future direction is incorporating shape estimation to enable a synergy between shape estimation and data association, and move beyond point-landmark representations.
% Outlier handling (outlier positional measurements), pattern for low ldmk inter-spacing (done), shape estimation to improve da

\bibliographystyle{IEEEtran}
\bibliography{isDAF-SLAM}

\newpage
\clearpage
\alt{\appendix}{\section*{\textbf{Technical Report on Additional Technical Details}}}
\section*{Assumptions in Deriving the Upper Bound for $\beta_\text{lb}$}
Several assumptions used to reach \eqref{eq: betalbub} are briefly touched on in Section \ref{sec: optimcond}. We make them more explicit in this section.

First, $K^\star$ is defined to be the ground-truth number of landmarks. However, the optimal configuration $\fslamstar{K^\star}$ may not contain all ground-truth associations if our objective function \eqref{eq: inner} does not model the real world accurately. Assuming $\fslamstar{K^\star}$ contains all correct associations makes our reasoning much easier because we no longer need to consider the case where $\fslamstar{K^\star}$ already includes mis-associations. The validity of this assumption lies in the soundness of the modeling choices (i.e., Gaussian noise and the max-mixture model for data association) and is beyond the scope of this paper.

Second, the validity of \eqref{eq: betalbub} assumes that $\fslamstar{K^\star + 1}$ also does not contain incorrect associations. In other words, the extra landmark would only split the measurements associated with a landmark in $\fslamstar{K^\star}$. This assumption is consistent with our working regime of large inter-landmark spacing relative to the landmark measurement noise, because any mis-association in this regime would introduce large residuals, which are avoided by the optimization. As illustrated in Fig. \ref{fig: betalb_good}, the added landmark splits the two measurements associated with the left landmark in Fig. \ref{fig: betalb_orig}, which is possible under our assumption. However, the added landmark instead merges two measurements previously associated with two different landmarks in Fig. \ref{fig: betalb_bad}, which is not possible under our assumption. When the inter-landmark spacing is small relative to the landmark measurement noise, this assumption starts to break down (Fig. \ref{fig: betalb_except}), but this also falls outside of the working regime of our heuristic. 
\begin{figure}[h]
    \centering
    \begin{subfigure}{0.4\linewidth}
        \centering
        \includegraphics[width=1.0\linewidth]{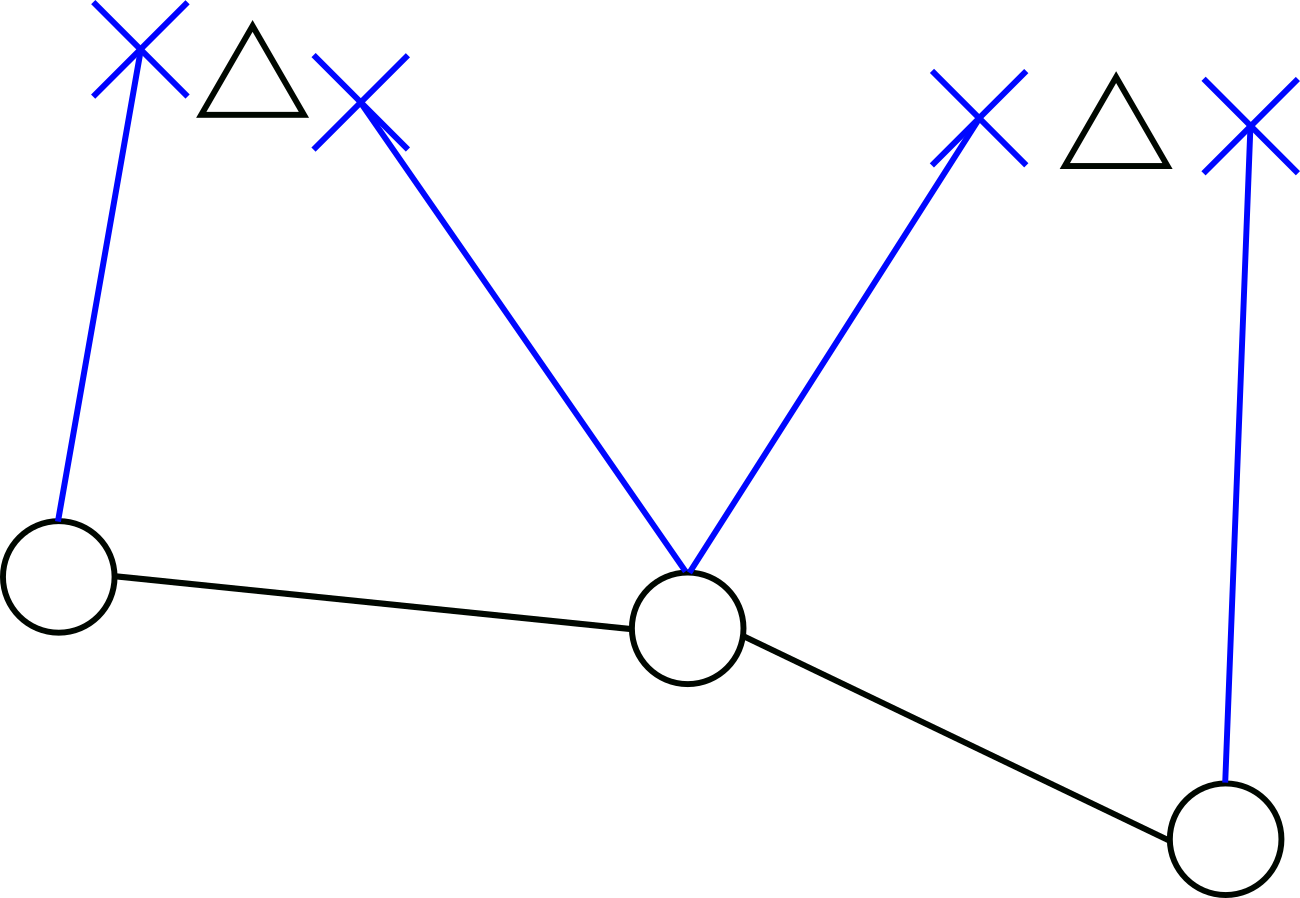}
        \caption{Configuration for $\fslamstar{K^\star}$.}
        \label{fig: betalb_orig}
    \end{subfigure}
    \par\smallskip
    \begin{subfigure}{0.4\linewidth}
        % \centering
        \includegraphics[width=1.0\linewidth]{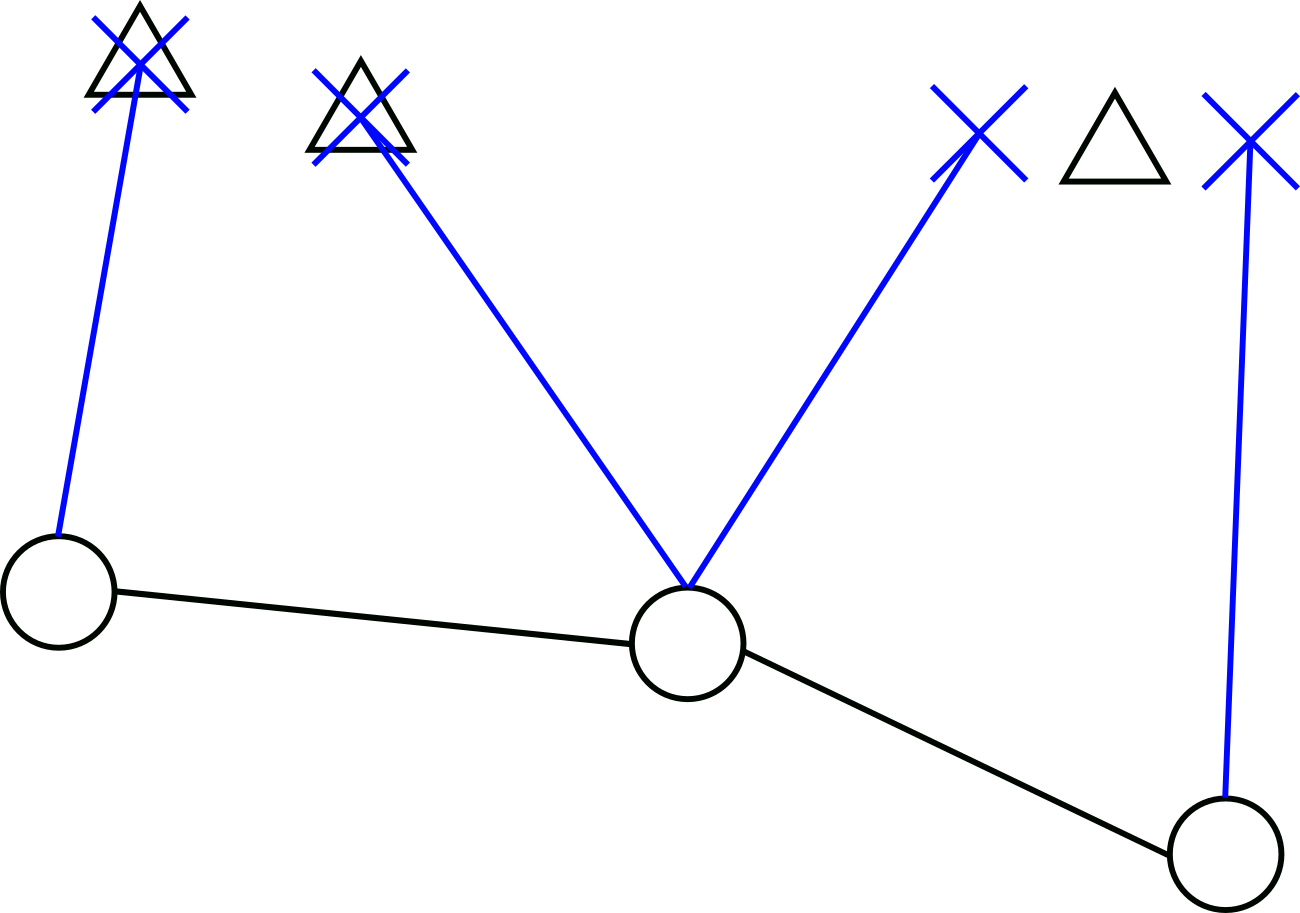}
        \caption{Possible $\fslamstar{K^\star + 1}$.}
        \label{fig: betalb_good}
    \end{subfigure}
    % \par\smallskip
    \hfill
    \begin{subfigure}{0.4\linewidth}
        % \centering
        \includegraphics[width=1.0\linewidth]{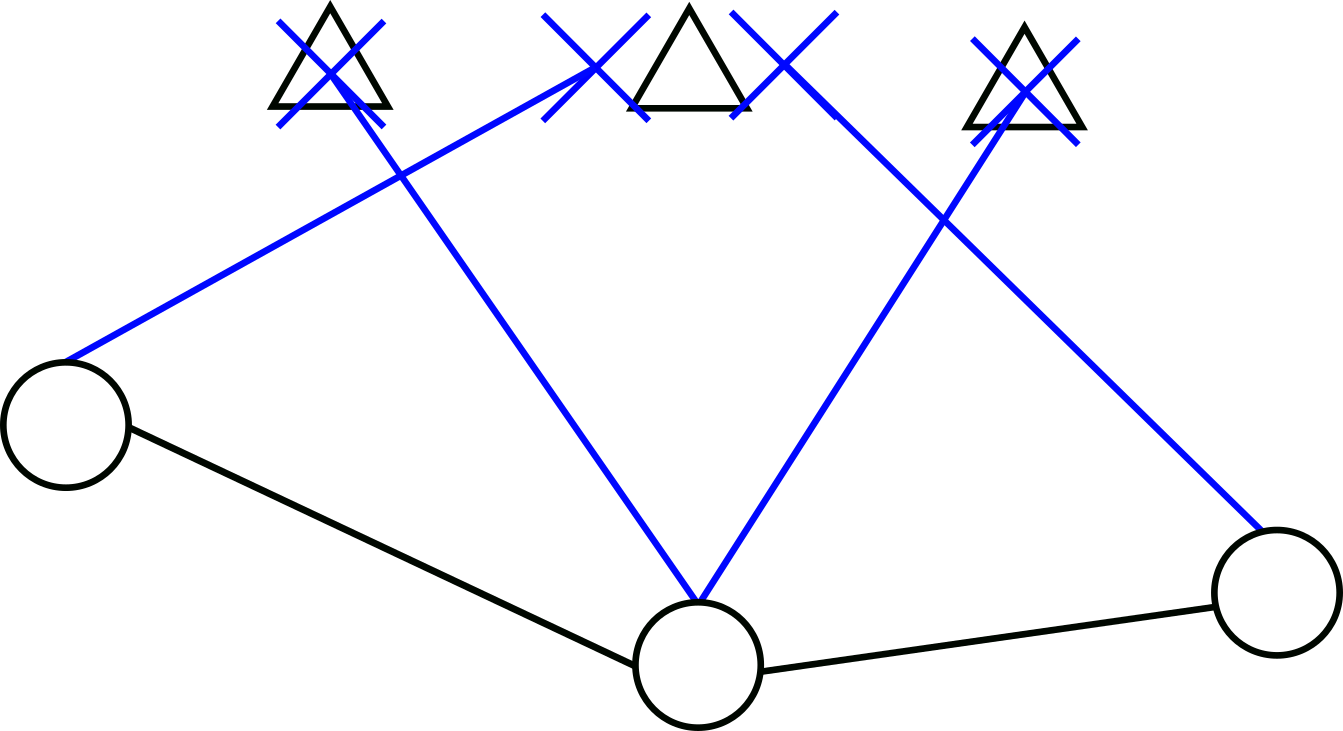}
        \caption{Impossible $\fslamstar{K^\star + 1}$.}
        \label{fig: betalb_bad}
    \end{subfigure}
\caption{Our assumption about the possible configuration for $\fslamstar{K^\star + 1}$. The circles are robot poses. The triangles are landmarks. The crosses are measurements of landmarks.}
\label{fig: betalb}
\end{figure}

\begin{figure}[h]
    \centering
    \begin{subfigure}{0.48\linewidth}
        \centering
        \includegraphics[width=1.0\linewidth]{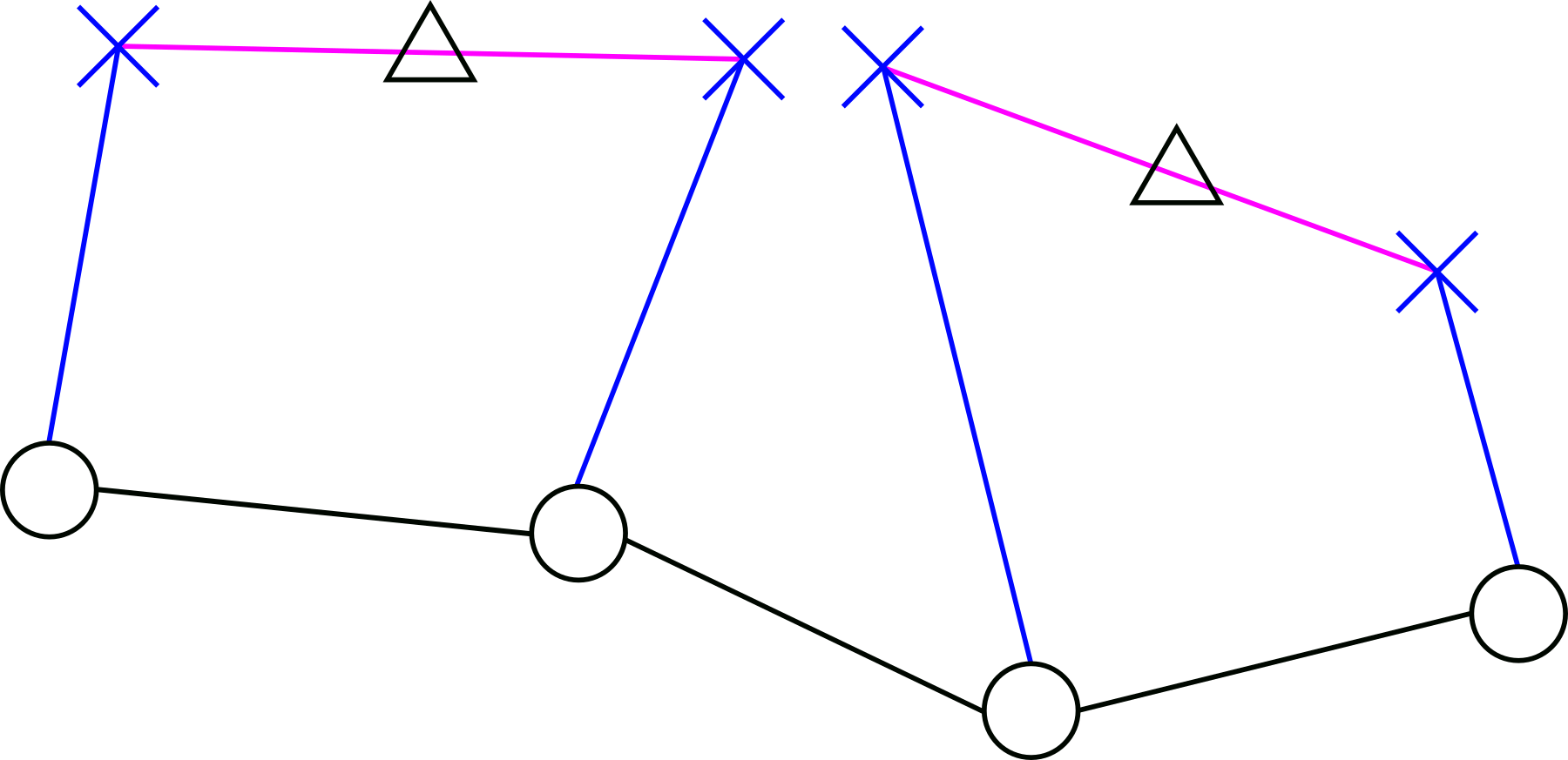}
        \caption{$\fslamstar{K^\star}$.}
        \label{fig: betalb_except1}
    \end{subfigure}
    \hfill
    % \par\smallskip
    \begin{subfigure}{0.48\linewidth}
        \centering
        \includegraphics[width=1.0\linewidth]{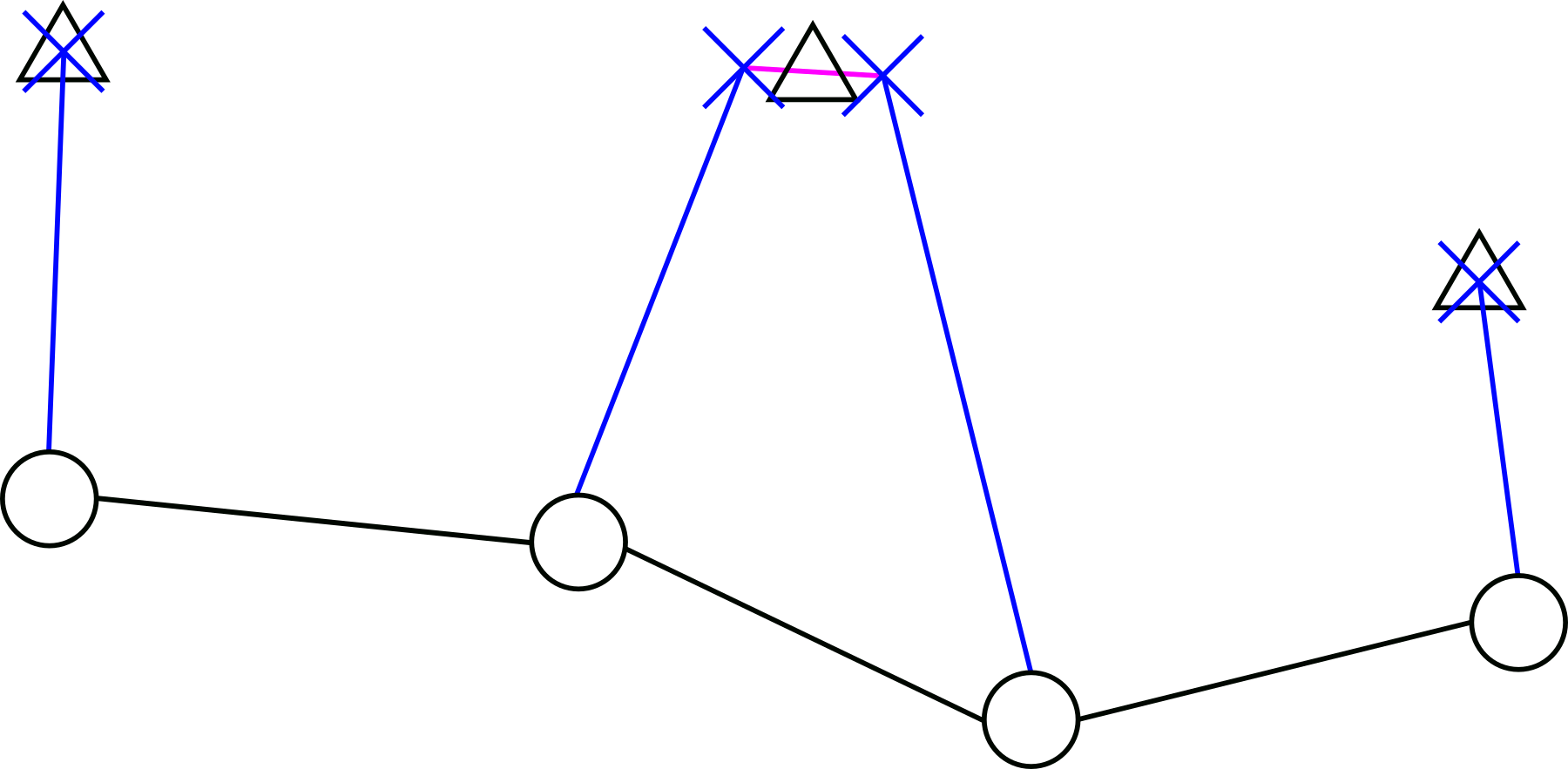}
        \caption{$\fslamstar{K^\star + 1}$.}
        \label{fig: betalb_except2}
    \end{subfigure}
\caption{An imaginary example where our assumption breaks. The magenta lines indicate data associations. The added landmark in \ref{fig: betalb_except2} merges two measurements previously associated with two different landmarks.}
\label{fig: betalb_except}
\end{figure}

Third, while not emphasized in the main paper, the odometry noise has to be moderate. In general, we find that increasing odometry noise has the opposite effect of increasing landmark measurement noise. In other words, it tends to increase $\beta_\text{lb}$ and $\beta_\text{ub}$ (Fig. \ref{fig: ksweep}) so $\beta$ is more likely to fall below $\beta_\text{lb}$, causing overestimation of $K$. We hypothesize two possible mechanisms. (i) The SLAM optimization only cares about the relative weighting (or relative noise magnitude) between the odometry term and the landmark measurement term, because scaling the objective function \eqref{eq: inner} up or down does not change the result. Therefore, increasing odometry noise can be viewed as increasing the weighting of the landmark measurement term. The non-Gaussian residual increase and reduction for $\fslamstar{K^\star - 1}$ and $\fslamstar{K^\star + 1}$ relative to $\fslamstar{K^\star}$ are thus magnified, and $\beta_\text{lb}$ and $\beta_\text{ub}$ increase. (ii) Large odometry noise creates more trajectory drift, and landmark measurements projected from the trajectory (i.e., the robot poses) have greater dispersion than that implied by the nominal landmark measurement noise. Without known data associations, the larger spread shifts the residuals at each $K$ in $\fslamstar{K}$ upward.

Fourth, when adding a landmark, besides the direct residual reduction from the landmark measurement terms, there is a nonlinear relaxation effect on the residuals of the odometry measurements and other landmark measurements, as the trajectory becomes less constrained. We assume this effect is negligible compared to the drop in the landmark measurement residuals. In practice, we use a large probability value for our $\chi^2$ heuristic so that $\beta$ is considerably larger than the right-hand side of \eqref{eq: betalbub} to compensate for the unaccounted-for relaxation effect.

Under the assumptions made above, the best possible outcome of splitting the measurements associated with one landmark across two landmarks is to drive the residuals for all the measurements associated with that landmark to zero. The exception in Fig. \ref{fig: betalb_except} is a violation, since the residuals for all the measurements associated with the two landmarks are nearly driven to zero in Fig. \ref{fig: betalb_except2}, which is even better than the outcome predicted by our upper bound \eqref{eq: betalbub}.

\section*{Low Inter-Landmark Spacing Scenario}
In Section \ref{sec: optimcond}, we mention that when the inter-landmark spacing is small relative to the landmark measurement noise, $\beta$ is set empirically because several assumptions underlying the $\beta$ heuristic no longer hold. However, useful guidelines and a reasonable starting point still exist. We discuss them in this section.

$\beta_\text{ub}$ is, in general, difficult to quantify since it depends on the environment-dependent mis-association errors. We thus focus on $\beta_\text{lb}$. It is easy to show, without additional assumptions, that
\begin{equation}
    \beta_\text{lb} \ge \max_k{\|{R^\star_{i_k}}\tran(y^\star_{j^\star_k} - t^\star_{i_k}) - \zbar_k\|_{\Sigma}^2}
    \label{eq: betalblb}
\end{equation}
because the one extra landmark can always be associated with a single measurement to drive its residual to zero. The right-hand side of \eqref{eq: betalblb} follows a $\chi^2$ distribution with degrees of freedom $d$. We can use the inverse $\chi^2$ distribution and a large probability value to compute a conservative value for the right-hand side of \eqref{eq: betalblb}, and $\beta$ needs to be greater than this value according to \eqref{eq: betacond}.

As discussed in Section \ref{sec: optimcond}, $\beta_\text{ub}$ is small in the regime of low inter-landmark spacing, and thus, we would like a value smaller than the right-hand side of \eqref{eq: betalbub} to set $\beta$. In this regime, $\fslamstar{K^\star + 1}$ may contain mis-associations such as the scenario in Fig. \ref{fig: betalb_except}. However, it is still useful to consider the case where there are no mis-associations and the added landmark only splits the measurements associated with a landmark in $\fslamstar{K^\star}$, which is essentially a restriction of the solution space of $\min{\fslam(K^\star + 1)}$, and the resulting objective value will be larger than $\fslamstar{K^\star + 1}$. While \eqref{eq: betalbub} gives an upper bound on the residual reduction for splitting the measurements, the goal here is to quantify the reduction more exactly to set a smaller $\beta$. We thus model splitting the measurements of a landmark as splitting Gaussian samples. When the number of samples is large (i.e., $|\mathcal{M}(j^\star_k)| \to \infty$), the maximum reduction in the residuals (i.e., analogous to $\fslamstar{K^\star} - \fslamstar{K^\star + 1}$) is approximately $\frac{2|\mathcal{M}(j^\star_k)|}{\pi}$, which can be a starting point for tuning $\beta$. We leave the derivations in the final section.

We summarize the findings here: (i) $\beta$ should be chosen to be greater than the right-hand side of \eqref{eq: betalblb}. (ii) In the modeling above, we have restricted the solution space, and a lower $\fslamstar{K^\star + 1}$ may actually be achievable. In other words, $\beta_\text{lb} = \fslamstar{K^\star} - \fslamstar{K^\star + 1}$ may actually be greater than $\frac{2|\mathcal{M}(j^\star_k)|}{\pi}$, and setting $\beta$ to this value may cause $\beta$ to fall below $\beta_\text{lb}$. (iii) The relaxation effect of adding a landmark on the odometry terms and other measurement terms is not considered. In principle, the relaxation effect would make $\fslamstar{K^\star + 1}$ even smaller, which may cause $\beta = \frac{2|\mathcal{M}(j^\star_k)|}{\pi}$ to fall below $\beta_\text{lb}$. (iv) The result assumes a large number of samples (measurements of a landmark). This is typically true with a high-frame-rate sensor. When this assumption is violated, a lower $\fslamstar{K^\star + 1}$ may be achievable, and $\beta = \frac{2|\mathcal{M}(j^\star_k)|}{\pi}$ may fall below $\beta_\text{lb}$.

\section*{Applicable Ranges of the $\beta$ Heuristics}
It is interesting to examine what the inverse $\chi^2$ heuristic in Section \ref{sec: optimcond} and setting $\beta = \frac{2|\mathcal{M}(j^\star_k)|}{\pi}$ lead to under different inter-landmark spacing (relative to the landmark measurement noise). To this end, we first summarize the properties of our datasets (Section \ref{sec: exp}) in Tables \ref{tab: datasets} and \ref{tab: datasets_more}. The kitchen datasets (i.e., Diff Class and Same Class) clearly have their measurement standard deviation $\sigma$ on the same order of magnitude as the mean and minimum inter-landmark spacing, whereas the other datasets have $\sigma$ orders of magnitude smaller than the spacing. Accordingly, we use $\beta = \frac{2|\mathcal{M}(j^\star_k)|}{\pi}$ for the kitchen datasets (see the actual parameters used in Table \ref{tab: algparams}).
\begin{table*}[h]
\begin{center}
\begin{threeparttable}[b]
% \small % caption font size
\caption{Dataset properties to the best of our estimates.}
\label{tab: datasets}
\begin{tabular}{|l| c c c c c c|}
    \hline
    Dataset & $N$ & $K_\text{gt}$ & $M$ & $|\mathcal{M}(j)|_\text{avg}$ & $\sigma$ & Spacing\tnote{1} (mean, min, max) \\
    \hline
    \hline
    Diff Class & 3719 & 21 & 6552 & 312 & 0.1 & 0.38, 0.10, 1.44 \\
    \hline
    Same Class & 4009 & 21 & 6872 & 327 & 0.1 & 0.31, 0.11, 1.88 \\
    \hline
    Pool & 2335 & 12 (10)\tnote{2} & 1104 & 92 (110)\tnote{2} & 0.3, 0.3, 0.6\tnote{3} & 1.74, 1.17, 2.94 \\
    \hline
    Grid 3D & 216 & 43 & 430 & 10 & 0.05 & 1.06, 0.49, 2.00\tnote{4} \\
    \hline
    Grid 2D & 500 & 100 & 1000 & 10 & 0.05 & 1.33, 0.18, 3.51\tnote{4} \\
    \hline
\end{tabular}
\begin{tablenotes}
    \item [1] Defined as the distance to the closest landmark for every landmark. \\
    \item [2] Two landmarks are almost entirely missing in the measurements (Fig. \ref{fig: marine_init}). \\
    \item [3] In the order of x, y and z axes in the robot camera frame. \\
    \item [4] Example numbers from one simulation instance.
\end{tablenotes}
\end{threeparttable}
\end{center}
\end{table*}

\begin{table*}[h]
\begin{center}
\begin{threeparttable}[b]
% \small % caption font size
\caption{Additional dataset properties to the best of our estimates.}
\label{tab: datasets_more}
\begin{tabular}{|l| c c c c|}
    \hline
    Dataset & Avg. Odom. Std. x y z r p y ($\times 10^{-3}$) & Trajectory Size\tnote{1} & Mislabeling Rate\tnote{2} & Semantic Std. \\
    \hline
    \hline
    Diff Class & 2.5, 1.6, 1.8, 3.6, 4.7, 2.3 & 3.6 $\times$ 4.6 & - & - \\
    \hline
    Same Class & 2.6, 1.7, 1.8, 3.6, 4.7, 2.2 & 3.4 $\times$ 4.6 & - & - \\
    \hline
    Pool & 1.0, 1.0, 1.0, 1.0, 1.0, 1.0 & 20.0 $\times$ 1.9 & - & - \\
    \hline
    Grid 3D & 50, 50, 50, 5.0, 5.0, 5.0 & 6 $\times$ 6 $\times$ 6 & 0.1 & 0.1 \\
    \hline
    Grid 2D & 50, 50, 5.0 & 20 $\times$ 25 & 0.1 & 0.1 \\
    \hline
\end{tabular}
\begin{tablenotes}
    \item [1] If the trajectory is mostly on a principal plane, this is the span of the trajectory on the plane.
    \item [2] Nominal mislabeling rate to simulate an object detector.
\end{tablenotes}
\end{threeparttable}
\end{center}
\end{table*}

We also perform a set of small-scale experiments for simple validation in Table \ref{tab: workingrange}. In Experiments 1 -- 3, we see that $\beta$ has to scale with the number of measurements per landmark ($|\mathcal{M}(j)|$) to estimate $K$ correctly. The $\frac{2 |\mathcal{M}(j)|}{\pi}$ heuristic consistently sets $\beta$ too small, leading to overestimation of $K$ in Experiments 4 -- 6, which is expected because our assumptions used to derive the heuristic tend to underestimate what $\beta$ should ideally be (as explained in the section ``Low Inter-Landmark Spacing Scenario").

In Experiments 7 -- 14, as $\sigma$ approaches the same order of magnitude as the mean inter-landmark spacing, the inverse $\chi^2$ heuristic starts to underestimate $K$ because of the drop in $\beta_\text{ub}$ (Section \ref{sec: optimcond}), while the $\frac{2 |\mathcal{M}(j)|}{\pi}$ heuristic begins to estimate $K$ correctly. In Experiments 10 and 14, $\sigma = 1.5$ is considered extremely large, but the $\frac{2 |\mathcal{M}(j)|}{\pi}$ heuristic manages to correctly estimate $K$.
\begin{table*}[h]
\begin{center}
\begin{threeparttable}[b]
% \small % caption font size
\caption{Results using the two $\beta$ heuristics on a small-scale 3 $\times$ 2 $\times$ 3 simulated 3D Grid dataset (18 poses).}
\label{tab: workingrange}
\begin{tabular}{|l|c c c c c c c|}
    \hline
    Exp. & $K_\text{gt}$ & $|\mathcal{M}(j)|$ & $\sigma$ & Spacing\tnote{1} (mean, min, max) & Heuristic & $\beta$ & $K_\text{est}$ \\
    \hline
    \hline
    1 & 3 & 100 & 0.05 & 1.26, 1.16, 1.47 & $(\chi^2)^{-1}(0.999, 300)$ & 381.4 & 3 (0) \\
    \hline
    2 & 3 & 500 & 0.05 & 0.94, 0.43, 1.96 & $(\chi^2)^{-1}(0.999, 1500)$ & 1675 & 3 (0) \\
    \hline
    3 & 3 & 800 & 0.05 & 1.03, 0.91, 1.26 & $(\chi^2)^{-1}(0.999, 2400)$ & 2620 & 3 (0) \\
    \hline
    4 & 3 & 100 & 0.05 & 1.26, 1.16, 1.47 & $\frac{2 \times 100}{\pi}$ & 63.7 & 6 (+3) \\
    \hline
    5 & 3 & 500 & 0.05 & 0.94, 0.43, 1.96 & $\frac{2 \times 500}{\pi}$ & 318.3 & 4 (+1) \\
    \hline
    6 & 3 & 800 & 0.05 & 1.03, 0.91, 1.26 & $\frac{2 \times 800}{\pi}$ & 509.3 &  4 (+1) \\
    \hline
    \hline
    7 & 10 & 500 & 0.1 & 0.57, 0.22, 1.18 & $(\chi^2)^{-1}(0.999, 1500)$ & 1675 & 9 (-1) \\
    \hline
    8 & 10 & 500 & 0.5 & 0.57, 0.24, 1.22 & $(\chi^2)^{-1}(0.999, 1500)$ & 1675 & 4 (-6) \\
    \hline
    9 & 10 & 500 & 0.8 & 0.57, 0.25, 1.24 & $(\chi^2)^{-1}(0.999, 1500)$ & 1675 & 4 (-6) \\
    \hline
    10 & 10 & 500 & 1.5 & 0.48, 0.26, 1.11 & $(\chi^2)^{-1}(0.999, 1500)$ & 1675 & 3 (-7) \\
    \hline
    11 & 10 & 500 & 0.1 & 0.57, 0.22, 1.18 & $\frac{2 \times 500}{\pi}$ & 318.3 & 18 (+8) \\
    \hline
    12 & 10 & 500 & 0.5 & 0.57, 0.24, 1.22 & $\frac{2 \times 500}{\pi}$ & 318.3 & 14 (+4) \\
    \hline
    13 & 10 & 500 & 0.8 & 0.57, 0.25, 1.24 & $\frac{2 \times 500}{\pi}$ & 318.3 & 11 (+1) \\
    \hline
    14 & 10 & 500 & 1.5 & 0.48, 0.26, 1.11 & $\frac{2 \times 500}{\pi}$ & 318.3 & 10 (0) \\
    \hline
\end{tabular}
\begin{tablenotes}
    \item [1] Distance to the closest landmark for every landmark, calculated from the estimated landmark positions given the ground truth data associations, to reflect the configuration of $\fslamstar{K^\star}$.
\end{tablenotes}
\end{threeparttable}
\end{center}
\end{table*}

\section*{Estimating the Number of Measurements per Landmark}
In Section \ref{sec: optimcond}, to compute a heuristic for setting $\beta$, we need to estimate the number of measurements per landmark. We consider a forward-moving model (Fig. \ref{fig: forward_motion}) for the kitchen dataset (Section \ref{sec: kitchen}) and a sideward-moving model (Fig. \ref{fig: sideward_motion}) for the pool dataset (Section \ref{sec: pool}). These are simple 2D models used to simplify the calculations. %but they can be easily extended to 3D conic or pyramid-shaped sensor frustum models.

\begin{figure}[h]
    \centering
    \begin{subfigure}{0.48\linewidth}
        \centering
        \scalebox{0.8}{\tikzset{every picture/.style={line width=0.75pt}} %set default line width to 0.75pt        

\begin{tikzpicture}[x=0.75pt,y=0.75pt,yscale=-1,xscale=1]
%uncomment if require: \path (0,241); %set diagram left start at 0, and has height of 241

%Straight Lines [id:da3577954250780693] 
\draw [color={rgb, 255:red, 155; green, 155; blue, 155 }  ,draw opacity=1 ]   (63,150.36) -- (194.57,150.36) ;
%Shape: Diamond [id:dp6001602912040225] 
\draw  [color={rgb, 255:red, 248; green, 231; blue, 28 }  ,draw opacity=1 ][fill={rgb, 255:red, 248; green, 231; blue, 28 }  ,fill opacity=1 ] (194.53,94.67) -- (205.39,105.53) -- (194.53,116.39) -- (183.67,105.53) -- cycle ;
%Shape: Rectangle [id:dp23588132914443571] 
\draw  [color={rgb, 255:red, 155; green, 155; blue, 155 }  ,draw opacity=1 ][line width=1.5]  (62.2,126) -- (62.2,175.35) -- (34,175.35) -- (34,126) -- cycle ;
%Shape: Triangle [id:dp7241282762785266] 
\draw  [color={rgb, 255:red, 155; green, 155; blue, 155 }  ,draw opacity=1 ] (63,150.36) -- (194.58,70.39) -- (194.58,230.33) -- cycle ;
%Right Arrow [id:dp009770301075520793] 
\draw  [color={rgb, 255:red, 0; green, 0; blue, 0 }  ,draw opacity=1 ][fill={rgb, 255:red, 255; green, 255; blue, 255 }  ,fill opacity=1 ][line width=0.75]  (37.67,114.06) -- (51.02,114.06) -- (51.02,109.67) -- (59.91,114.86) -- (51.02,120.06) -- (51.02,115.67) -- (37.67,115.67) -- cycle ;
%Straight Lines [id:da7406571291239548] 
\draw [color={rgb, 255:red, 0; green, 0; blue, 0 }  ,draw opacity=1 ]   (139.08,105.53) -- (192.53,105.53) ;
\draw [shift={(194.53,105.53)}, rotate = 180] [color={rgb, 255:red, 0; green, 0; blue, 0 }  ,draw opacity=1 ][line width=0.75]    (10.93,-3.29) .. controls (6.95,-1.4) and (3.31,-0.3) .. (0,0) .. controls (3.31,0.3) and (6.95,1.4) .. (10.93,3.29)   ;
\draw [shift={(137.08,105.53)}, rotate = 0] [color={rgb, 255:red, 0; green, 0; blue, 0 }  ,draw opacity=1 ][line width=0.75]    (10.93,-3.29) .. controls (6.95,-1.4) and (3.31,-0.3) .. (0,0) .. controls (3.31,0.3) and (6.95,1.4) .. (10.93,3.29)   ;
%Shape: Arc [id:dp07164211643495355] 
\draw  [draw opacity=0] (89.15,134.4) .. controls (92.32,139.02) and (94.15,144.49) .. (94.15,150.36) .. controls (94.15,156.23) and (92.32,161.7) .. (89.15,166.32) -- (61.5,150.36) -- cycle ; \draw   (89.15,134.4) .. controls (92.32,139.02) and (94.15,144.49) .. (94.15,150.36) .. controls (94.15,156.23) and (92.32,161.7) .. (89.15,166.32) ;  
%Straight Lines [id:da14124474848009094] 
\draw [color={rgb, 255:red, 0; green, 0; blue, 0 }  ,draw opacity=1 ]   (194.53,107.53) -- (194.53,148.03) ;
\draw [shift={(194.53,150.03)}, rotate = 270] [color={rgb, 255:red, 0; green, 0; blue, 0 }  ,draw opacity=1 ][line width=0.75]    (10.93,-3.29) .. controls (6.95,-1.4) and (3.31,-0.3) .. (0,0) .. controls (3.31,0.3) and (6.95,1.4) .. (10.93,3.29)   ;
\draw [shift={(194.53,105.53)}, rotate = 90] [color={rgb, 255:red, 0; green, 0; blue, 0 }  ,draw opacity=1 ][line width=0.75]    (10.93,-3.29) .. controls (6.95,-1.4) and (3.31,-0.3) .. (0,0) .. controls (3.31,0.3) and (6.95,1.4) .. (10.93,3.29)   ;
%Straight Lines [id:da0197994515098594] 
\draw [color={rgb, 255:red, 0; green, 0; blue, 0 }  ,draw opacity=1 ]   (65,150.36) -- (192.57,150.36) ;
\draw [shift={(194.57,150.36)}, rotate = 180] [color={rgb, 255:red, 0; green, 0; blue, 0 }  ,draw opacity=1 ][line width=0.75]    (10.93,-3.29) .. controls (6.95,-1.4) and (3.31,-0.3) .. (0,0) .. controls (3.31,0.3) and (6.95,1.4) .. (10.93,3.29)   ;
\draw [shift={(63,150.36)}, rotate = 0] [color={rgb, 255:red, 0; green, 0; blue, 0 }  ,draw opacity=1 ][line width=0.75]    (10.93,-3.29) .. controls (6.95,-1.4) and (3.31,-0.3) .. (0,0) .. controls (3.31,0.3) and (6.95,1.4) .. (10.93,3.29)   ;

% Text Node
\draw (95,135) node [anchor=north west][inner sep=0.75pt]    {$\mathrm{\theta }$};
% Text Node
\draw (128,137) node [anchor=north west][inner sep=0.75pt]    {$\mathrm{r}$};
% Text Node
\draw (40.25,100) node [anchor=north west][inner sep=0.75pt]  [color={rgb, 255:red, 0; green, 0; blue, 0 }  ,opacity=1 ]  {$\mathrm{v}$};
% Text Node
\draw (161.25,108) node [anchor=north west][inner sep=0.75pt]    {$l$};
% Text Node
\draw (198,123) node [anchor=north west][inner sep=0.75pt]    {$c$};

\end{tikzpicture}}
        \caption{forward-moving model.}
        \label{fig: forward_motion}
    \end{subfigure}
    \hfill
    % \par\smallskip
    \begin{subfigure}{0.48\linewidth}
        \centering
        \scalebox{0.92}{\tikzset{every picture/.style={line width=0.75pt}} %set default line width to 0.75pt        

\begin{tikzpicture}[x=0.75pt,y=0.75pt,yscale=-1,xscale=1]
%uncomment if require: \path (0,300); %set diagram left start at 0, and has height of 300

%Shape: Diamond [id:dp1775737232151594] 
\draw  [color={rgb, 255:red, 248; green, 231; blue, 28 }  ,draw opacity=1 ][fill={rgb, 255:red, 248; green, 231; blue, 28 }  ,fill opacity=1 ] (165.53,106.67) -- (176.39,117.53) -- (165.53,128.39) -- (154.67,117.53) -- cycle ;
%Shape: Rectangle [id:dp1076914589446788] 
\draw  [color={rgb, 255:red, 155; green, 155; blue, 155 }  ,draw opacity=1 ][line width=1.5]  (89.92,185.58) -- (139.28,185.58) -- (139.28,213.78) -- (89.92,213.78) -- cycle ;
%Shape: Triangle [id:dp272588449099198] 
\draw  [color={rgb, 255:red, 155; green, 155; blue, 155 }  ,draw opacity=1 ] (114.67,185.5) -- (42.92,89.97) -- (186.42,89.97) -- cycle ;
%Right Arrow [id:dp22738716554960725] 
\draw  [color={rgb, 255:red, 0; green, 0; blue, 0 }  ,draw opacity=1 ][fill={rgb, 255:red, 255; green, 255; blue, 255 }  ,fill opacity=1 ][line width=0.75]  (151.34,205.81) -- (164.69,205.81) -- (164.69,201.42) -- (173.58,206.61) -- (164.69,211.81) -- (164.69,207.42) -- (151.34,207.42) -- cycle ;
%Straight Lines [id:da29286790387701] 
\draw [color={rgb, 255:red, 0; green, 0; blue, 0 }  ,draw opacity=1 ]   (65.27,117.53) -- (163.53,117.53) ;
\draw [shift={(165.53,117.53)}, rotate = 180] [color={rgb, 255:red, 0; green, 0; blue, 0 }  ,draw opacity=1 ][line width=0.75]    (10.93,-3.29) .. controls (6.95,-1.4) and (3.31,-0.3) .. (0,0) .. controls (3.31,0.3) and (6.95,1.4) .. (10.93,3.29)   ;
\draw [shift={(63.27,117.53)}, rotate = 0] [color={rgb, 255:red, 0; green, 0; blue, 0 }  ,draw opacity=1 ][line width=0.75]    (10.93,-3.29) .. controls (6.95,-1.4) and (3.31,-0.3) .. (0,0) .. controls (3.31,0.3) and (6.95,1.4) .. (10.93,3.29)   ;
%Shape: Arc [id:dp35115706436058425] 
\draw  [draw opacity=0] (98.8,164.21) .. controls (103.42,161.04) and (108.89,159.21) .. (114.76,159.21) .. controls (120.63,159.21) and (126.1,161.04) .. (130.72,164.21) -- (114.76,191.86) -- cycle ; \draw   (98.8,164.21) .. controls (103.42,161.04) and (108.89,159.21) .. (114.76,159.21) .. controls (120.63,159.21) and (126.1,161.04) .. (130.72,164.21) ;  
%Straight Lines [id:da018725019773750184] 
\draw [color={rgb, 255:red, 0; green, 0; blue, 0 }  ,draw opacity=1 ]   (137.03,119.78) -- (137.03,182.55) ;
\draw [shift={(137.03,184.55)}, rotate = 270] [color={rgb, 255:red, 0; green, 0; blue, 0 }  ,draw opacity=1 ][line width=0.75]    (10.93,-3.29) .. controls (6.95,-1.4) and (3.31,-0.3) .. (0,0) .. controls (3.31,0.3) and (6.95,1.4) .. (10.93,3.29)   ;
\draw [shift={(137.03,117.78)}, rotate = 90] [color={rgb, 255:red, 0; green, 0; blue, 0 }  ,draw opacity=1 ][line width=0.75]    (10.93,-3.29) .. controls (6.95,-1.4) and (3.31,-0.3) .. (0,0) .. controls (3.31,0.3) and (6.95,1.4) .. (10.93,3.29)   ;

% Text Node
\draw (153.92,190) node [anchor=north west][inner sep=0.75pt]  [color={rgb, 255:red, 0; green, 0; blue, 0 }  ,opacity=1 ]  {$\mathrm{v}$};
% Text Node
\draw (111,144) node [anchor=north west][inner sep=0.75pt]    {$\mathrm{\theta }$};
% Text Node
\draw (111.18,102) node [anchor=north west][inner sep=0.75pt]    {$l$};
% Text Node
\draw (143,148) node [anchor=north west][inner sep=0.75pt]    {$c$};

\end{tikzpicture}}
        \caption{sideward-moving model.}
        \label{fig: sideward_motion}
    \end{subfigure}
\caption{Simple 2D models to estimate the number of measurements per landmark. The sensor is represented as a gray rectangle. The yellow diamond represents a landmark.}
\label{fig: motion_model}
\end{figure}

In Fig. \ref{fig: forward_motion}, the distance ($l$) while the landmark is in view, is 
\begin{equation}
    l = r - c\cot(\frac{\theta}{2})
\end{equation}
where $r$ is the cutoff depth, $\theta$ is the field of view, and $c$ may be computed as an average over all landmark measurements. In Fig. \ref{fig: sideward_motion}, the in-view distance ($l$) is
\begin{equation}
    l = 2c\tan(\frac{\theta}{2})
\end{equation}
where $c$ is also computed as an average over all landmark measurements, with $c$ corresponding to the depth component here. The estimated number of observations is then computed as $\text{fps}\times l / v$ where $\text{fps}$ is the sensor fps and $v$ is the robot's average speed computed from odometry measurements. These models assume that the robot passes by each landmark only once, which is generally true for the segment processing stage (Section \ref{sec: inc}). If there is a strong belief that the robot passes by each landmark multiple times, we can set a different $\beta$ for the whole-trajectory global optimization. Otherwise, we rely on the robustness of formulation \eqref{eq: outer} to correctly estimate $K$. For the segment processing, the number of observations should also not exceed the number of poses in the segment, assuming each pose can produce at most one observation of a landmark.

\section*{Algorithm Parameters}
We provide our parameter settings in Table \ref{tab: algparams}.
\begin{table*}[h]
\begin{center}
\begin{threeparttable}[b]
% \small % caption font size
\caption{Motion model parameters (calculated based on prior information or from measurements) and algorithm parameters.}
\label{tab: algparams}
\begin{tabular}{|l| c c c c c c| c c c c c c c c|}
    \hline
    Dataset & $v$ & $\theta$ & $r$ & $c$ & fps & $|\mathcal{M}(j)|_\text{est}$ & Heuristic & $\beta_\text{glb}$ & $\frac{\sigma}{\sigma_s}w_s$\tnote{1} & $N_n$ & $\beta_\text{seg}$\tnote{2} & $N_\text{inner}$\tnote{3} & $N_\text{LM}$\tnote{4} & k-means\tnote{5} \\
    \hline
    \hline
    Grid 3D & - & - & - & - & - & 10 & $(\chi^2)^{-1}(0.999, 30)$ & 60 & 0.1 & 100 & 60 & 30 & 100 & 3, 15, 0.1 \\
    \hline
    Grid 2D & - & - & - & - & - & 10 & $(\chi^2)^{-1}(0.999, 20)$ & 45 & 0.1 & 100 & 45 & 30 & 100 & 3, 15, 0.1 \\
    \hline
    Diff Cls & 0.247 & 120 & 3 & 0.642 & 20 & 213 & $\frac{2 \times 213}{\pi}$ & 136 & 0.5 & 100 & 64 & 15 & 100 & 3, 15, 0.1 \\
    \hline
    Same Cls & 0.246 & 120 & 3 & 0.698 & 20 & 211 & $\frac{2 \times 211}{\pi}$ & 134 & 0.5 & 100 & 64 & 15 & 100 & 3, 15, 0.1 \\
    \hline
    Pool & 0.186 & 80 & - & 2.62 & 3.81 & 90 & $(\chi^2)^{-1}(0.999, 270)$ & 347 & 100 & 100 & 347 & 15 & 100 & 3, 15, 0.1\\
    \hline
\end{tabular}
\begin{tablenotes}
    \item [1] Real-valued feature vectors are additionally normalized by length. 
    \item [2] Calculated using $N_n$ instead of $|\mathcal{M}(j)|_\text{est}$ if $N_n < |\mathcal{M}(j)|_\text{est}$. \\
    \item [3] Number of inner solver iterations. The $\arg\!\min$ solution across iterations (not necessarily the final iteration due to k-means++ randomness) is selected. \\
    \item [4] Number of Levenberg-Marquardt optimizer iterations for SLAM. \\
    \item [5] Number of k-means attempts (the $\arg\!\min$ result is kept), termination iteration and termination tolerance.
\end{tablenotes}
\end{threeparttable}
\end{center}
\end{table*}

\section*{Landmark Evaluation}
We provide additional details about the evaluation of landmark estimation on the synthetic datasets. The data association is unknown to our algorithms, and therefore, we need to match the estimated landmarks to the reference landmarks. The reference landmarks are obtained along with the reference trajectory, by optimizing the measurements given the ground-truth data associations. We first compute the $\mathrm{SIM}(3)$ alignment between the estimated trajectory and the reference trajectory, and apply it to the estimated landmark positions. A linear assignment problem is then solved to match the estimated landmark positions to reference landmark positions. Only $\min(K_\text{ref}, K_\text{est})$ pairs are matched, and any extra landmarks are not included in the evaluation. The estimation error is computed over all matched pairs.

\section*{Visualization of $\fslamstar{K}$}
It is interesting to plot $\fslamstar{K}$ as a function of $K$. Computing the true $\fslamstar{K}$ is combinatorial, therefore we use our inner algorithm \eqref{eq: inner} to approximate its value. We show three cases under different noise levels in Fig. \ref{fig: ksweep}. In the case of the normal noise level, we can clearly observe the elbow point at the true $K$, which is consistent with our discussions in Section \ref{sec: optimcond}. The case of high odometry noise exhibits the opposite trend compared to the case of high landmark measurement noise. We provide two possible explanations in the previous section ``Assumptions in Deriving the Upper Bound for $\beta_\text{lb}$". Additionally, the case of high odometry noise appears to be ``noisier", which reveals that the algorithm more easily returns a suboptimal solution, potentially due to the drift accumulation effect of odometry noise that does not occur with landmark measurement noise.
\begin{figure}[h]
\centering
\includegraphics[width=1.0\columnwidth]{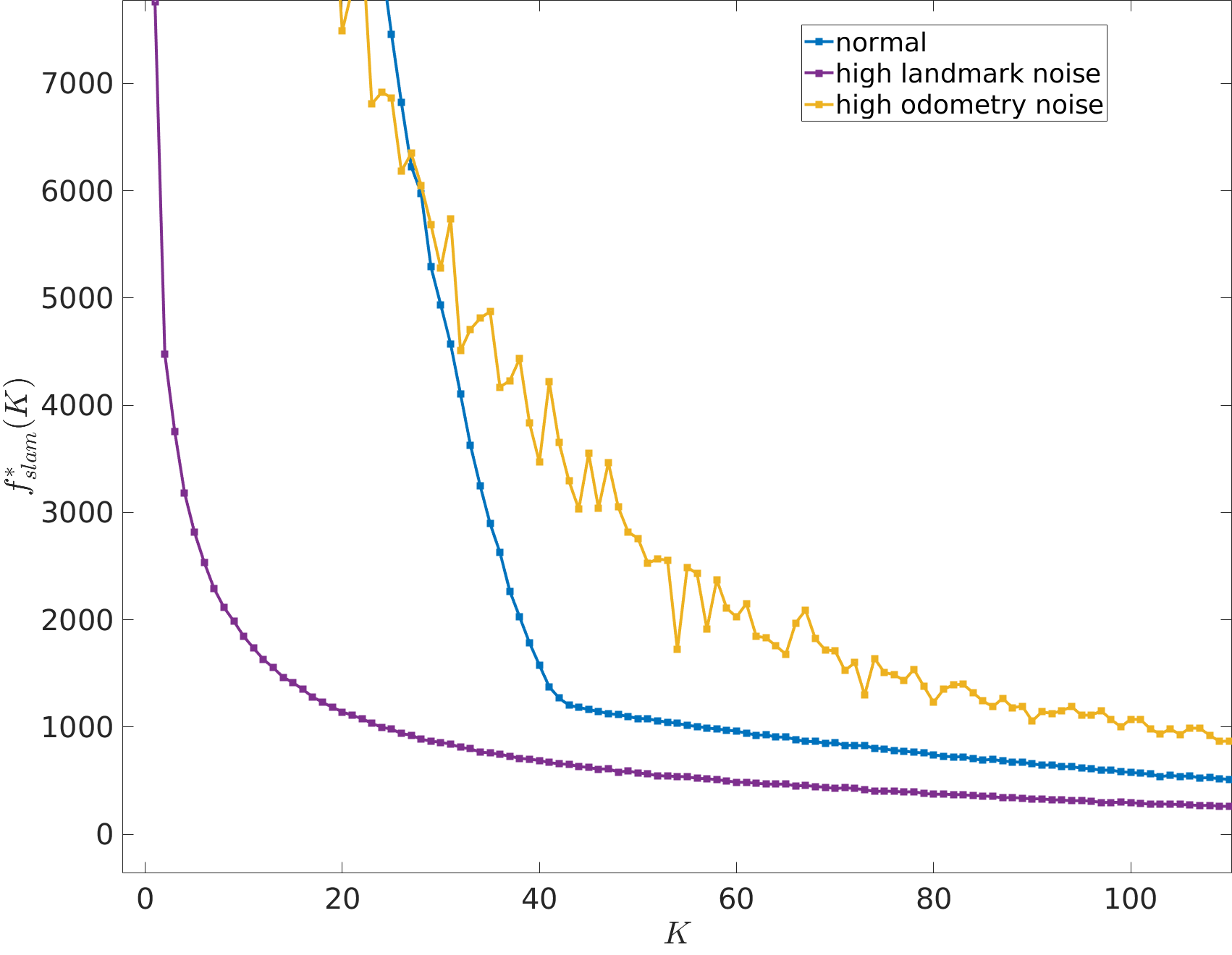}
\caption{Evaluating $\fslamstar{K}$ using the inner algorithm \eqref{eq: inner} on the synthetic 3D grid dataset of size $6 \times 6 \times 6$. The normal case uses a landmark measurement standard deviation 0.1 and odometry translation and rotation standard deviations of 0.05 and 0.005, respectively. The high landmark noise case uses a landmark measurement standard deviation 0.5, which is fairly large with respect to the map scale. The high odometry noise case uses translation and rotation standard deviations of 0.15 and 0.015, respectively. The true $K$ is 43.}
\label{fig: ksweep}
\end{figure}

There are two effects that lead to the differences between the normal case and the case of high landmark measurement noise. First, the nonlinear optimization will result in a lower SLAM objective value as the landmark measurement noise increases. We show a simple experiment in Table \ref{tab: grapherror} for a standard landmark-based SLAM problem (given data associations). The data are generated synthetically from the Gaussian noise models that exactly match the probabilistic models assumed by the SLAM objective function. Since the covariance matrices and the probabilistic models are exact, the covariance matrices perfectly normalize the measurement residuals \eqref{eq: posgen}, and thus, given that the ground truth is used as the initial values for the optimization variables, the initial errors (i.e., objective function values) are all the same. However, after optimization, due to the nonlinear interplay between the odometry term and the landmark measurement term, the optimal objective values are smaller at higher landmark measurement noise. This explains why the case of high landmark measurement noise has lower $\fslamstar{K}$ than the normal case at the true $K$ in Fig. \ref{fig: ksweep}.
\begin{table}[h]
\begin{center}
% \small % caption font size
\caption{Factor graph errors (i.e. objective function values) of a standard landmark-based SLAM problem (given data associations) at different landmark measurement noise levels. The synthetic 3D grid dataset is used. Landmark measurements are generated from the Gaussian generative model \eqref{eq: posgen}, so the covariance matrices are exact. Ground truth is used as the initial guess.}
\label{tab: grapherror}
{\setlength{\tabcolsep}{3.5pt}
\begin{tabular}{|l| c c c c c c c c|}
    \hline
    LM Std & 0.001 & 0.01 & 0.05 & 0.1 & 0.2 & 0.3 & 0.4 & 0.5 \\
    \hline
    Initial Error & \multicolumn{8}{c|}{2607} \\
    \hline
    Final Error & 1230 & 1222 & 1219 & 1203 & 1183 & 1174 & 1169 & 1167 \\
    \hline
\end{tabular}}
\end{center}
\end{table}

Second, when the landmark measurement noise is high (relative to the inter-landmark spacing), the unnormalized (i.e., not normalized by the covariance matrix) residual $||R_{i_k}\tran(y_{j_k} - t_{i_k}) - \zbar_k||^2$ of an association error may have a similar magnitude as the residual due to pure sensor noise. Therefore, even when $K$ is below the true $K$ and more association errors are introduced in computing $\fslamstar{K}$, we do not observe as sharp a transition as in the normal case in Fig. \ref{fig: ksweep}. This is where the gap between $\beta_{lb}$ and $\beta_{ub}$ becomes narrow, and our algorithm \eqref{eq: outer}, using $\beta$ set by the inverse $\chi^2$ heuristic in \eqref{eq: betalbub}, begins to underestimate $K$.

\section*{Kitchen Dataset Panoramas}
For direct visualization of the kitchen datasets, we provide panoramas of the setups in Fig. \ref{fig: kitchen_pano}.
\begin{figure*}[t]
\centering
    \begin{subfigure}{0.95\linewidth}
        \centering
        \includegraphics[width=1.0\linewidth]{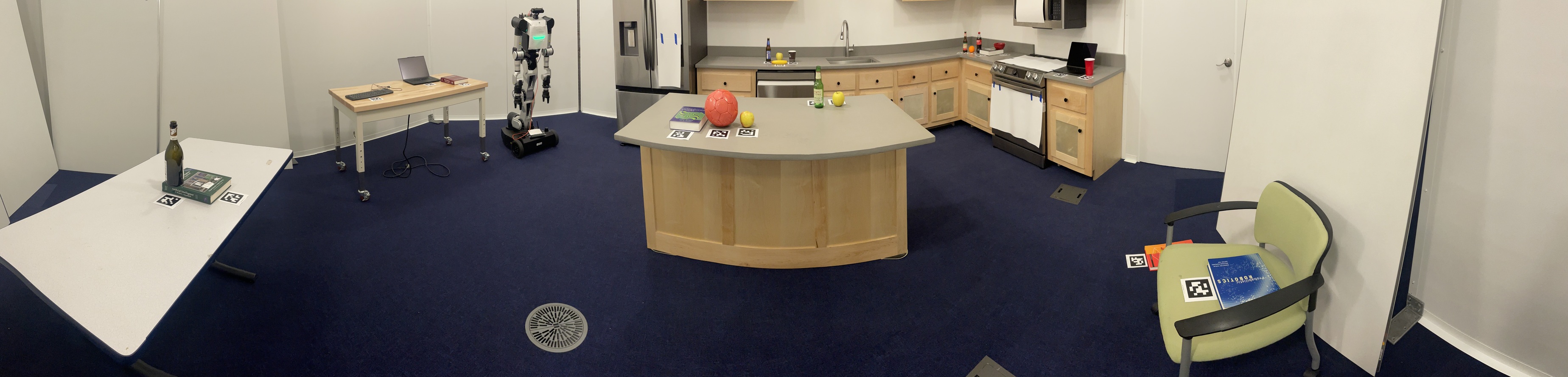}
        \caption{Kitchen - Diff Class.}
        \label{fig: kitchen_diff_pano}
    \end{subfigure}
    \par\smallskip
    \begin{subfigure}{0.95\linewidth}
        \centering
        \includegraphics[width=1.0\linewidth]{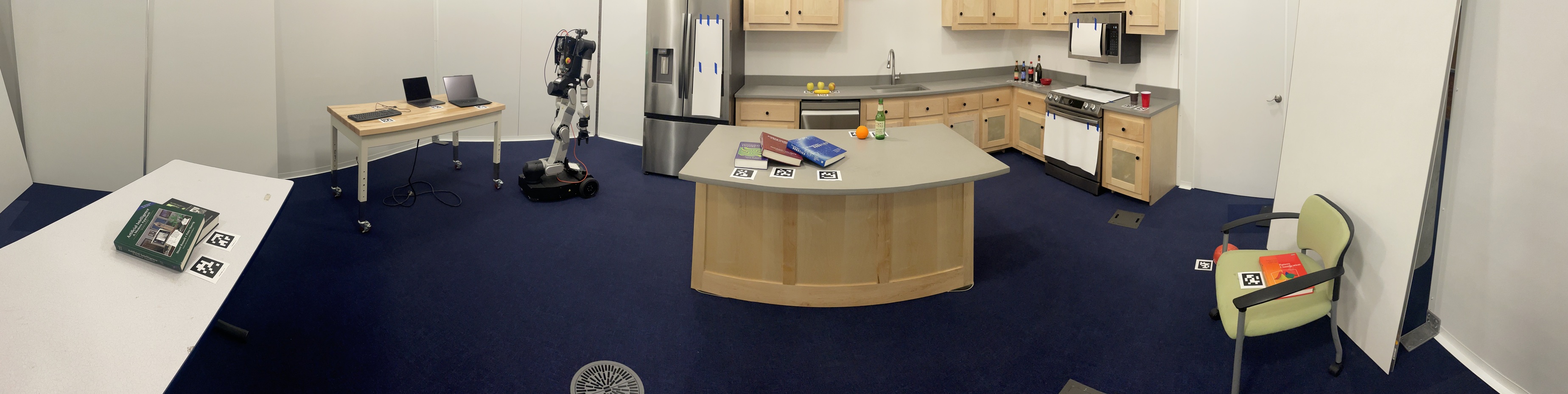}
        \caption{Kitchen - Same Class.}
        \label{fig: kitchen_same_pano}
    \end{subfigure}
\caption{Panoramas of the kitchen environment and the experiment setups. In the datasets, we have removed the laptop detections, because the laptop screens caused erroneous depth measurements.}
\label{fig: kitchen_pano}
\end{figure*}

\section*{Splitting Gaussian Samples}
\label{app: split_reduction}
Following up on the previous section ``Low Inter-Landmark Spacing Scenario", we model the addition of a landmark as splitting Gaussian samples to approximate the objective value change from $\fslamstar{K^\star}$ to the pre-optimization (non-optimal) value of $\fslam(K^\star + 1)$. Fig. \ref{fig: splitsamples} shows a schematic for illustration. For clarity, we hereafter use notation that is independent of the rest of the paper.
\begin{figure}[h]
\centering
    \scalebox{1.0}{\tikzset{every picture/.style={line width=0.75pt}} %set default line width to 0.75pt        

\begin{tikzpicture}[x=0.75pt,y=0.75pt,yscale=-1,xscale=1]
%uncomment if require: \path (0,106); %set diagram left start at 0, and has height of 106

\draw  [color={rgb, 255:red, 245; green, 166; blue, 35 }  ,draw opacity=1 ] (145.12,37.39) -- (155.73,48)(155.73,37.39) -- (145.12,48) ;
\draw  [color={rgb, 255:red, 245; green, 166; blue, 35 }  ,draw opacity=1 ] (200.39,53.45) -- (211,64.06)(211,53.45) -- (200.39,64.06) ;
\draw  [color={rgb, 255:red, 245; green, 166; blue, 35 }  ,draw opacity=1 ] (178.67,69.39) -- (189.27,80)(189.27,69.39) -- (178.67,80) ;
\draw  [color={rgb, 255:red, 245; green, 166; blue, 35 }  ,draw opacity=1 ] (138,53.06) -- (148.61,63.67)(148.61,53.06) -- (138,63.67) ;
\draw  [color={rgb, 255:red, 245; green, 166; blue, 35 }  ,draw opacity=1 ] (160.39,23.67) -- (171,34.27)(171,23.67) -- (160.39,34.27) ;
\draw  [color={rgb, 255:red, 245; green, 166; blue, 35 }  ,draw opacity=1 ] (156,58.73) -- (166.61,69.33)(166.61,58.73) -- (156,69.33) ;
\draw  [color={rgb, 255:red, 245; green, 166; blue, 35 }  ,draw opacity=1 ] (187,46.06) -- (197.61,56.67)(197.61,46.06) -- (187,56.67) ;
\draw  [color={rgb, 255:red, 245; green, 166; blue, 35 }  ,draw opacity=1 ][fill={rgb, 255:red, 74; green, 144; blue, 226 }  ,fill opacity=1 ] (180.39,24.67) -- (191,35.27)(191,24.67) -- (180.39,35.27) ;
\draw  [color={rgb, 255:red, 245; green, 166; blue, 35 }  ,draw opacity=1 ][fill={rgb, 255:red, 74; green, 144; blue, 226 }  ,fill opacity=1 ] (167.33,65.39) -- (177.94,76)(177.94,65.39) -- (167.33,76) ;
\draw  [color={rgb, 255:red, 245; green, 166; blue, 35 }  ,draw opacity=1 ][fill={rgb, 255:red, 74; green, 144; blue, 226 }  ,fill opacity=1 ] (191,62.06) -- (201.61,72.67)(201.61,62.06) -- (191,72.67) ;
%Shape: Triangle [id:dp3271464016873371] 
\draw  [draw opacity=0][fill={rgb, 255:red, 245; green, 166; blue, 35 }  ,fill opacity=1 ] (175.37,46.48) -- (182.8,58.48) -- (167.94,58.48) -- cycle ;
\draw  [color={rgb, 255:red, 245; green, 166; blue, 35 }  ,draw opacity=1 ] (193.06,33.67) -- (203.67,44.27)(203.67,33.67) -- (193.06,44.27) ;
\draw  [color={rgb, 255:red, 245; green, 166; blue, 35 }  ,draw opacity=1 ][fill={rgb, 255:red, 74; green, 144; blue, 226 }  ,fill opacity=1 ] (160.67,38.39) -- (171.27,49)(171.27,38.39) -- (160.67,49) ;
\draw  [color={rgb, 255:red, 245; green, 166; blue, 35 }  ,draw opacity=1 ] (363.12,37.39) -- (373.73,48)(373.73,37.39) -- (363.12,48) ;
\draw  [color={rgb, 255:red, 74; green, 144; blue, 226 }  ,draw opacity=1 ] (418.39,53.45) -- (429,64.06)(429,53.45) -- (418.39,64.06) ;
\draw  [color={rgb, 255:red, 74; green, 144; blue, 226 }  ,draw opacity=1 ] (396.67,69.39) -- (407.27,80)(407.27,69.39) -- (396.67,80) ;
\draw  [color={rgb, 255:red, 245; green, 166; blue, 35 }  ,draw opacity=1 ] (356,53.06) -- (366.61,63.67)(366.61,53.06) -- (356,63.67) ;
\draw  [color={rgb, 255:red, 245; green, 166; blue, 35 }  ,draw opacity=1 ] (378.39,23.67) -- (389,34.27)(389,23.67) -- (378.39,34.27) ;
\draw  [color={rgb, 255:red, 245; green, 166; blue, 35 }  ,draw opacity=1 ] (374,58.73) -- (384.61,69.33)(384.61,58.73) -- (374,69.33) ;
\draw  [color={rgb, 255:red, 74; green, 144; blue, 226 }  ,draw opacity=1 ] (405,46.06) -- (415.61,56.67)(415.61,46.06) -- (405,56.67) ;
\draw  [color={rgb, 255:red, 74; green, 144; blue, 226 }  ,draw opacity=1 ][fill={rgb, 255:red, 74; green, 144; blue, 226 }  ,fill opacity=1 ] (398.39,24.67) -- (409,35.27)(409,24.67) -- (398.39,35.27) ;
\draw  [color={rgb, 255:red, 245; green, 166; blue, 35 }  ,draw opacity=1 ][fill={rgb, 255:red, 74; green, 144; blue, 226 }  ,fill opacity=1 ] (385.33,65.39) -- (395.94,76)(395.94,65.39) -- (385.33,76) ;
\draw  [color={rgb, 255:red, 74; green, 144; blue, 226 }  ,draw opacity=1 ][fill={rgb, 255:red, 74; green, 144; blue, 226 }  ,fill opacity=1 ] (409,62.06) -- (419.61,72.67)(419.61,62.06) -- (409,72.67) ;
%Shape: Triangle [id:dp5018921661909306] 
\draw  [draw opacity=0][fill={rgb, 255:red, 245; green, 166; blue, 35 }  ,fill opacity=1 ] (378.23,46.48) -- (385.66,58.48) -- (370.8,58.48) -- cycle ;
\draw  [color={rgb, 255:red, 74; green, 144; blue, 226 }  ,draw opacity=1 ] (411.06,33.67) -- (421.67,44.27)(421.67,33.67) -- (411.06,44.27) ;
\draw  [color={rgb, 255:red, 245; green, 166; blue, 35 }  ,draw opacity=1 ][fill={rgb, 255:red, 74; green, 144; blue, 226 }  ,fill opacity=1 ] (378.67,38.39) -- (389.27,49)(389.27,38.39) -- (378.67,49) ;
%Shape: Triangle [id:dp04373179759451529] 
\draw  [draw opacity=0][fill={rgb, 255:red, 74; green, 144; blue, 226 }  ,fill opacity=1 ] (409.23,46.48) -- (416.66,58.48) -- (401.8,58.48) -- cycle ;
\draw  [color={rgb, 255:red, 245; green, 166; blue, 35 }  ,draw opacity=1 ] (255.12,37.39) -- (265.73,48)(265.73,37.39) -- (255.12,48) ;
\draw  [color={rgb, 255:red, 245; green, 166; blue, 35 }  ,draw opacity=1 ] (310.39,53.45) -- (321,64.06)(321,53.45) -- (310.39,64.06) ;
\draw  [color={rgb, 255:red, 245; green, 166; blue, 35 }  ,draw opacity=1 ] (288.67,69.39) -- (299.27,80)(299.27,69.39) -- (288.67,80) ;
\draw  [color={rgb, 255:red, 74; green, 144; blue, 226 }  ,draw opacity=1 ] (248,53.06) -- (258.61,63.67)(258.61,53.06) -- (248,63.67) ;
\draw  [color={rgb, 255:red, 245; green, 166; blue, 35 }  ,draw opacity=1 ] (270.39,23.67) -- (281,34.27)(281,23.67) -- (270.39,34.27) ;
\draw  [color={rgb, 255:red, 245; green, 166; blue, 35 }  ,draw opacity=1 ] (266,58.73) -- (276.61,69.33)(276.61,58.73) -- (266,69.33) ;
\draw  [color={rgb, 255:red, 74; green, 144; blue, 226 }  ,draw opacity=1 ] (297,46.06) -- (307.61,56.67)(307.61,46.06) -- (297,56.67) ;
\draw  [color={rgb, 255:red, 74; green, 144; blue, 226 }  ,draw opacity=1 ][fill={rgb, 255:red, 74; green, 144; blue, 226 }  ,fill opacity=1 ] (290.39,24.67) -- (301,35.27)(301,24.67) -- (290.39,35.27) ;
\draw  [color={rgb, 255:red, 74; green, 144; blue, 226 }  ,draw opacity=1 ][fill={rgb, 255:red, 74; green, 144; blue, 226 }  ,fill opacity=1 ] (277.33,65.39) -- (287.94,76)(287.94,65.39) -- (277.33,76) ;
\draw  [color={rgb, 255:red, 74; green, 144; blue, 226 }  ,draw opacity=1 ][fill={rgb, 255:red, 74; green, 144; blue, 226 }  ,fill opacity=1 ] (301,62.06) -- (311.61,72.67)(311.61,62.06) -- (301,72.67) ;
%Shape: Triangle [id:dp9350614231497729] 
\draw  [draw opacity=0][fill={rgb, 255:red, 245; green, 166; blue, 35 }  ,fill opacity=1 ] (287.37,44.48) -- (294.8,56.48) -- (279.94,56.48) -- cycle ;
\draw  [color={rgb, 255:red, 245; green, 166; blue, 35 }  ,draw opacity=1 ] (303.06,33.67) -- (313.67,44.27)(313.67,33.67) -- (303.06,44.27) ;
\draw  [color={rgb, 255:red, 74; green, 144; blue, 226 }  ,draw opacity=1 ][fill={rgb, 255:red, 74; green, 144; blue, 226 }  ,fill opacity=1 ] (270.67,38.39) -- (281.27,49)(281.27,38.39) -- (270.67,49) ;
%Shape: Triangle [id:dp7141321851969222] 
\draw  [draw opacity=0][fill={rgb, 255:red, 74; green, 144; blue, 226 }  ,fill opacity=1 ] (284.37,46.48) -- (291.8,58.48) -- (276.94,58.48) -- cycle ;

\end{tikzpicture}}
\caption{Splitting Gaussian samples to model $\fslam(K^\star + 1)$. We show two different ways of splitting.}
\label{fig: splitsamples}
\end{figure}

\subsection{Problem Setup}

Consider a landmark at position $\mu \in \realspace^d$ observed \(n\) times. Let $u_1,\ldots,u_n\in\realspace^d$ denote the measurements for this landmark. We assume an isotropic noise covariance. The anisotropic case can be derived from the isotropic case with additional tools such as PCA. Further, without loss of generality, we assume that the noise standard deviation is unity; otherwise, the residuals in our derivations below would be scaled by the inverse variance throughout (since SLAM optimization assumes a fixed covariance), but the conclusion in Subsection \ref{sec: halfsplit} remains unchanged due to cancellation. In short,
\[
u_1,\ldots,u_n \overset{\text{i.i.d.}}{\sim} \gauss(\mu, I_d).
\]
In this model, we use the average as our estimate,
\[
\bar u = \frac{1}{n}\sum_{i=1}^{n}u_i .
\]
The total residual (after normalizing by the variance) for all the measurements is 
\begin{equation}
\label{eq: r0}
    r_0 = \sum_{i=1}^{n}\|u_i-\bar u\|^2 ,
\end{equation}
which is analogous to the total residual for all measurements of a landmark in SLAM.

\subsection{Reduction Formula for Any Split}
Suppose the measurements are split into two nonempty groups $G_1$ and $G_2$ with sizes $n_1=|G_1|$ and $n_2=|G_2|$ ($n_1+n_2=n$). Each has its own average estimate, 
\[
\bar{u}_1 = \frac{1}{n_1}\sum_{i\in G_1}u_i,
\qquad
\bar{u}_2 = \frac{1}{n_2}\sum_{i\in G_2}u_i.
\]
The total residual after splitting is now
\begin{equation}
\label{eq: rsplit}
    r_{\rm split} = \sum_{i\in G_1}\|u_i-\bar{u}_1\|^2 + \sum_{i\in G_2}\|u_i-\bar{u}_2\|^2 .    
\end{equation}

\begin{lemma} [Residual Reduction] \label{lem: resred}
\leavevmode \\
\hspace*{\parindent}
For any split $G_1$ and $G_2$,
\[
r_0-r_{\rm split} = n_1\|\bar{u}_1-\bar u\|^2 + n_2\|\bar{u}_2-\bar u\|^2 .
\]
Equivalently,
\begin{equation}
\label{eq: rred}
    r_0-r_{\rm split} = \frac{n_1n_2}{n}\|\bar{u}_1-\bar{u}_2\|^2 .    
\end{equation}
\end{lemma}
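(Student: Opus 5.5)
The plan is to use the classical parallel-axis (Huygens/König) decomposition of a sum of squared deviations. For any finite set of points $\{u_i\}_{i\in G}$ with mean $\bar u_G$ and any fixed point $c\in\realspace^d$,
\[
\sum_{i\in G}\|u_i-c\|^2=\sum_{i\in G}\|u_i-\bar u_G\|^2+|G|\,\|\bar u_G-c\|^2 .
\]
To prove this, expand $u_i-c=(u_i-\bar u_G)+(\bar u_G-c)$. The cross term $2\,(\bar u_G-c)\tran\sum_{i\in G}(u_i-\bar u_G)$ vanishes because deviations from the mean sum to zero.

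Next, apply this identity separately to $G_1$ and $G_2$ with $c=\bar u$. Since $G_1\cup G_2$ partitions $\{1,\ldots,n\}$, summing the two instances gives
\[
r_0=\sum_{i\in G_1}\|u_i-\bar u\|^2+\sum_{i\in G_2}\|u_i-\bar u\|^2=r_{\rm split}+n_1\|\bar u_1-\bar u\|^2+n_2\|\bar u_2-\bar u\|^2 ,
\]
using \eqref{eq: r0} and \eqref{eq: rsplit}. This is the first form in Lemma~\ref{lem: resred}.

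For the equivalent form \eqref{eq: rred}, use $\bar u=\frac{n_1\bar u_1+n_2\bar u_2}{n}$. This yields
\[
\bar u_1-\bar u=\frac{n_2}{n}(\bar u_1-\bar u_2),\qquad \bar u_2-\bar u=-\frac{n_1}{n}(\bar u_1-\bar u_2).
\]
Substituting, the reduction becomes $\left(\frac{n_1n_2^2}{n^2}+\frac{n_2n_1^2}{n^2}\right)\|\bar u_1-\bar u_2\|^2$. Since $n_1+n_2=n$, this simplifies to $\frac{n_1n_2}{n}\|\bar u_1-\bar u_2\|^2$.

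No step is a real obstacle. The only point needing care is the vanishing cross term in the decomposition, which relies on $\bar u_G$ being the exact mean of its own group. This also explains why each group's average is the right comparison point: it is the minimizer of that group's residual. Nonemptiness of $G_1$ and $G_2$ is needed only so that $\bar u_1$ and $\bar u_2$ are defined.
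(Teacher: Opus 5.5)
Your proof is correct. It differs from the paper's mainly in organization, not in substance.

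The paper expands every squared norm about the origin. It uses $\sum_{i\in G_1}\|u_i-\bar u_1\|^2=\sum_{i\in G_1}\|u_i\|^2-n_1\|\bar u_1\|^2$, and likewise for $G_2$ and for the full set. This yields the intermediate, origin-anchored identity $r_0-r_{\rm split}=n_1\|\bar u_1\|^2+n_2\|\bar u_2\|^2-n\|\bar u\|^2$. The paper then checks \emph{each} stated form separately against this expression. For the first form it expands $n_j\|\bar u_j-\bar u\|^2$ and uses $n_1\bar u_1+n_2\bar u_2=n\bar u$ to combine the cross terms $-2n_j\bar u_j\tran\bar u$. For the second form it substitutes $\bar u=(n_1\bar u_1+n_2\bar u_2)/n$ directly into the intermediate identity.

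You instead apply the same parallel-axis identity centered at the pooled mean $\bar u$ rather than at the origin. The first form (total $=$ within $+$ between sum of squares) then follows in one line, with the cross term vanishing group by group. You derive the second form from the first via $\bar u_1-\bar u=\tfrac{n_2}{n}(\bar u_1-\bar u_2)$ and $\bar u_2-\bar u=-\tfrac{n_1}{n}(\bar u_1-\bar u_2)$.

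Your route is shorter, translation-invariant at every step, and makes the ANOVA interpretation explicit. The paper's route is pure mechanical expansion with no identity invoked, at the cost of an extra intermediate expression and two separate verifications. The two are equivalent, since the paper's opening expansion is exactly your identity with $c=0$.
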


\begin{proof}
\leavevmode \\
\hspace*{\parindent}
Expanding $\sum_{i\in G_1}\|u_i-\bar{u}_1\|^2$, we obtain
\begin{align*}
&\sum_{i\in G_1}{\|u_i\|^2} + \sum_{i\in G_1}{\|\bar{u}_1\|^2} - 2\sum_{i\in G_1}{u_i\tran \bar{u}_1}
\\
&= \sum_{i\in G_1}{\|u_i\|^2} + n_1\|\bar{u}_1\|^2 - 2n_1\|\bar{u}_1\|^2
\\
&= \sum_{i\in G_1}{\|u_i\|^2} - n_1\|\bar{u}_1\|^2 ,
\end{align*}
and similarly for $\sum_{i\in G_2}\|u_i-\bar{u}_2\|^2$. After merging terms,
\begin{align*}
r_{\rm split} &= \sum_{i=1}^{n}{\|u_i\|^2} - n_1\|\bar{u}_1\|^2 - n_2\|\bar{u}_2\|^2
\\
&= \sum_{i=1}^{n}{\|u_i-\bar{u}\|^2} - n_1\|\bar{u}_1\|^2 - n_2\|\bar{u}_2\|^2
\\
&\ \ \ - n\|\bar{u}\|^2 + 2n\|\bar{u}\|^2 % 2\sum_{i=1}^{n}{u_i\tran \bar{u}}
\\
&= r_0 + n_1(\|\bar{u}\|^2 - \|\bar{u}_1\|^2) + n_2(\|\bar{u}\|^2 - \|\bar{u}_2\|^2) .
\end{align*}
Rearranging terms,
\begin{align}
\label{eq: rred1}
    r_0 - r_{\rm split} &= n_1(\|\bar{u}_1\|^2 - \|\bar{u}\|^2) + n_2(\|\bar{u}_2\|^2 - \|\bar{u}\|^2) .
\end{align}
For the two terms in Lemma \ref{lem: resred},
\begin{align*}
    n_1\|\bar{u}_1-\bar u\|^2 &= n_1\|\bar{u}_1\|^2 + n_1\|\bar{u}\|^2 - 2n_1\bar{u}_1\tran\bar{u} ,
    \\
    n_2\|\bar{u}_2-\bar u\|^2 &= n_2\|\bar{u}_2\|^2 + n_2\|\bar{u}\|^2 - 2n_2\bar{u}_2\tran\bar{u} .
\end{align*}
Note the following equivalence,
\[
2n_1\bar{u}_1\tran\bar{u} + 2n_2\bar{u}_2\tran\bar{u} = 2\sum_{i=1}^n{u_i\tran \bar{u}} = 2(n_1 + n_2)\|\bar{u}\|^2,
\]
which makes $r_0 - r_{\rm split}$ in Lemma \ref{lem: resred} the same as \eqref{eq: rred1}, 
\[
n_1(\|\bar{u}_1\|^2 - \|\bar{u}\|^2) + n_2(\|\bar{u}_2\|^2 - \|\bar{u}\|^2)
\]
This completes half of the proof.

For the equivalence part in Lemma \ref{lem: resred}, note the following,
\[
\bar{u} = \frac{n_1\bar{u}_1+n_2\bar{u}_2}{n}, 
\]
plugging which into \eqref{eq: rred1} leads to
\begin{align*}
    &r_0 - r_{\rm split} = n_1\|\bar{u}_1\|^2 + n_2\|\bar{u}_2\|^2 - \frac{\|n_1\bar{u}_1+n_2\bar{u}_2\|^2}{n}
    \\
    &= \frac{(nn_1 - {n_1}^2)}{n}\|\bar{u}_1\|^2 + \frac{(nn_2 - {n_2}^2)}{n}\|\bar{u}_2\|^2 
    - 2\frac{n_1n_2}{n}{\bar{u}_1}\tran\bar{u}_2
\end{align*}
Noting that $nn_1 - {n_1}^2 = n_1(n-n_1) = n_1n_2$ and similarly for $n_2$, we can easily show that
\begin{align*}
    r_0 - r_{\rm split} &= \frac{n_1n_2}{n}(\|\bar{u}_1\|^2 + \|\bar{u}_2\|^2 - 2{\bar{u}_1}\tran\bar{u}_2)
    \\
    &= \frac{n_1n_2}{n}\|\bar{u}_1-\bar{u}_2\|^2 ,
\end{align*}
which completes the proof. 
\end{proof}

We remark that $r_0 - r_{\rm split}$ is analogous to $\fslamstar{K^\star} - \fslam(K^\star + 1)$, where the second term is not necessarily optimal since SLAM optimization would distribute some residuals into the odometry term and other measurement terms so that the projected landmark positions could shift. Our splitting model is analogous to the pre-optimized value $\fslam(K^\star + 1)$.

\subsection{Geometric Half-Space Split} \label{sec: halfsplit}
We now consider a geometric split that divides the measurements into two halves, which is similar to the right schematic in Fig. \ref{fig: splitsamples}. We start by choosing a unit vector
\[
v\in\R^d,
\qquad
\|v\|=1.
\]
Define the scalar projection
\[
s_i
=
v\tran u_i.
\]
Without loss of generality, we center our measurements
\[
u_i\sim \gauss(0,I_d),
\]
which leads to
\[
s_i\sim \gauss(0,1).
\]
Split the samples according to the sign of this projection:
\[
G_+ = \{i:s_i \ge 0\},
\qquad
G_- = \{i:s_i<0\}.
\]
An important assumption is that the number of measurements is large, and therefore, $\bar{u} \approx \mu = 0$. Because of this assumption, additionally we also have $(I - vv\tran)\bar{u}_+ \approx (I - vv\tran)\bar{u}_- \approx (I - vv\tran)\bar{u} \approx 0$. This implies that the residual reduction $r_0 - r_{\rm split}$ is only contributed by the component along the $v$ direction, since the sum of the residuals in all orthogonal directions is unchanged from $r_0$ \eqref{eq: r0} to $r_{\rm split}$ \eqref{eq: rsplit}. Furthermore, due to this assumption, we have $\bar{s}_+ = \frac{1}{n_+}\sum_{i \in G_+}{s_i} \approx \E[s\mid s \ge 0]$ and $\bar{s}_- = \frac{1}{n_-}\sum_{i \in G_-}{s_i} \approx \E[s\mid s<0]$, where $s$ follows $\gauss(0,1)$ and $s = v\tran u$ for $u$ following $\gauss(0,I_d)$.

More formally, consider $u$, which follows $\gauss(0, I_d)$, decomposed into the component along $v$ and the component orthogonal to $v$:
\begin{equation*}
u=sv+z,
\qquad
z\perp v.
% \label{eq: zdef}
\end{equation*}
%Because the Gaussian distribution is isotropic, the orthogonal component has zero conditional mean after conditioning on the sign of $s$:
Note that $z = (I - vv\tran)u$ and $s = v\tran u$ are jointly Gaussian. Ffor this zero-mean isotropic Gaussian case, it can be shown that the covariance between $z$ and $s$ is zero,
\begin{align*}
\text{COV}(s, z) &= \E[sz] - \E[s]\E[z] = \E[v\tran u (I-vv\tran)u]
\\
&= \E[(I-vv\tran)uu\tran v] = (I-vv\tran)\E[uu\tran]v
\\
&= (I-vv\tran)(\text{COV}(u) + \E[u]\E[u]\tran)v
\\
&= v - vv\tran v = v - v \|v\|^2 = 0 .
\end{align*}
For jointly Gaussian variables, $z$ is then independent of $s$, which makes
\begin{equation*}
    \E[z\mid s \ge 0] = \E[z\mid s<0] = \E[z] = 0 .
\end{equation*}
Therefore,
\begin{equation*}
\E[u\mid s  \ge  0]
=
\E[sv+z\mid s \ge 0]
=
\E[s\mid s \ge 0]v ,
% \label{eq: expuexps}
\end{equation*}
and similarly,
\[
\E[u\mid s < 0] = \E[s\mid s < 0]v .
\]
By definition,
\[
\E[s\mid s \ge 0] =
\frac{
\int_0^\infty s\frac{1}{\sqrt{2\pi}}
\exp\left(-\frac{s^2}{2}\right)ds
}{
\mathbb P(s \ge 0)
}.
\]
Since
\[
\mathbb P(s \ge 0)=\frac12,
\]
we get
\[
\E[s\mid s \ge 0] =
2\int_0^\infty s\frac{1}{\sqrt{2\pi}}
\exp\left(-\frac{s^2}{2}\right)ds.
\]
Let
\[
q=\frac{s^2}{2},
\qquad
dq=s\,ds.
\]
Then
\[
\E[s\mid s \ge 0] =
\frac{2}{\sqrt{2\pi}}
\int_0^\infty e^{-q}dq.
\]
Since
\[
\int_0^\infty e^{-q}dq=1,
\]
we obtain
\[
\E[s\mid s \ge 0] = \sqrt{\frac{2}{\pi}}.
\]
By symmetry,
\[
\E[s\mid s<0] = -\sqrt{\frac{2}{\pi}}.
\]
Therefore,
\[
\E[u\mid s \ge 0] = \sqrt{\frac{2}{\pi}}v,
\]
and
\[
\E[u\mid s < 0] = -\sqrt{\frac{2}{\pi}}v.
\]
Due to the assumption of a large number of measurements,
\[
\bar{u}_+ \approx \E[u\mid s \ge 0] = \sqrt{\frac{2}{\pi}}v,
\]
and
\[
\bar{u}_- \approx \E[u\mid s < 0] = -\sqrt{\frac{2}{\pi}}v,
\]
Also due to the large-sample assumption and the symmetry of an isotropic Gaussian, approximately half of the samples fall into each group:
\[
n_+\approx \frac{n}{2},
\qquad
n_-\approx \frac{n}{2}.
\]
We substitute these results into Lemma \ref{lem: resred} and obtain
\begin{align*}
    r_0-r_{\rm split} &= \frac{n_+n_-}{n}\|\bar{u}_+-\bar{u}_-\|^2
    \\
    &= \frac{n}{4}\|\sqrt{\frac{2}{\pi}}v + \sqrt{\frac{2}{\pi}}v\|^2
    \\
    &= \frac{2n}{\pi}\|v\|^2
    \\
    &= \frac{2n}{\pi}.
\end{align*}

\subsection{Optimality of the Half-Space Split}
When the number of measurements $n$ is large ($n \to \infty$), the half-space split maximally reduces the residual (i.e., maximizes $r_0 - r_{\rm split}$). Intuitively, in \eqref{eq: rred}, $n_1n_2 = n_1(n - n_1)$ is maximized (simply by setting the derivative to zero) when $n_1 = n_2 = \frac{n}{2}$, which is achieved by the half-space split when $n$ is large. When moving away from the center split, the decrease in $n_1n_2$ dominates the growth in $\|\bar{u}_1-\bar{u}_2\|^2$ (one can reason about $n_1$ and $n_2$ through Gaussian CDF and about $\bar{u}_1$ and $\bar{u}_2$ using the previous derivations but with a non-zero threshold).

Under non-ideal conditions such as $n \not\to \infty$ and an anisotropic data covariance (Fig. \ref{fig: splitsamplesnidl} for illustration), $v$ may not be an arbitrary direction, but the principal direction of maximum data variance. Splitting at $s = 0$ may not be optimal since the best split may shift toward outliers to maximize $\|\bar{u}_1-\bar{u}_2\|^2$. The directions orthogonal to $v$ may also contribute to the residual reduction under non-ideal conditions.
\begin{figure}[h]
\centering
    \scalebox{1.0}{\tikzset{every picture/.style={line width=0.75pt}} %set default line width to 0.75pt        

\begin{tikzpicture}[x=0.75pt,y=0.75pt,yscale=-1,xscale=1]
%uncomment if require: \path (0,110); %set diagram left start at 0, and has height of 110

\draw  [color={rgb, 255:red, 245; green, 166; blue, 35 }  ,draw opacity=1 ] (140,33.73) -- (150.61,44.33)(150.61,33.73) -- (140,44.33) ;
\draw  [color={rgb, 255:red, 245; green, 166; blue, 35 }  ,draw opacity=1 ] (211.94,54.79) -- (222.55,65.39)(222.55,54.79) -- (211.94,65.39) ;
\draw  [color={rgb, 255:red, 245; green, 166; blue, 35 }  ,draw opacity=1 ] (173.55,65.73) -- (184.15,76.33)(184.15,65.73) -- (173.55,76.33) ;
\draw  [color={rgb, 255:red, 245; green, 166; blue, 35 }  ,draw opacity=1 ] (141.88,46.06) -- (152.49,56.67)(152.49,46.06) -- (141.88,56.67) ;
\draw  [color={rgb, 255:red, 245; green, 166; blue, 35 }  ,draw opacity=1 ] (155.27,20) -- (165.88,30.61)(165.88,20) -- (155.27,30.61) ;
\draw  [color={rgb, 255:red, 245; green, 166; blue, 35 }  ,draw opacity=1 ] (150.88,55.06) -- (161.49,65.67)(161.49,55.06) -- (150.88,65.67) ;
\draw  [color={rgb, 255:red, 245; green, 166; blue, 35 }  ,draw opacity=1 ] (184.88,42.39) -- (195.49,53)(195.49,42.39) -- (184.88,53) ;
\draw  [color={rgb, 255:red, 245; green, 166; blue, 35 }  ,draw opacity=1 ][fill={rgb, 255:red, 74; green, 144; blue, 226 }  ,fill opacity=1 ] (175.27,21) -- (185.88,31.61)(185.88,21) -- (175.27,31.61) ;
\draw  [color={rgb, 255:red, 245; green, 166; blue, 35 }  ,draw opacity=1 ][fill={rgb, 255:red, 74; green, 144; blue, 226 }  ,fill opacity=1 ] (162.21,61.73) -- (172.82,72.33)(172.82,61.73) -- (162.21,72.33) ;
\draw  [color={rgb, 255:red, 245; green, 166; blue, 35 }  ,draw opacity=1 ][fill={rgb, 255:red, 74; green, 144; blue, 226 }  ,fill opacity=1 ] (185.88,58.39) -- (196.49,69)(196.49,58.39) -- (185.88,69) ;
%Shape: Triangle [id:dp81421549920045] 
\draw  [draw opacity=0][fill={rgb, 255:red, 245; green, 166; blue, 35 }  ,fill opacity=1 ] (180.25,42.81) -- (187.68,54.81) -- (172.82,54.81) -- cycle ;
\draw  [color={rgb, 255:red, 245; green, 166; blue, 35 }  ,draw opacity=1 ] (201.94,39) -- (212.55,49.61)(212.55,39) -- (201.94,49.61) ;
\draw  [color={rgb, 255:red, 245; green, 166; blue, 35 }  ,draw opacity=1 ][fill={rgb, 255:red, 74; green, 144; blue, 226 }  ,fill opacity=1 ] (155.55,34.73) -- (166.15,45.33)(166.15,34.73) -- (155.55,45.33) ;
\draw  [color={rgb, 255:red, 245; green, 166; blue, 35 }  ,draw opacity=1 ] (250,33.73) -- (260.61,44.33)(260.61,33.73) -- (250,44.33) ;
\draw  [color={rgb, 255:red, 74; green, 144; blue, 226 }  ,draw opacity=1 ] (321.94,54.79) -- (332.55,65.39)(332.55,54.79) -- (321.94,65.39) ;
\draw  [color={rgb, 255:red, 74; green, 144; blue, 226 }  ,draw opacity=1 ] (283.55,65.73) -- (294.15,76.33)(294.15,65.73) -- (283.55,76.33) ;
\draw  [color={rgb, 255:red, 245; green, 166; blue, 35 }  ,draw opacity=1 ] (251.88,46.06) -- (262.49,56.67)(262.49,46.06) -- (251.88,56.67) ;
\draw  [color={rgb, 255:red, 245; green, 166; blue, 35 }  ,draw opacity=1 ] (265.27,20) -- (275.88,30.61)(275.88,20) -- (265.27,30.61) ;
\draw  [color={rgb, 255:red, 245; green, 166; blue, 35 }  ,draw opacity=1 ] (260.88,55.06) -- (271.49,65.67)(271.49,55.06) -- (260.88,65.67) ;
\draw  [color={rgb, 255:red, 74; green, 144; blue, 226 }  ,draw opacity=1 ] (294.88,42.39) -- (305.49,53)(305.49,42.39) -- (294.88,53) ;
\draw  [color={rgb, 255:red, 245; green, 166; blue, 35 }  ,draw opacity=1 ][fill={rgb, 255:red, 74; green, 144; blue, 226 }  ,fill opacity=1 ] (285.27,21) -- (295.88,31.61)(295.88,21) -- (285.27,31.61) ;
\draw  [color={rgb, 255:red, 245; green, 166; blue, 35 }  ,draw opacity=1 ][fill={rgb, 255:red, 74; green, 144; blue, 226 }  ,fill opacity=1 ] (272.21,61.73) -- (282.82,72.33)(282.82,61.73) -- (272.21,72.33) ;
\draw  [color={rgb, 255:red, 74; green, 144; blue, 226 }  ,draw opacity=1 ][fill={rgb, 255:red, 74; green, 144; blue, 226 }  ,fill opacity=1 ] (295.88,58.39) -- (306.49,69)(306.49,58.39) -- (295.88,69) ;
\draw  [color={rgb, 255:red, 74; green, 144; blue, 226 }  ,draw opacity=1 ] (311.94,39) -- (322.55,49.61)(322.55,39) -- (311.94,49.61) ;
\draw  [color={rgb, 255:red, 245; green, 166; blue, 35 }  ,draw opacity=1 ][fill={rgb, 255:red, 74; green, 144; blue, 226 }  ,fill opacity=1 ] (265.55,34.73) -- (276.15,45.33)(276.15,34.73) -- (265.55,45.33) ;
%Shape: Triangle [id:dp14117196806462462] 
\draw  [draw opacity=0][fill={rgb, 255:red, 74; green, 144; blue, 226 }  ,fill opacity=1 ] (307.57,57) -- (315,69) -- (300.14,69) -- cycle ;
%Shape: Triangle [id:dp5649318901600253] 
\draw  [draw opacity=0][fill={rgb, 255:red, 245; green, 166; blue, 35 }  ,fill opacity=1 ] (273.25,37.81) -- (280.68,49.81) -- (265.82,49.81) -- cycle ;

\end{tikzpicture}}
\caption{Splitting Gaussian samples under non-ideal conditions.}
\label{fig: splitsamplesnidl}
\end{figure}

We now provide a more thorough derivation of the optimality of the half-space split under ideal conditions. Our objective is to maximize \eqref{eq: rred}, which is valid for all splits. For convenience, we repeat it below:
\[
    r_0-r_{\rm split} = \frac{n_1n_2}{n}\|\bar{u}_1-\bar{u}_2\|^2 .
\]
There are two parts: (i) showing that splitting at $s = 0$ is optimal among all the half-space splits, and (ii) showing that the half-space splits are optimal among all splits. For the first part, we consider a general half-space split at $s = t$ instead of at $s = 0$:
\[
G_1 = \{i:s_i < t\},
\qquad
G_2 = \{i:s_i \ge t\}.
\]
The objective is to show that $t = 0$ maximizes $r_0-r_{\rm split}$. Before proceeding with the derivation, we introduce several definitions and state a few properties that will be useful later in the derivation. Let $\phi(s)$ denote the standard Gaussian probability density function, and let $\Phi(s)$ denote the cumulative distribution function of a standard Gaussian. It is easy to show that
\begin{equation}
    \phi'(s) = -s\phi(s) ,
    \label{eq: gausspdfderiv}
\end{equation}
\begin{equation}
    \Phi'(s) = \phi(s) .
    \label{eq: gausscdfderiv}
\end{equation}
Our objective function $\max_t{r_0-r_{\rm split}}$ can be written as
\begin{align*}
    \max_t{\frac{n_1n_2}{n}\|\bar{u}_1-\bar{u}_2\|^2} = n\left(\max_t{\frac{n_1n_2}{n^2}\|\bar{u}_1-\bar{u}_2\|^2}\right) .
\end{align*}
Assuming a large number of measurements,
\begin{align*}
    \frac{n_1}{n} &\approx \int^{t}_{-\infty}{\phi(s)\,ds} = \Phi(t) ,
    \\
    \frac{n_2}{n} &\approx 1 - \Phi(t) ,
\end{align*}
\begin{align*}
    \bar{u}_1 &\approx \E[u\mid s < t]
    = \frac{\int^{t}_{-\infty}{s\phi(s)\,ds}}{\int^{t}_{-\infty}{\phi(s)\,ds}}v
    \overset{\text{\eqref{eq: gausspdfderiv}\eqref{eq: gausscdfderiv}}}{=} \frac{-\phi(t)}{\Phi(t)}v ,
    \\
    \bar{u}_2 &\approx \E[u \mid s \ge t] = \frac{\int^{\infty}_{t}{s\phi(s)\,ds}}{\int^{\infty}_{t}{\phi(s)\,ds}}v \overset{\text{\eqref{eq: gausspdfderiv}\eqref{eq: gausscdfderiv}}}{=} \frac{\phi(t)}{1 - \Phi(t)}v .
\end{align*}
We substitute the above equalities into the objective function:
\begin{equation}
\begin{aligned}
    &\max_t{\frac{n_1n_2}{n^2}\|\bar{u}_1-\bar{u}_2\|^2}
    \\
    &\approx \max_t{\Phi(t)\left[1 - \Phi(t)\right]\phi^2(t)\left[-\frac{1}{\Phi(t)} - \frac{1}{1 - \Phi(t)}\right]^2\|v\|^2}
    \\
    &= \max_t{\frac{\phi^2(t)}{\Phi(t)\left[1 - \Phi(t)\right]}} = \max_t{h(t)} .
\end{aligned}
\end{equation}
Using the following properties:
\begin{equation}
    \phi(-t) = \phi(t) ,
    \label{eq: gausspdfsymm}
\end{equation}
\begin{equation}
    \Phi(-t) = 1 - \Phi(t) ,
    \label{eq: gausscdfsymm}
\end{equation}
$h(t)$ is symmetric about $t = 0$. To show that $\arg\max_t{h(t)} = 0$, it is sufficient to show (i) $\left.\frac{\mathrm{d}\log{h(t)}}{\mathrm{d}t} \right|_{t=0} = 0$, and (ii) $\left.\frac{\mathrm{d}\log{h(t)}}{\mathrm{d}t} \right|_{t > 0} < 0$. Taking the derivative:
\[
\log{h(t)} \overset{\text{\eqref{eq: gausscdfsymm}}}{=} 2\log{\phi(t)} - \log{\Phi(t)} - \log{\Phi(-t)} ,
\]
\begin{equation}
\begin{aligned}
    \frac{\mathrm{d}\log{h(t)}}{\mathrm{d}t} &\overset{\text{\eqref{eq: gausspdfderiv}\eqref{eq: gausscdfderiv}}}{=} -2\frac{t\phi(t)}{\phi(t)} - \frac{\phi(t)}{\Phi(t)} + \frac{\phi(t)}{\Phi(-t)}
    \\
    &= -2t + \phi(t)\left[\frac{1}{\Phi(-t)} - \frac{1}{\Phi(t)}\right] .
    \label{eq: dlogh}
\end{aligned}
\end{equation}
From \eqref{eq: dlogh}, it is easy to check that $\left.\frac{\mathrm{d}\log{h(t)}}{\mathrm{d}t} \right|_{t=0} = 0$. Next, we show $\left.\frac{\mathrm{d}\log{h(t)}}{\mathrm{d}t} \right|_{t > 0} < 0$, i.e.,
\begin{equation}
    \phi(t)\left[\frac{1}{\Phi(-t)}-\frac{1}{\Phi(t)}\right]<2t \qquad \text{for}\ t>0.
    \label{eq: dloghless}
\end{equation}
We define the inverse Mills ratio,
\[
\lambda(t)=\frac{\phi(t)}{\Phi(-t)}, \qquad
\lambda(-t) \overset{\text{\eqref{eq: gausspdfsymm}}}{=} \frac{\phi(t)}{\Phi(t)}.
\]
Therefore,
\begin{equation}
\begin{aligned}
    \phi(t)\left[\frac{1}{\Phi(-t)}-\frac{1}{\Phi(t)}\right] &= \lambda(t)-\lambda(-t)
    \\
    &= \int^{t}_{-t}{\lambda'(x)\,dx} \qquad \text{for}\ t > 0.
    \label{eq: intdmills}
\end{aligned}
\end{equation}
From \eqref{eq: intdmills}, to show that $\left.\frac{\mathrm{d}\log{h(t)}}{\mathrm{d}t} \right|_{t > 0} < 0$, or equivalently \eqref{eq: dloghless}, we require $\lambda'(t) < 1$ and $t > 0$ (so that the integral from $-t$ to $t$ is valid). We therefore take the derivative of $\lambda(t)$:
\begin{equation}
\begin{aligned}
    \lambda'(t) = &\frac{\phi'(t)\Phi(-t)+\phi(t)\Phi'(-t)}{\Phi(-t)^2}
    \\
    \overset{\text{\eqref{eq: gausspdfderiv}\eqref{eq: gausscdfderiv}}}{=} &\frac{-t\phi(t)\Phi(-t)+\phi(t)\phi(t)}{\Phi(-t)^2}
    \\
    = &\frac{\phi(t)}{\Phi(-t)}\left[\frac{\phi(t)}{\Phi(-t)}-t\right]
    \\
    = &\lambda(t)\left[\lambda(t) - t\right] .
    \label{eq: lambdaderiv}
\end{aligned}
\end{equation}
Note the following equalities:
\[
\E[s \mid s \ge t] = \frac{\int^{\infty}_{t}{s\phi(s)\,ds}}{\int^{\infty}_{t}{\phi(s)\,ds}} \overset{\text{\eqref{eq: gausspdfderiv}\eqref{eq: gausscdfderiv}\eqref{eq: gausscdfsymm}}}{=} \frac{\phi(t)}{\Phi(-t)} = \lambda(t) .
\]
\begin{align*}
\E[s^2 \mid s \ge t] &= \frac{\int^{\infty}_{t}{s^2\phi(s)\,ds}}{\int^{\infty}_{t}{\phi(s)\,ds}} = \frac{\int^{\infty}_{t}{s^2\phi(s)\,ds}}{\Phi(-t)}
\\
&= \frac{\left. -s\phi(s) \right|^{\infty}_t + \int^{\infty}_t{\phi(s)\,ds}}{\Phi(-t)}
\\
&= \frac{t\phi(t) + \Phi(-t)}{\Phi(-t)}
\\
&= t\lambda(t) + 1 ,
\end{align*}
where we have used integration by parts. Further,
\begin{align*}
    \text{Var}(s \mid s \ge t) &= \E[s^2 \mid s \ge t] - \E[s \mid s \ge t]^2
    \\
    &= t\lambda(t) + 1 - \lambda(t)^2 > 0 ,
\end{align*}
since $\text{Var}(s \mid s \ge t) > 0$. The above inequality leads to
\[
\lambda(t)^2 - t\lambda(t) \overset{\text{\eqref{eq: lambdaderiv}}}{=} \lambda'(t) < 1 .
\]
Therefore, using \eqref{eq: intdmills},
\begin{align*}
    &\phi(t)\left[\frac{1}{\Phi(-t)}-\frac{1}{\Phi(t)}\right] = \lambda(t)-\lambda(-t)
    \\
    &= \int^{t}_{-t}{\lambda'(x)\,dx} < \int^{t}_{-t}{1} = 2t \qquad \text{for}\ t>0.
    \label{eq: intdmillsineq}
\end{align*}
Consequently, \eqref{eq: dloghless} holds and $\left.\frac{\mathrm{d}\log{h(t)}}{\mathrm{d}t} \right|_{t > 0} < 0$ holds. We conclude that $\arg\max_t{h(t)} = 0$.

Next, we show that the half-space splits are optimal among all splits. We continue to assume a large number of measurements and generalize the previous derivation to arbitrary splits:
\begin{align*}
    \frac{n_1}{n} &\approx \int_{G_1}{\phi(u)\,\dd^{d}u} = \Prob(G_1) = p ,
    \\
    \frac{n_2}{n} &\approx 1 - p ,
\end{align*}
\begin{align*}
    \bar{u}_1 &\approx \E[u\mid u \in G_1]
    = \frac{\int_{G_1}{u\phi(u)\,\dd^du}}{\int_{G_1}{\phi(u)\,\dd^du}}
    = \frac{\int_{G_1}{u\phi(u)\,\dd^du}}{p} ,
    \\
    \bar{u}_2 &\approx \E[u \mid u \in G_2] = \frac{\int_{G_2}{u\phi(u)\,\dd^du}}{\int_{G_2}{\phi(u)\,\dd^du}}
    = \frac{\int_{G_2}{u\phi(u)\,\dd^du}}{1 - p} .
\end{align*}
By the law of total expectation,
\begin{align*}
    &p\E[u\mid u \in G_1] + (1 - p)\E[u \mid u \in G_2]
    \\
    &= \int_{\realspace^d}{u\phi(u)\,\dd^du} = \E[u] = 0 .
\end{align*}
Using the equalities above, the objective function can be written as
\begin{equation*}
\begin{aligned}
    &\max_{G_1}{\frac{n_1n_2}{n^2}\|\bar{u}_1-\bar{u}_2\|^2}
    \\
    \approx &\max_{G_1}{p(1 - p)\left\|\E[u\mid u \in G_1] - \frac{p}{p - 1}\E[u\mid u \in G_1]\right\|^2}
    \\
    = &\max_{G_1}{\frac{p}{1 - p}\left\|\E[u\mid u \in G_1]\right\|^2}
    % \\
    % \overset{\text{\eqref{eq: expuexps}}}{=} &\max_{G_1}{\frac{p}{1 - p}\E[s\mid s \in G_1]^2}
    \\
    = &\max_p{\max_{G_1: \Prob(G_1) = p}{\frac{p}{1 - p}\left\|\E[u \mid u \in G_1]\right\|^2}} ,
\end{aligned}
\end{equation*}
where the last equality decomposes the maximization into two steps and makes the following observation clearer: for a fixed $p$, we need to maximize $\left\|\E[u \mid u \in G_1]\right\|$ in order to maximize the objective function. Let $v$ be:
\begin{equation}
v = \frac{\E[u \mid u \in G_1]}{\left\|\E[u \mid u \in G_1]\right\|} .
% v depends on G_1 but I think it fine (1) This is finding a convenient representation to write the norm. An alternative representation should not change the result. (2) We can start with some G_1 and find v for this G_1. Now for this v, we can maximize by moving the probability mass such that s >= t. This process makes sense. (3) The underlying trickiness does not come from defining v this way but the optimal solution is not unique - any half-space split (along any unit vector direction) can be the optimal solution. Defining v this way makes the derivation easier, though we are restricted to a particular direction.
\label{eq: vdef}
\end{equation}
Substituting in $v$ yields:
\begin{equation*}
\begin{aligned}
    &\max_{G_1: \Prob(G_1) = p}{p\left\|\E[u \mid u \in G_1]\right\|}
    \\
    &= \max_{G_1: \Prob(G_1) = p}{p\left(v\tran \E[u \mid u \in G_1]\right)}
    \\
    &= \max_{G_1: \Prob(G_1) = p}{\int_{G_1}{v\tran u\phi(u)\,\dd^du}}
    \\
    &\overset{s = v\tran u}{=} \max_{G_1: \Prob(G_1) = p}{\int_{G_1}{s\phi(u)\,\dd^du}} % bathtub principle (layer-cake)
    \\
    % &\overset{}{=} \max_{G_1: \Prob(G_1) = p}{\int_{z: (s,z) \in G_1}{\left[\int_{s:(s,z) \in G_1}s\phi(s)\,\dd s\right]}{\phi(z)\,\dd^{d-1}z}}
    &\overset{}{=} \max_{G_1: \Prob(G_1) = p}{\int_{s:(s,z) \in G_1}\left[\int_{z: (s,z) \in G_1}{\phi(z)\,\dd^{d-1}z}\right]s\phi(s)\,\dd s}
    \\
    % &= \int_{\realspace^{d-1}}{\left[\max_{G_1: \Prob(G_1) = p}{\int_{s:(s,z) \in G_1}s\phi(s)\,\dd s}\right]\phi(z)\,\dd^{d-1}z}
    &\overset{}{=} \max_{G_1: \Prob(G_1) = p}{\int_{s:(s,z) \in G_1}s\left[\int_{\realspace^{d - 1}}{\phi(z)\,\dd^{d-1}z}\right]\phi(s)\,\dd s} ,
\end{aligned}
\end{equation*}
where the penultimate equality decomposes the integral into the coordinate along $v$ and the remaining $d - 1$ orthogonal coordinates, using the isotropic Gaussian assumption for $u$ and the resulting independence between coordinates. The last equality considers the following maximization process: starting from the largest possible value of $s$, we include all points in the orthogonal coordinates to assign as much probability mass as possible to each value of $s$ before including points with smaller values of $s$. The process continues until the total probability mass equals $p$.
% In the last equality, we essentially split into two steps, (i) maximizing the inner integral in the $v$ coordinate, and (ii) placing all possible probability mass from all the other coordinates on the maximized inner integral. For a scalar standard Gaussian random variable $s$, the inner integral is maximized by taking the largest values of $s$.
Therefore, 
\begin{align}
    &\underset{G_1: \Prob(G_1) = p}{\arg\max}{\left\|\E[u \mid u \in G_1]\right\|} \nonumber
    \\
    = &\underset{G_1: \Prob(G_1) = p}{\arg\max}{\int_{\{s \mid \forall z \in \realspace^{d - 1},\,(s,z) \in G_1\}}s\phi(s)\,\dd s} \label{eq: scalaroptimg}
    \\
    = &\{u : v\tran u = s \ge t\} , \label{eq: halfspacesplit}
\end{align}
where $t$ is selected such that $\Prob(G_1) = p$. Note that \eqref{eq: scalaroptimg} is essentially a one-dimensional problem, so it is obvious that we need to include the largest possible values of $s$ to maximize the objective, leading to $s \ge t$ as the solution. It is also worthwhile to mention that $v$ is defined in \eqref{eq: vdef} as a way to rewrite $\left\|\E[u \mid u \in G_1]\right\|$ for the derivation. Because of the isotropic Gaussian assumption, $v$ may point in any direction. In other words, the solution is not unique. Any half-space split satisfying \eqref{eq: halfspacesplit} and $\Prob(G_1) = p$ is optimal.

\end{document}